\def\defemb#1#2{\expandafter\def\csname #1\endcsname
  {\relax\ifmmode #2\else\hbox{$#2$}\fi}}
\newenvironment{proof}{\noindent {\bf Proof:}}%
{\hfill \rule[0.3ex]{1ex}{1ex} \par \addvspace{\bigskipamount}}
{\hfill \rule[0.3ex]{1ex}{1ex} \par \addvspace{\bigskipamount}}
\newtheorem{theorem}{Theorem}
 \newtheorem{proposition}[theorem]{Proposition}
 \newtheorem{lemma}[theorem]{Lemma}
 \newtheorem{corollary}[theorem]{Corollary}
{\theorembodyfont{\rmfamily}\newtheorem{example}{Example}}
\def\reals{{\mathbb R}}
\newcommand{\nnreals}{\reals^{0+}}
\newcommand{\sharpp}{\#P}
\newcommand{\ff}{FF}
\newcommand{\cff}{Conformant-FF}
\newcommand{\pff}{Probabilistic-FF}
\newcommand{\pond}{POND}
\newcommand{\buildprpg}{${\mathsf{build\mbox{-}PRPG}}$}
\newcommand{\buildts}{${\mathsf{build\mbox{-}timestep}}$}
\newcommand{\buildwil}{${\mathsf{build\mbox{-}w\mbox{-}impleafs}}$}
\newcommand{\getp}{${\mathsf{get\mbox{-}P}}$}
\newcommand{\extractprp}{${\mathsf{extract\mbox{-}PRPlan}}$}
\newcommand{\reduceimp}{${\mathsf{reduce\mbox{-}implication\mbox{-}graph}}$}
\newcommand{\iterativesg}{${\mathsf{extract\mbox{-}subplan}}$}
\newcommand{\constructsupport}{${\mathsf{construct\mbox{-}support\mbox{-}graph}}$}
\newcommand{\subgoal}{${\mathsf{sub\mbox{-}goal}}$}
\newcommand{\strips}{STRIPS}
\newcommand{\tstamp}[1]{_{(#1)}}
\newcommand{\tind}[1]{^{#1}}
\newcommand{\cpt}[1]{T_{#1}}
\newcommand{\pre}{pre}
\newcommand{\effs}{E}
\newcommand{\poutcomeset}{\Lambda}
\newcommand{\poutcome}{\varepsilon}
\newcommand{\probe}{Pr}
\newcommand{\con}{con}
\newcommand{\add}{add}
\newcommand{\del}{del}
\newcommand{\actions}{A}
\newcommand{\goal}{G}
\newcommand{\initial}{I}
\newcommand{\initwstate}{w_{\initial}}
\newcommand{\cinitial}{\ourf(\initBN)}
\newcommand{\plan}{\overline{a}}
\newcommand{\relplanbig}{\plan^{R}}
\newcommand{\relplan}{\plan^{r}}
\newcommand{\implication}{\rightarrow}
\newcommand{\onecondrel}{|^{+}_{1}}
\newcommand{\noop}{{\mathrm{noop}}}
\newcommand{\newchancep}[1]{#1}
\newcommand{\leafs}[1]{\mbox{\it leafs}(#1)}
\newcommand{\grapheffects}[1]{\mbox{\it E}(#1)}
\newcommand{\support}{\mbox{\sl support}}
\newcommand{\weight}{\varpi}
\newcommand{\flbweight}{\alpha}
\newcommand{\subweight}{\weight}
\newcommand{\ourf}{\phi}
\newcommand{\cnf}[1]{\phi(#1)}
\newcommand{\seqactions}{\plan}
\newcommand{\initprob}{\initbelief}
\newcommand{\goalprob}{\theta}
\newcommand{\fluents}{\cP}
\newcommand{\worldstates}{W}
\newcommand{\initbelief}{\belief_{I}}
\newcommand{\belief}{b}
\newcommand{\applying}[2]{\left[#1,#2\right]}
\newcommand{\BN}{\cN}
\newcommand{\BNvars}{\cX}
\newcommand{\bnassign}{\vartheta}
\newcommand{\initBN}{\BN_{\initbelief}}
\newcommand{\basicWMC}{{\mathsf{basic\mbox{-}WMC}}}
\newcommand{\argmin}{\operatornamewithlimits{argmin}}
\newcommand{\commentout}[1]{}
\ShortHeadings{\pff}{Domshlak \& Hoffmann}
\begin{document}

\title{Probabilistic Planning via Heuristic Forward Search\\ and Weighted Model Counting}

\author{\name Carmel Domshlak
\email dcarmel@ie.technion.ac.il\\
\addr Technion - Israel Institute of Technology,\\
Haifa, Israel\\
\name J\"org Hoffmann
  \email Joerg.Hoffmann@deri.at\\
  \addr University of Innsbruck, DERI,\\
  Innsbruck, Austria}

\maketitle

\begin{abstract}
  We present a new algorithm for probabilistic planning with no
  observability.  Our algorithm, called \pff, extends the heuristic
  forward-search machinery of \cff\ to problems with probabilistic
  uncertainty about both the initial state and action effects.
  Specifically, \pff\ combines \cff's techniques with a powerful
  machinery for weighted model counting in (weighted) CNFs, serving to
  elegantly define both the search space and the heuristic function.
  Our evaluation of \pff\ shows its fine scalability in a range of
  probabilistic domains, constituting a several orders of magnitude
  improvement over previous results in this area. We use a problematic
  case to point out the main open issue to be addressed by further
  research.
\end{abstract}

\section{Introduction}
\label{intro}

In this paper we address the problem of {\em probabilistic planning
  with no observability}~\cite{Buridan},
  also known in the AI planning community as conditional~\cite{Zander}
  or conformant~\cite{CPP2} probabilistic planning.
  In such problems we are given an initial belief state in the form of
  a probability distribution over the world states $W$, a set of
  actions (possibly) having probabilistic effects, and a set of
  alternative goal states $\worldstates_{G} \subseteq \worldstates$. A
  solution to such a problem is a single sequence of actions that
  transforms the system into one of the goal states with probability
  exceeding a given threshold $\goalprob$.  The basic assumption of
  the problem is that the system cannot be observed at the time of
  plan execution. Such a setting is useful in controlling systems with
  uncertain initial state and non-deterministic actions, if sensing is
  expensive or unreliable.  Non-probabilistic conformant planning may
  fail due to non-existence of a plan that achieves the goals with
  100\% certainty.  Even if there is such a plan, that plan does not
  necessarily contain information about what actions are most useful
  to achieve only the requested threshold $\goalprob$.

%

The state-of-the-art performance of probabilistic planners has been
advancing much more slowly than that of deterministic planners,
scaling from 5-10 step plans for problems with $\approx$20 world
states to 15-20 step plans for problems with $\approx$100 world
states~\cite{Buridan,Maxplan,CPP2}.  Since probabilistic planning is
inherently harder than its deterministic
counterpart~\cite{pp-complexity}, such a difference in evolution rates
is by itself not surprising. However, recent developments in the
area~\cite{probapop,pond:icaps-06,huang:icaps-06}, and in particular
our work here, show that dramatic improvements in probabilistic
planning {\em can} be obtained.

In this paper we introduce \pff, a new probabilistic planner based on
{\em heuristic forward search} in the space of {\em implicitly
represented} probabilistic belief states. The planner is a natural
extension of the recent (non-probabilistic) conformant planner
\cff~\cite{cff}. The main trick is to replace \cff's SAT-based
techniques with a recent powerful technique for probabilistic
reasoning by weighted model counting (WMC) in propositional CNFs
\cite{sang:etal:aaai-05}. In more detail, \cff\ does a forward search
in a belief space in which each belief state corresponds to a set of
world states considered to be possible.  The main trick of \cff\ is
the use of CNF formulas for an implicit representation of belief
states.  Implicit, in this context, means that formulas
$\ourf(\seqactions)$ encode the semantics of executing action sequence
$\seqactions$ in the initial belief state, with propositional
variables corresponding to facts with time-stamps.  Any actual
knowledge about the belief states has to be (and can be) inferred from
these formulas. Most particularly, a fact $p$ is known to be true in a
belief state if and only if $\ourf(\seqactions) \implication p(m)$, where $m$ is
the time endpoint of the formula. The only knowledge computed by \cff\
about belief states are these {\em known facts}, as well as
(symmetrically) the facts that are known to be false. This suffices to
do \strips-style planning, that is, to determine applicable actions and
goal belief states. In the heuristic function, FF's
\cite{hoffmann:nebel:jair-01} relaxed planning graph technique is
enriched with approximate SAT reasoning.

The basic ideas underlying \pff\ are: 
\begin{enumerate}[(i)]
\item Define time-stamped Bayesian networks (BNs) describing
probabilistic belief states.
\item Extend \cff's belief state CNFs to model these BNs.
\item In addition to the SAT reasoning used by \cff, use weighted
model-counting to determine whether the probability of the (unknown)
goals in a belief state is high enough.
\item Introduce approximate probabilistic reasoning into \cff's
heuristic function.
\end{enumerate}
Note the synergetic effect: \pff\ re-uses all of \cff's technology to
recognize facts that are true or false with probability $1$. This
fully serves to determine applicable actions, as well as detect
whether part of the goal is already known. In fact, it is as if \cff's
CNF-based techniques were specifically made to suit the probabilistic
setting: while without probabilities one could imagine successfully
replacing the CNFs with BDDs, {\em with} probabilities this seems much more problematic.

The algorithms we present cover probabilistic initial belief states
given as Bayesian networks, deterministic and probabilistic actions,
conditional effects, and standard action preconditions. Our
experiments show that our approach is quite effective in a range of
domains. In contrast to the SAT and CSP based approaches mentioned
above~\cite{Maxplan,CPP2}, \pff\ can find {\em 100-step plans for problem instances with billions of world states}. 
However, such a comparison is not entirely fair due
to the different nature of the results provided; the SAT and CSP based
approaches provide guarantees on the length of the solution. The
approach most closely related to \pff\ is implemented in \pond\
\cite{pond:icaps-06}: this system, like \pff, does conformant
probabilistic planning for a threshold $\goalprob$, using a
non-admissible, planning-graph based heuristic to guide the search. Hence a comparison between \pff\ and \pond\ is fair, and in our experiments we perform a comparative evaluation of \pff\ and \pond. While the two approaches are related, there are significant differences in the search space
representation, as well as in the definition and computation of the heuristic
function.\footnote{\pond\ does not use implicit belief states, and the
probabilistic part of its heuristic function uses sampling techniques,
rather than the probabilistic reasoning techniques we employ.} We run
the two approaches on a range of domains partly taken from the
probabilistic planning literature, partly obtained by enriching
conformant benchmarks with probabilities, and partly obtained by
enriching classical benchmarks with probabilistic uncertainty. In
almost all cases, \cff\ outperforms \pond\ by at least an order of
magnitude. We make some interesting observations regarding the
behavior of the two planners; in particular we identify a domain --
derived from the classical Logistics domain -- where both approaches
fail to scale. The apparent reason is that neither approach is good
enough at detecting how many times, at an early point in the plan, a
probabilistic action must be applied in order to sufficiently support
a high goal threshold at the end of the plan. Devising methods that
are better in this regard is the most pressing open issue in this line
of work.

The paper is structured as follows. The next section provides the
technical background, formally defining the problem we address and
illustrating it with our running example. Section~\ref{ss:primal}
details how probabilistic belief states are represented as
time-stamped Bayesian networks, how these Bayesian networks are encoded as weighted CNF
formulas, and how the necessary reasoning is performed on this
representation. Section~\ref{ss:heuristic} explains and illustrates our extension of
\cff's heuristic function to the probabilistic
settings. Section~\ref{ss:results} provides the empirical results, and
Section~\ref{ss:conclusion} concludes. 
All proofs are moved into Appendix~\ref{ss:proofs}.

\section{Background}
\label{prob-planning}

The probabilistic planning framework we consider adds probabilistic
uncertainty to a subset of the classical ADL language, namely
(sequential) STRIPS with conditional effects.  Such STRIPS planning
tasks are described over a set of propositions $\fluents$ as triples
$(\actions, \initial, \goal)$, corresponding to the {\em action set},
{\em initial world state}, and {\em goals}. $\initial$ and $\goal$ are
sets of propositions, where $I$ describes a concrete initial state
$\initwstate$, while $\goal$ describes the set of goal states $w
\supseteq G$.  Actions $a$ are pairs $(\pre(a), \effs(a))$ of the {\em
precondition} and the {\em (conditional) effects}.  A conditional
effect $e$ is a triple $(\con(e), \add(e), \del(e))$ of (possibly
empty) proposition sets, corresponding to the effect's {\em
condition}, {\em add}, and {\em delete} lists, respectively. The
precondition $\pre(a)$ is also a proposition set, and an action $a$ is
{\em applicable} in a world state $w$ if $w \supseteq \pre(a)$. If $a$
is not applicable in $w$, then the result of applying $a$ to $w$ is
undefined. If $a$ is applicable in $w$, then all conditional effects
$e \in \effs(a)$ with $w \supseteq \con(e)$ occur.  Occurrence of a
conditional effect $e$ in $w$ results in the world state $w \cup
\add(e) \setminus \del(e)$.

If an action $a$ is applied to $w$, and there is a proposition $q$
  such that $q \in \add(e)\cap \del(e')$ for (possibly the same)
  occurring $e,e' \in \effs(a)$, then the result of applying $a$ in
  $w$ is undefined. Thus, we require the actions to be not
  self-contradictory, that is, for each $a \in \actions$, and every
  $e,e' \in \effs(a)$, if there exists a world state $w \supseteq
  \con(e) \cup \con(e')$, then $\add(e)\cap \del(e') = \emptyset$.
  Finally, an action sequence $\seqactions$ is a {\em plan} if the
  world state that results from iterative execution of $\seqactions$'s
  actions, starting in $\initwstate$, leads to a {\em goal state} $w
  \supseteq \goal$.

\subsection{Probabilistic Planning}

Our probabilistic planning setting extends the above with (i)
probabilistic uncertainty about the initial state, and (ii) actions
that can have probabilistic effects.  In general, probabilistic
planning tasks are quadruples $(\actions,\initprob,\goal,\goalprob)$,
corresponding to the {\em action set}, {\em initial belief state},
{\em goals}, and {\em acceptable goal satisfaction probability}.  As
before, $\goal$ is a set of propositions. The initial state is no
longer assumed to be known precisely.  Instead, we are given a
probability distribution over the world states, $\initprob$, where
$\initprob(w)$ describes the likelihood of $w$ being the initial world
state.

Similarly to classical planning, actions $a \in \actions$ are pairs
$(\pre(a), \effs(a))$, but the effect set $\effs(a)$ for such $a$ has
richer structure and semantics. Each $e \in \effs (a)$ is a pair
$(\con(e), \poutcomeset(e))$ of a propositional condition and a set of
{\em probabilistic outcomes}.  Each probabilistic outcome $\poutcome
\in \poutcomeset(e)$ is a triplet $(\probe(\poutcome),
\add(\poutcome),\del(\poutcome))$, where {\em add} and {\em delete}
lists are as before, and $\probe(\poutcome)$ is the probability that
outcome $\poutcome$ occurs as a result of effect $e$. Naturally, we
require that probabilistic effects define probability distributions
over their outcomes, that is,
$\sum_{\poutcome\in\poutcomeset(e)}{\probe(\poutcome)} = 1$.  The
special case of deterministic effects $e$ is modeled this way via
$\poutcomeset(e) = \{\poutcome\}$ and $\probe(\poutcome)=1$.
Unconditional
actions are modeled as having a single effect $e$ with $\con(e) =
\emptyset$. As before, if $a$ is not applicable in $w$, then the
result of applying $a$ to $w$ is undefined.  Otherwise, if $a$ is
applicable in $w$, then there exists exactly one effect $e \in
\effs(a)$ such that $\con(e) \subseteq w$, and for each $\poutcome \in
\poutcomeset(e)$, applying $a$ to $w$ results in $w \cup
\add(\poutcome) \setminus \del(\poutcome)$ with probability
$\probe(\poutcome)$.  The likelihood $\applying{b}{a}(w')$ of a world
state $w'$ in the belief state $\applying{b}{a}$, resulting from
applying a probabilistic action $a$ in $b$, is given by
\begin{equation}
\label{e:bpa}
    \applying{b}{a}(w') =  
    \sum_{w \supseteq \pre(a)}{
    		\belief(w)
		\sum_{\poutcome \in \poutcomeset(e)}{
		  \probe(\poutcome)
		    \cdot
		    \delta\left( w' = w \cup s \setminus s', 
		                 s \subseteq \add(\poutcome),
		                 s' \subseteq \del(\poutcome)
		                 \right)
		}
    },
\end{equation}
where $e$ is the effect of $a$ such that $\con(e) \subseteq w$, and
$\delta(\cdot)$ is the Kronecker step function that takes the value
$1$ if the argument predicate evaluates to TRUE, and $0$ otherwise.

Our formalism covers all the problem-description features supported by
the previously proposed formalisms for conformant probabilistic
planning
\cite{Buridan,Maxplan,CPP2,probapop,pond:icaps-06,huang:icaps-06}, and
it corresponds to what is called Unary Nondeterminism (1ND) normal
form~\cite{rintanen:icaps-06}. We note that there are more succinct
forms for specifying probabilistic planning
problems~\cite{rintanen:icaps-06}, yet 1ND normal form appears to be
most intuitive from the perspective of knowledge engineering.

\begin{example} 
\label{example1} 
Say we have a robot and a block that physically can be at one of two
locations. This information is captured by the propositions
$r_{1},r_{2}$ for the robot, and $b_{1},b_{2}$ for the block,
respectively. The robot can either move from one location to another,
or do it while carrying the block. If the robot moves without the
block, then its move is guaranteed to succeed. This provides us with a
pair of symmetrically defined deterministic actions
$\{move\mbox{-}right, move\mbox{-}left\}$. The action
$move\mbox{-}right$ has an empty precondition, and a single
conditional effect $e = (\{r_{1}\}, \{\poutcome\})$ with
$\probe(\poutcome)=1$, $\add(\poutcome) = \{r_{2}\}$, and
$\del(\poutcome) = \{r_{1}\}$. 
If the robot
tries to move while carrying the block, then this move succeeds with
probability $0.7$, while with probability $0.2$ the robot ends up
moving without the block, and with probability $0.1$ this move of the
robot fails completely. This provides us with a pair of (again,
symmetrically defined) probabilistic actions
$\{move\mbox{-}b\mbox{-}right, move\mbox{-}b\mbox{-}left\}$. The
action $move\mbox{-}b\mbox{-}right$ has an empty precondition, and two
conditional effects specified as in Table~\ref{table:example1}.

\begin{table}[ht]
\begin{center}
  \setlength{\extrarowheight}{2pt}	
  \begin{tabular}{|c|c|c|c|c|c|}
  \hline
   $\!\effs(a)\!$ & $\con(e)$ & $\poutcomeset(e)$ & $\probe(\poutcome) $ & $\add(\poutcome)$ & $\del(\poutcome)$ \\
  \hline
  \hline
  & & $\poutcome_{1}$ & 0.7 & $\{r_{2},b_{2}\}$ & $\{r_{1},b_{1}\}$\\
  \cline{3-6}
  $e$ & $\!r_{1}\wedge b_{1}\!$ & $\poutcome_{2}$ & 0.2 & $\{r_{2}\}$ & $\{r_{1}\}$\\
  \cline{3-6}
  & & $\poutcome_{3}$ & 0.1 & $\emptyset$ & $\emptyset$\\
  \hline
  $e'$ & $\!\neg r_{1}\vee \neg b_{1}\!$ & $\poutcome_{1}'$ &  1.0 & $\emptyset$ & $\emptyset$\\
  \hline
  \end{tabular}
\end{center}
\caption{\label{table:example1} Possible effects and outcomes of the action $move\mbox{-}b\mbox{-}right$ in Example~\ref{example1}.}
\end{table}

\end{example}

Having specified the semantics and structure of all the components of
$(\actions,\initprob,\goal,\goalprob)$ but $\goalprob$, we are now
ready to specify the actual task of probabilistic planning in our
setting.
Recall that our actions transform probabilistic belief states to
belief states. For any action sequence $\plan \in \actions^{*}$, and
any belief state $\belief$, the new belief state
$\applying{\belief}{\plan}$ resulting from applying $\plan$ at
$\belief$ is given by
\begin{equation}
\label{e:bas}
\applying{\belief}{\plan} = \begin{cases}
\belief, & \plan = \langle\rangle\\
\applying{\belief}{a}, & \plan = \langle a \rangle, a\in\actions\\
\applying{\applying{\belief}{a}}{\plan'}, & \plan = \langle a \rangle\cdot\plan', a\in\actions,\plan'\neq\emptyset\\
\end{cases}.
\end{equation}
In such setting, achieving $G$ with certainty is typically
unrealistic. Hence, $\goalprob$ specifies the required {\em lower
bound} on the probability of achieving $G$. A sequence of actions
$\seqactions$ is called a {\em plan} if we have
$\belief_{\seqactions}(G) \geq \goalprob$ for the belief state
$\belief_{\seqactions} = \applying{\initbelief}{\plan}$.

\subsection{Specifying the Initial Belief State}

Considering the initial belief state, practical considerations force
us to limit our attention only to compactly representable probability
distributions $\initprob$.  While there are numerous alternatives for
compact representation of structured probability distributions, Bayes
networks (BNs)~\cite{Pearl} to date is by far the most popular such
representation model.\footnote{While BNs are our choice here, our
framework can support other models as well, e.g.\ stochastic decision
trees.} Therefore, in \pff\ we assume that the initial belief state
$\initprob$ is described by a BN $\initBN$ over our set of
propositions $\fluents$.

As excellent introductions to BNs abound~\cite<e.g., see>{Jensen:bn},
it suffices here to briefly define our notation. A BN $\BN = (\cG,
\cT)$ represents a probability distribution as a directed acyclic
graph $\cG$, where its set of nodes $\BNvars$ stands for random
variables (assumed discrete in this paper), and $\cT$, a set of tables
of conditional probabilities (CPTs)---one table $\cpt{X}$ for each
node $X\in \BNvars$. For each possible value $x\in Dom(X)$ (where
$Dom(X)$ denotes the domain of $X$), the table $\cpt{X}$ lists the
probability of the event $X=x$ given each possible value assignment to
all of its immediate ancestors (parents) $Pa(X)$ in $\cG$. Thus, the
table size is exponential in the in-degree of $X$.  Usually, it is
assumed either that this in-degree is small~\cite{Pearl}, or that the
probabilistic dependence of $X$ on $Pa(X)$ induces a significant local
structure allowing a compact representation of
$\cpt{X}$~\cite{shimony:irrel:93,shimony:irrel:95,boutiliercontextspecific}.  (Otherwise, representation
of the distribution as a BN would not be a good idea in the first
place.)  The joint probability of a complete assignment $\bnassign$ to
the variables $\BNvars$ is given by the product of $|\BNvars|$ terms
taken from the respective CPTs~\cite{Pearl}:
\[
Pr(\bnassign) = \prod_{X\in \BNvars} Pr\left(\bnassign[X]\mid\bnassign[Pa (X)]\right) = \prod_{X\in \BNvars} \cpt{X} \left(\bnassign[X] \mid \bnassign[Pa (X)]\right),
\]
where $\bnassign[\cdot]$ stands for the partial assignment provided by
$\bnassign$ to the corresponding subset of $\BNvars$.



In \pff\ we allow $\initBN$ to be described over the multi-valued
variables underlying the planning problem. 
This significantly simplifies the
process of specifying $\initBN$ since the STRIPS propositions
$\fluents$ do not correspond to the true random variables underlying
problem specification.\footnote{Specifying $\initBN$ directly over
$\fluents$ would require identifying the multi-valued variables
anyway, followed by connecting all the propositions corresponding to a
multi-valued variable by a complete DAG, and then normalizing the CPTs
of these propositions in a certain manner.}  Specifically, let
$\bigcup_{i=1}^{k}\fluents_{i}$ be a partition of $\fluents$ such that
each proposition set $\fluents_{i}$ uniquely corresponds to the domain
of a multi-valued variable underlying our problem.  That is, for every
world state $w$ and every $\fluents_{i}$, if $|\fluents_{i}| > 1$,
then there is {\em exactly one} proposition $q \in \fluents_{i}$ that
holds in $w$.  The variables of the BN $\initBN$ describing our
initial belief state $\initprob$ are $\BNvars =
\{X_{1},\ldots,X_{k}\}$, where $Dom(X_{i}) = \fluents_{i}$ if
$|\fluents_{i}| > 1$, and $Dom(X_{i}) = \{q,\neg q\}$ if $\fluents_{i} =
\{q\}$.

\begin{example} 
\label{example2}
For an illustration of such $\initBN$, consider our running example,
and say the robot is known to be initially at one of the two possible
locations with probability $Pr(r_{1}) = 0.9$ and $Pr(r_{2}) = 0.1$.
Suppose there is a correlation in our belief about the initial
locations of the robot and the block. We believe that, if the robot is
at $r_{1}$, then $Pr(b_{1}) = 0.7$ (and $Pr(b_{2}) = 0.3$), while if
the robot is at $r_{2}$, then $Pr(b_{1}) = 0.2$ (and $Pr(b_{2}) =
0.8$). The initial belief state BN $\initBN$ 
is then defined over two variables $R$ (``robot'') and $B$
(``block'') with $Dom(R) = \{r_{1},r_{2}\}$ and $Dom(B) =
\{b_{1},b_{2}\}$, respectively, and it is depicted in
Figure~\ref{fig:bn-init}.

\begin{figure}[ht]
\begin{center}
\begin{tabular}{ccc}
\begin{tabular}{|cc|}
\hline
 $r_{1}$ & $r_{2}$\\
 \hline
 $0.9$ & $0.1$\\
\hline
\end{tabular}
&
\begin{minipage}{0.5in}
  \def\objectsizestyle{\scriptstyle}
      \xymatrix @R=35pt@C=25pt{
        *++[o][F-,]{R} \ar[r] &
        *++[o][F-,]{B}
        }
\end{minipage}
&
\begin{tabular}{|c|cc|}
\hline
 & $b_{1}$ & $b_{2}$\\
 \hline
 $r_{1}$ & $0.7$ & $0.3$\\
 \hline
 $r_{2}$ & 0.2 & 0.8\\
\hline
\end{tabular}
\end{tabular}
\end{center}
\caption{Bayes network $\initBN$ for Example~\ref{example1}.}
\label{fig:bn-init}
\end{figure}

\end{example}

It is not hard to see that our STRIPS-style actions $a \in \actions$
can be equivalently specified in terms of the multi-valued variables
$\BNvars$. Specifically, if $|\fluents_{i}| > 1$, then no action $a$
can add a proposition $q \in \fluents_{i}$ without deleting some other
proposition $q' \in \fluents_{i}$, and thus, we can
consider $a$ as setting $X_{i} = q$. If $|\fluents_{i}| = 1$, then
adding and deleting $q \in \fluents_{i}$ has the standard semantics of
setting $X_{i} = q$ and $X_{i} = \neg q$, respectively.  For
simplicity of presentation, we assume that our actions are not
self-contradictory at the level of $\BNvars$ as well---if two
conditional effects $e,e' \in \effs(a)$ can possibly occur in some
world state $w$, then the subsets of $\BNvars$ affected by these two
effects have to be disjoint. Finally, our goal $\goal$ directly corresponds to a partial assignment to $\BNvars$ (unless our $\goal$ is self-contradictory,
requiring $q\wedge q'$ for some $q,q' \in \fluents_{i}$.)

\section{Belief States}
\label{ss:primal}

In this section, we explain our representation of, and reasoning
about, belief states. We first explain how probabilistic belief states
are represented as time-stamped BNs, then we explain how those BNs are
encoded and reasoned about in the form of weighted CNF formulas. This representation of belief states by weighted CNFs is then illustrated on the belief state from our running example in Figure~\ref{fig:example2}. 
We finally provide the details about how this works in \pff.

\subsection{Bayesian Networks}
\label{ss:primal1}

\pff\ performs a forward search in a space of belief states. The search states are belief states (that is, probability distributions over the world states $w$), and the search is restricted to belief states reachable
from the initial belief state $\initprob$ through some sequences of
actions $\seqactions$.
A key decision one should make is the actual representation of the
belief states. Let $\initprob$ be our initial belief state captured by
the BN $\initBN$, and let $\belief_{\seqactions}$ be a belief state
resulting from applying to $\initprob$ a sequence of actions
$\seqactions$. One of the well-known problems in the area of
decision-theoretic planning is that the description of
$\belief_{\seqactions}$ directly over the state variables $\BNvars$ becomes less and less structured as the number of (especially stochastic) actions in $\seqactions$ increases.  To overcome this limitation, we represent
belief states $\belief_{\seqactions}$ as a BN
$\BN_{\belief_{\seqactions}}$ that {\em explicitly} captures the
sequential application of $\seqactions$ starting from $\initbelief$,
trading the representation size for the cost of inference, compared to representing belief states directly as distributions over world states. Below we formally specify the structure of such a BN
$\BN_{\belief_{\seqactions}}$, assuming that all the actions
$\seqactions$ are applicable in the corresponding belief states of
their application, and later showing that \pff\ makes sure this is
indeed the case. We note that these belief-state BNs are similar in
spirit and structure to those proposed in the AI literature for
verifying that a probabilistic plan achieves its goals with a certain
probability~\cite{dean:kanazawa:ci89,hanks:drew:aij94,Buridan}.

\begin{figure}[t]
\begin{center}
\begin{tabular}{c}
\begin{tabular}{cccc}
\begin{minipage}{1.5in}
{
\begin{tabular}{|@{\hspace{1pt}}c@{\hspace{3pt}}c@{\hspace{1pt}}|}
\hline
 { $r_{1}$} & { $r_{2}$}\\
 \hline
 { 0.9} & { 0.1}\\
\hline
\end{tabular}
}
\end{minipage}
& 
& 
\begin{minipage}{2in}
{ 
 \setlength{\extrarowheight}{1pt}
 \begin{tabular}{|@{\hspace{1pt}}c@{\hspace{1pt}}|@{\hspace{1pt}}c@{\hspace{1pt}}|c@{\hspace{3pt}}c@{\hspace{1pt}}|}
 \hline
  \multicolumn{2}{|c|}{}  & $r_{1}$ & $r_{2}$\\
 \hline
   \multicolumn{2}{|c|}{$\poutcome_{1} \vee \poutcome_{2}$} & 0 & 1\\
 \hline
 \multirow{2}{1cm}{$\poutcome_{3} \vee \poutcome_{1}'$} & $r_{1}$ & 1 & 0\\
 \cline{2-4}
 & $r_{2}$ & 0 & 1\\
 \hline
 \end{tabular}
%
}
\vspace{0.1cm}
\end{minipage}
&
\begin{minipage}{0.7in}
{
\begin{tabular}{|@{\hspace{1pt}}c@{\hspace{1pt}}|@{\hspace{1pt}}c@{\hspace{3pt}}c@{\hspace{1pt}}|}
\hline
 & $r_{1}$ & $r_{2}$\\
 \hline
 $r_{1}$ & 1 &  0 \\
 \hline
 $r_{2}$ & 1 &  0 \\
\hline
\end{tabular}
}
\end{minipage}
\\
\multicolumn{4}{l}{
\vspace{0.1cm}
  \def\objectsizestyle{\scriptstyle}
      \xymatrix @R=10pt@C=40pt{
        *+[F-,]{R\tstamp{0}} \ar[dd] \ar[rr] \ar[dr] & & *+[F-,]{R\tstamp{1}} \ar[rr] & & *+[F-,]{R\tstamp{2}}\\
        & *+[F-,]{Y\tstamp{1}} \ar[ur] \ar[dr] 
        & \mbox{
        \begin{minipage}{1in}
{
\begin{tabular}{|@{\hspace{1pt}}c|@{\hspace{3pt}}c@{\hspace{3pt}}c@{\hspace{3pt}}c@{\hspace{3pt}}c@{\hspace{1pt}}|}
\hline
 & $\poutcome_{1}$ &  $\poutcome_{2}$ & $\poutcome_{3}$ &  $\poutcome_{1}'$\\
 \hline
 $r_{1}\wedge b_{1}$ & 0.7 & 0.2 & 0.1 & 0\\
 \hline
 ${\mathrm{othrw}}$ & 0 & 0 & 0 & 1\\
\hline
\end{tabular}
}
\end{minipage}
        }\\
        *+[F-,]{B\tstamp{0}} \ar[rr] \ar[ur]& & *+[F-,]{B\tstamp{1}} \ar[rr] & & *+[F-,]{B\tstamp{2}}
        \\
        }
}
\\
\begin{minipage}{1.5in}
{
\begin{tabular}{|@{\hspace{1pt}}c@{\hspace{1pt}}|@{\hspace{1pt}}c@{\hspace{3pt}}c@{\hspace{1pt}}|}
\hline
 & $b_{1}$ & $b_{2}$\\
 \hline
 $r_{1}$ & $0.7$ & 0.3\\
 \hline
 $r_{2}$ & $0.2$ & 0.8\\
\hline
\end{tabular}
}
\end{minipage}
& \multicolumn{2}{c}{
\begin{minipage}{1.6in}
{ \setlength{\extrarowheight}{1pt}
 \begin{tabular}{|@{\hspace{1pt}}c@{\hspace{1pt}}|@{\hspace{1pt}}c@{\hspace{1pt}}|c@{\hspace{3pt}}c@{\hspace{1pt}}|}
 \hline
  \multicolumn{2}{|c|}{}  & $b_{1}$ & $b_{2}$\\
 \hline
   \multicolumn{2}{|c|}{$\poutcome_{1}$} & 0 & 1\\
 \hline
 \multirow{2}{0.5cm}{$\neg \poutcome_{1}$} & $b_{1}$ & 1 & 0\\
 \cline{2-4}
 & $b_{2}$ & 0 & 1\\
 \hline
 \end{tabular}
}
\end{minipage}
} 
& 
\begin{minipage}{0.7in}
{
\begin{tabular}{|@{\hspace{1pt}}c@{\hspace{1pt}}|@{\hspace{1pt}}c@{\hspace{3pt}}c@{\hspace{1pt}}|}
\hline
 & $b_{1}$ & $b_{2}$\\
 \hline
 $b_{1}$ & $1$ & $0$\\
 \hline
 $b_{2}$ & 0 & 1\\
\hline
\end{tabular}
}
\end{minipage}
\end{tabular}
\end{tabular}
\end{center}
\caption{\label{fig:example2} Bayes network $\BN_{\belief_{\seqactions}}$ for our running Example~\ref{example1}-\ref{example2} and action sequence\newline $\seqactions = \langle move\mbox{-}b\mbox{-}right,move\mbox{-}left\rangle$.}
\end{figure}

Figure~\ref{fig:example2} illustrates the construction of
$\BN_{\belief_{\seqactions}}$ for our running example with
$\seqactions = \langle move\mbox{-}b\mbox{-}right,$
$move\mbox{-}left\rangle$. In general, let $\seqactions = \langle
a\tind{1},\ldots,a\tind{m} \rangle$ be a sequence of actions, numbered
according to their appearance on $\seqactions$. For $0 \leq t \leq m$,
let $\BNvars\tstamp{t}$ be a replica of our state variables $\BNvars$,
with $X\tstamp{t} \in \BNvars\tstamp{t}$ corresponding to $X \in
\BNvars$. The variable set of $\BN_{\belief_{\seqactions}}$ is the
union of $\BNvars\tstamp{0},\ldots,\BNvars\tstamp{m}$, plus some
additional variables that we introduce for the actions in
$\seqactions$.

First, for each $X\tstamp{0} \in \BNvars\tstamp{0}$, we set the
parents $Pa(X\tstamp{0})$ and conditional probability tables
$\cpt{X\tstamp{0}}$ to simply copy these of the state variable $X$ in
$\initBN$.  Now, consider an action $a\tind{t}$ from $\seqactions$,
and let $a\tind{t} = a$. For each such action we introduce a discrete
variable $Y\tstamp{t}$ that ``mediates'' between the variable layers
$\BNvars\tstamp{t-1}$ and $\BNvars\tstamp{t}$. The domain of
$Y\tstamp{t}$ is set to $Dom(Y\tstamp{t}) =
\bigcup_{e\in\effs(a)}{\Lambda(e)}$, that is, to the union of
probabilistic outcomes of all possible effects of $a$.  The parents of
$Y\tstamp{t}$ in $\BN_{\belief_{\seqactions}}$ are set to
\begin{equation}
\label{e:paY}
Pa(Y\tstamp{t}) = \bigcup_{e\in \effs(a)}\left\{ X\tstamp{i-1} 
\mid \con(e) \cap Dom(X) \neq \emptyset \right\},
\end{equation}
and, for each $\pi \in Dom(Pa(Y\tstamp{t}))$, we set
\begin{equation}
\label{e:cpt1p}
  T_{Y(t)}(Y\tstamp{i} = \poutcome \mid \pi) = 
  \begin{cases}
  \probe(\poutcome), & \con\left( e(\poutcome) \right) \subseteq \pi\\
  0, & {\mathrm{otherwise}}
  \end{cases},
\end{equation}
where $e(\poutcome)$ denotes the effect $e$ of $a$ such that $\poutcome \in \poutcomeset(e)$.

We refer to the set of all such variables $Y\tstamp{t}$ created for the actions of $\seqactions$ as $\cY$. Now, let $\effs_{X}(a) \subseteq \effs(a)$ be the probabilistic effects of $a$ that affect a variable $X \in \BNvars$. 
If $\effs_{X}(a) = \emptyset$, then we set $Pa(X\tstamp{t}) = \{X\tstamp{t-1}\}$, and 
 \begin{equation}
  \label{e:cpt1d}
  T_{X(t)}(X\tstamp{t} = x \mid X\tstamp{t-1} = x') = 
  \begin{cases}
     1, & x = x',\\
     0, & {\mathrm{otherwise}}
  \end{cases}.
\end{equation}
Otherwise, if $\effs_{X}(a) \neq\emptyset$, let $x_{\poutcome} \in Dom(X)$ be the value provided to $X$ by $\poutcome \in \poutcomeset(e)$, $e \in \effs_{X}(a)$. 
Recall that the outcomes of effects $\effs(a)$ are {\em all} mutually exclusive. Hence, we set $Pa(X\tstamp{t}) = \{X\tstamp{t-1},Y\tstamp{t-1}\}$, and
 \begin{equation}
  \label{e:cpt2p}
  T_{X\tstamp{i}}(X\tstamp{i} = x \mid X\tstamp{i-1} = x',Y\tstamp{i-1} = \poutcome ) =
  \begin{cases}
     1, & e(\poutcome) \in E_{X}(a) \;\wedge\; x = x_{\poutcome},\\
     1, & e(\poutcome) \not\in E_{X}(a) \;\wedge\; x = x',\\
     0, & {\mathrm{otherwise}}
  \end{cases},
\end{equation}
where $e(\poutcome)$ denotes the effect responsible for the outcome $\poutcome$.

It is not hard to verify that Equations~\ref{e:cpt1p}-\ref{e:cpt2p} capture the frame axioms and probabilistic semantics of our actions. 
In principle, this accomplishes our construction of $\BN_{\belief_{\seqactions}}$ over the variables $\BNvars_{\belief_{\seqactions}} = \cY\bigcup_{t=0}^{m}{\BNvars\tstamp{t}}$. We note, however, that  the mediating variable $Y\tstamp{t}$ are really needed only for truly probabilistic actions. Specifically, if
$a\tind{t}$ is a deterministic action $a$, let $\effs_{X}(a) \subseteq \effs(a)$ be the conditional effects
of $a$ that add and/or delete propositions associated with the domain
of a variable $X \in \BNvars$. If $\effs_{X}(a) = \emptyset$, then we  set $Pa(X\tstamp{t}) = \{X\tstamp{t-1}\}$, and $T_{X(t)}$ according to Equation~\ref{e:cpt1d}.
Otherwise, we set
\begin{equation}
\label{e:parE}
Pa(X\tstamp{t}) \;=\; \{X\tstamp{t-1}\}\hspace{-0.2cm}\bigcup_{e\in \effs_{X}(a)}\hspace{-0.3cm}\left\{ X'\tstamp{t-1} 
\mid \con(e) \cap Dom(X) \neq \emptyset \right\},
\end{equation}
and specify $T_{X\tstamp{t}}$ as follows. Let $x_{e} \in Dom(X)$ be
the value that (the only deterministic outcome of) the effect $e \in \effs_{X}(a)$ provides to $X$.  For each
$\pi \in Dom(Pa(X\tstamp{t}))$, if there exists $e \in \effs_{X}(a)$
such that $\con(e) \subseteq \pi$, then we set 
 \begin{equation}
  \label{e:cpt2d1}
  T_{X(t)}(X\tstamp{t} = x \mid \pi ) = \begin{cases}
     1, & x = x_{e},\\
     0, & {\mathrm{otherwise}} 
  \end{cases}
\end{equation}
Otherwise, we set
 \begin{equation}
  \label{e:cpt2d2}
  T_{X(t)}(X\tstamp{t} = x \mid \pi ) = \begin{cases}
     1, &  x = \pi[X\tstamp{t-1}],\\
     0, & {\mathrm{otherwise}} 
     \end{cases} 
   \end{equation}
Due to the self-consistency of the action, it is not hard to verify
that Equations~\ref{e:cpt2d1}-\ref{e:cpt2d2} are consistent, and, together
with Equation~\ref{e:cpt1d}, capture the semantics of the conditional
deterministic actions. This special treatment of deterministic actions
is illustrated in Figure~\ref{fig:example2} by the direct dependencies
of $\BNvars\tstamp{2}$ on $\BNvars\tstamp{1}$.

\begin{proposition}
\label{prop:bn-correct}
   Let $(\actions,\initBN,\goal,\goalprob)$ be a probabilistic
   planning problem, and $\seqactions$ be an $m$-step sequence of
   actions applicable in $\initprob$. Let $Pr$ be the joint
   probability distribution induced by $\BN_{\belief_{\seqactions}}$
   on its variables $\BNvars_{\belief_{\seqactions}}$. The belief
   state $\belief_{\seqactions}$ corresponds to the marginal
   distribution of $Pr$ on $\BNvars\tstamp{m}$, that is:
   $\belief_{\seqactions}(\BNvars) = Pr(\BNvars\tstamp{m})$, and if
   $G\tstamp{m}$ is a partial assignment provided by $G$ to
   $\BNvars\tstamp{m}$, then the probability
   $\belief_{\seqactions}(G)$ that $\seqactions$ achieves $G$ starting
   from $\initbelief$ is equal to $Pr(G\tstamp{m})$.
\end{proposition}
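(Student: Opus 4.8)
The plan is to proceed by induction on the length $m$ of the action sequence $\seqactions$, exploiting the layered (``unrolled'') structure of $\BN_{\belief_{\seqactions}}$ together with the belief-update recurrence in Equations~\ref{e:bpa}~and~\ref{e:bas}. Throughout, I identify each complete assignment to a layer $\BNvars\tstamp{t}$ with the world state $w$ it encodes, so that the marginal $Pr(\BNvars\tstamp{t})$ and a distribution over world states are the same object.

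For the base case $m = 0$ we have $\seqactions = \langle\rangle$, hence $\belief_{\seqactions} = \initbelief = \initprob$, while $\BN_{\belief_{\seqactions}}$ consists of the single layer $\BNvars\tstamp{0}$ whose parents and CPTs were defined to copy those of $\initBN$. The induced distribution on $\BNvars\tstamp{0}$ is therefore exactly $\initprob$, which establishes the claim. For the inductive step, write $\seqactions = \seqactions' \cdot \langle a\tind{m}\rangle$ with $a\tind{m} = a$, and note that $\BN_{\belief_{\seqactions}}$ is obtained from $\BN_{\belief_{\seqactions'}}$ by appending the mediating variable $Y\tstamp{m}$ (when $a$ is probabilistic) together with the new layer $\BNvars\tstamp{m}$, all of which are descendants of the existing nodes (indeed $\BNvars\tstamp{m}$ are sink nodes). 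First I would observe that appending descendant nodes leaves the marginal on the earlier variables unchanged, so that $Pr(\BNvars\tstamp{m-1}) = \belief_{\seqactions'}(\BNvars)$ by the induction hypothesis. Then, factorizing over the fresh nodes, I would write
\[
Pr(\BNvars\tstamp{m} = w') = \sum_{w}\sum_{\poutcome} Pr\!\left(\BNvars\tstamp{m}=w' \mid \BNvars\tstamp{m-1}=w,\, Y\tstamp{m}=\poutcome\right)\cdot Pr\!\left(Y\tstamp{m}=\poutcome \mid \BNvars\tstamp{m-1}=w\right)\cdot \belief_{\seqactions'}(w).
\]

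The crux is then to read off the two new CPTs. Equation~\ref{e:cpt1p} shows that the middle factor equals $\probe(\poutcome)$ when the unique active effect $e$ with $\con(e)\subseteq w$ satisfies $\poutcome\in\poutcomeset(e)$, and $0$ otherwise; while Equations~\ref{e:cpt1d}~and~\ref{e:cpt2p} show that the first factor is the deterministic indicator of $w' = w\cup\add(\poutcome)\setminus\del(\poutcome)$ (each variable touched by $\poutcome$ takes its new value $x_{\poutcome}$, the rest are frame-preserved). Substituting these back collapses the display into the right-hand side of Equation~\ref{e:bpa}, i.e.\ $\applying{\belief_{\seqactions'}}{a}(w') = \belief_{\seqactions}(w')$. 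The purely deterministic case is the same computation with the $Y\tstamp{m}$ sum suppressed, reading the transition off Equations~\ref{e:cpt2d1}~and~\ref{e:cpt2d2}. The goal claim then follows at once: summing the marginal equality over all assignments $w'$ consistent with $G\tstamp{m}$ gives $\belief_{\seqactions}(G) = Pr(G\tstamp{m})$.

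The main obstacle I anticipate is making the ``read-off'' step rigorous, and two points deserve care. First, the correspondence between the multi-valued variables $\BNvars$ and propositional world states must be used consistently, so that the outcome-indexed add/delete semantics of Equation~\ref{e:bpa} genuinely coincides with the value-assignment semantics of Equations~\ref{e:cpt1p}--\ref{e:cpt2p}. Second, and more importantly, I must invoke the hypothesis that $a$ is applicable in $\belief_{\seqactions'}$: applicability guarantees that every $w$ in the support of $\belief_{\seqactions'}$ satisfies $\pre(a)$, so that the restriction $w\supseteq\pre(a)$ in Equation~\ref{e:bpa} is vacuous on the support and my unrestricted sum over $w$ agrees with it, and it guarantees that exactly one effect $e$ has $\con(e)\subseteq w$, so the CPT of $Y\tstamp{m}$ is a genuine distribution summing to $1$ and $\poutcomeset(e)$ is well defined. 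Finally, the self-consistency and mutual-exclusivity assumptions on the effects of $a$ are precisely what make Equation~\ref{e:cpt2p} well defined and ensure the indicator in the first factor is single-valued.
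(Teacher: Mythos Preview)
Your proposal is correct and, in fact, more explicit than what the paper provides. The paper does not give a proof of this proposition at all: it simply remarks that the belief-state BNs follow the construction principles of \citeA{dean:kanazawa:ci89}, \citeA{hanks:drew:aij94}, and \citeA{Buridan}, and declares the correctness ``immediate from these previous results.'' Your induction on $m$, with the factorization over the appended layer and the careful appeal to applicability (so that the unrestricted sum over $w$ matches the $w\supseteq\pre(a)$ restriction in Equation~\ref{e:bpa}, and so that exactly one effect is active), is precisely the argument those references would supply; it is the natural self-contained route and there is nothing to correct.
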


As we already mentioned, our belief-state BNs are constructed along
the principles outlined and used
by~\citeA{dean:kanazawa:ci89},~\citeA{hanks:drew:aij94}, and~\citeA{Buridan}, and thus the
correctness of Proposition~\ref{prop:bn-correct} is immediate from these previous results.  At this point, it is worth bringing attention to
the fact that all the variables in
$\BNvars\tstamp{1},\ldots,\BNvars\tstamp{m}$ 
are completely deterministic. Moreover, the CPTs of all the variables
 of $\BN_{\belief_{\seqactions}}$ are all {\em compactly representable} due to either a low number of parents, or some local structure induced by a large amount of
context-specific independence, or both. This compactness of the CPTs in $\BN_{\belief_{\seqactions}}$ is implied by the compactness of the STRIPS-style specification of the  planning actions. By exploiting this compactness of the action specification, the  size of the $\BN_{\belief_{\seqactions}}$'s description can be kept linear in the size of the input and the number of actions in $\seqactions$. 
%

\begin{proposition}
\label{prop:sizecomp}
Let $(\actions,\initBN,\goal,\goalprob)$ be a probabilistic planning
problem described over $k$ state variables, and $\seqactions$ be an $m$-step sequence of actions from $\actions$. Then, we have $|\BN_{\belief_{\seqactions}}| = O(|\initBN| + m\alpha(k+1))$
  where $\alpha$ is the largest
description size of an action in $\actions$.
\end{proposition}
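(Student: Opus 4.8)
The plan is to bound the description size of $\BN_{\belief_{\seqactions}}$ by summing, over the graph and the conditional probability tables (CPTs), the contribution of the initial layer and of each of the $m$ timesteps, and to argue that every CPT introduced after the initial layer admits a compact representation of size $O(\alpha)$.

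First I would count the variables. By construction $\BNvars_{\belief_{\seqactions}} = \cY \cup \bigcup_{t=0}^{m}\BNvars\tstamp{t}$ consists of the $(m+1)k$ state-variable copies $X\tstamp{t}$ together with at most one mediating variable $Y\tstamp{t}$ per action, so $|\cY| \le m$ and the number of nodes is $O(mk)$. The layer-$0$ part of the description is a verbatim copy of $\initBN$ (the parents and CPTs of $X\tstamp{0}$ are those of $X$ in $\initBN$), contributing exactly $O(|\initBN|)$; it remains to bound the per-timestep contribution of the edges and of the CPTs of $Y\tstamp{t}$ and of $X\tstamp{1},\ldots,X\tstamp{m}$.

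The key step is to bound, for a fixed action $a\tind{t}=a$, each of the $k+1$ CPTs at timestep $t$ by $O(\alpha)$, where $\alpha$ upper-bounds the size of the STRIPS-style description of $a$. For the mediating variable, Equation~\ref{e:paY} makes $Pa(Y\tstamp{t})$ range over the previous-layer copies of the variables mentioned in the conditions $\con(e)$, $e\in\effs(a)$, so the number of incident edges is at most the total number of condition literals of $a$, i.e.\ $O(\alpha)$. Although the parent domain $Dom(Pa(Y\tstamp{t}))$ may be large, Equation~\ref{e:cpt1p} shows that the CPT value depends on $\pi$ only through the tests $\con(e(\poutcome))\subseteq\pi$; this context-specific independence means the table is constant on a few equivalence classes and admits a rule-based representation with one entry per outcome $\poutcome$ keyed on $\con(e(\poutcome))$ and its probability $\probe(\poutcome)$ --- all of which are listed in the description of $a$ --- hence of size $O(\alpha)$. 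The same reasoning applies to each state variable $X$ affected by $a$: Equations~\ref{e:parE}, \ref{e:cpt2p}, \ref{e:cpt2d1} and \ref{e:cpt2d2} key the CPT on $X\tstamp{t-1}$ (the frame value) and the conditions of the effects in $\effs_{X}(a)$, so each such table has size $O(\alpha)$ (and since the affected variables share the same effect listing, their tables in fact sum to $O(\alpha)$). Finally, for each $X$ with $\effs_{X}(a)=\emptyset$, Equation~\ref{e:cpt1d} is the identity ``copy'' table, representable in $O(1)$ space.

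Summing these bounds, timestep $t$ contributes $O((k+1)\alpha)$ to the description --- the $k$ state-variable tables plus the single mediating table, each $O(\alpha)$ --- with the edge counts dominated by the same condition-literal counts. Adding the $m$ timesteps and the initial layer yields $|\BN_{\belief_{\seqactions}}| = O(|\initBN| + m\alpha(k+1))$. The only non-routine point, which I would make fully explicit, is the compact-representability claim: a naive tabular CPT would be exponential in the (possibly large) number of parents, so the bound rests on the context-specific independence exhibited by the case definitions in Equations~\ref{e:cpt1p}--\ref{e:cpt2d2}, i.e.\ on representing each CPT by rules keyed on effect conditions rather than by an explicit table over all parent configurations.
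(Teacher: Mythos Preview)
Your proposal is correct and follows essentially the same line as the paper's proof: separate the layer-$0$ copy of $\initBN$, then bound each of the $k+1$ CPTs at every timestep by $O(\alpha)$ via the observation that the action description itself serves as a compact, rule-based encoding of those tables. If anything, your write-up is more careful than the paper's sketch---you explicitly distinguish the affected vs.\ unaffected state variables and spell out the context-specific-independence argument, whereas the paper simply asserts that ``the original description of $a\tind{t}$ is by itself a compact specification of $T_{Y(t)}$'' and that the $T_{X(t)}$ bound is ``rather evident.''
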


The proof of Proposition~\ref{prop:sizecomp}, as well as the proofs of other formal claims in the paper, are relegated to Appendix~\ref{ss:proofs}, pp.~\pageref{prop:sizecompA}.


\commentout{
\begin{figure}[t]
\begin{center}
\begin{tabular}{c}
\begin{tabular}{cccc}
\begin{minipage}{0.7in}
{\tiny
\begin{tabular}{|@{\hspace{1pt}}c@{\hspace{3pt}}c@{\hspace{1pt}}|}
\hline
 {\tiny $r_{1}$} & {\tiny $r_{2}$}\\
 \hline
 {\tiny 0.9} & {\tiny 0.1}\\
\hline
\end{tabular}
}
\end{minipage}
& 
& 
\begin{minipage}{1.2in}
{\tiny 
 \setlength{\extrarowheight}{1pt}
 \begin{tabular}{|@{\hspace{1pt}}c@{\hspace{1pt}}|@{\hspace{1pt}}c@{\hspace{1pt}}|c@{\hspace{3pt}}c@{\hspace{1pt}}|}
 \hline
  \multicolumn{2}{|c|}{}  & $r_{1}$ & $r_{2}$\\
 \hline
   \multicolumn{2}{|c|}{$\poutcome_{1} \vee \poutcome_{2}$} & 0 & 1\\
 \hline
 \multirow{2}{1cm}{$\poutcome_{3} \vee \poutcome_{1}'$} & $r_{1}$ & 1 & 0\\
 \cline{2-4}
 & $r_{2}$ & 0 & 1\\
 \hline
 \end{tabular}
%
}
\vspace{0.1cm}
\end{minipage}
&
\begin{minipage}{0.7in}
{\tiny
\begin{tabular}{|@{\hspace{1pt}}c@{\hspace{1pt}}|@{\hspace{1pt}}c@{\hspace{3pt}}c@{\hspace{1pt}}|}
\hline
 & $r_{1}$ & $r_{2}$\\
 \hline
 $r_{1}$ & 1 &  0 \\
 \hline
 $r_{2}$ & 1 &  0 \\
\hline
\end{tabular}
}
\end{minipage}
\\
\multicolumn{4}{l}{
\vspace{0.1cm}
  \def\objectsizestyle{\scriptstyle}
      \xymatrix @R=3pt@C=25pt{
        *+[o][F]{R\tstamp{0}} \ar[dd] \ar[rr] \ar[dr] & & *+[o][F]{R\tstamp{1}} \ar[rr] & & *+[o][F]{R\tstamp{2}}\\
        & *+[F]{Y\tstamp{1}} \ar[ur] \ar[dr] 
        & \mbox{
        \begin{minipage}{0.4in}
{\tiny
\begin{tabular}{|@{\hspace{1pt}}c@{\hspace{3pt}}c@{\hspace{3pt}}c@{\hspace{3pt}}c@{\hspace{3pt}}c@{\hspace{1pt}}|}
\hline
 & $\poutcome_{1}$ &  $\poutcome_{2}$ & $\poutcome_{3}$ &  $\poutcome_{1}'$\\
 \hline
 $r_{1}\wedge b_{1}$ & 0.7 & 0.2 & 0.1 & 0\\
 ${\mathrm{othrw}}$ & 0 & 0 & 0 & 1\\
\hline
\end{tabular}
}
\end{minipage}
        }\\
        *+[o][F]{B\tstamp{0}} \ar[rr] \ar[ur]& & *+[o][F]{B\tstamp{1}} \ar[rr] & & *+[o][F]{B\tstamp{2}}
        \\
        }
}
\\
\begin{minipage}{0.7in}
{\tiny
\begin{tabular}{|@{\hspace{1pt}}c@{\hspace{1pt}}|@{\hspace{1pt}}c@{\hspace{3pt}}c@{\hspace{1pt}}|}
\hline
 & $b_{1}$ & $b_{2}$\\
 \hline
 $r_{1}$ & $0.7$ & 0.3\\
 \hline
 $r_{2}$ & $0.2$ & 0.8\\
\hline
\end{tabular}
}
\end{minipage}
& \multicolumn{2}{c}{
\begin{minipage}{0.8in}
{\tiny \setlength{\extrarowheight}{1pt}
 \begin{tabular}{|@{\hspace{1pt}}c@{\hspace{1pt}}|@{\hspace{1pt}}c@{\hspace{1pt}}|c@{\hspace{3pt}}c@{\hspace{1pt}}|}
 \hline
  \multicolumn{2}{|c|}{}  & $b_{1}$ & $b_{2}$\\
 \hline
   \multicolumn{2}{|c|}{$\poutcome_{1}$} & 0 & 1\\
 \hline
 \multirow{2}{0.5cm}{$\neg \poutcome_{1}$} & $b_{1}$ & 1 & 0\\
 \cline{2-4}
 & $b_{2}$ & 0 & 1\\
 \hline
 \end{tabular}
}
\end{minipage}
} 
& 
\begin{minipage}{0.7in}
{\tiny
\begin{tabular}{|@{\hspace{1pt}}c@{\hspace{1pt}}|@{\hspace{1pt}}c@{\hspace{3pt}}c@{\hspace{1pt}}|}
\hline
 & $b_{1}$ & $b_{2}$\\
 \hline
 $b_{1}$ & $1$ & $0$\\
 \hline
 $b_{2}$ & 0 & 1\\
\hline
\end{tabular}
}
\end{minipage}
\end{tabular}
\end{tabular}
\end{center}
\vspace{-0.2cm}
\caption{Bayes network $\BN_{\seqactions}$ for our running example.}
\label{fig:example2}
\vspace{-0.4cm}
\end{figure}

}

\subsection{Weighted CNFs}
\label{ss:dual}

Given the representation of belief states as BNs, next we should
select a mechanism for reasoning about these BNs. In general,
computing the probability of a query in BNs is known to be
\sharpp-complete~\cite{roth:aij-96}.  In addition, it is not hard to
verify, using an analysis similar to the ones
of~\citeA{darwiche:dbns} and~\citeA{brafman:domshlak:aaai06}, that the networks
arising in our work will typically exhibit large tree-width. While
numerous exact algorithms for inference with BNs have been proposed in
the literature~\cite{Darwiche:aij00,dechter:aij-99,zhang:poole:cai-94}, the
classical algorithms do not scale well on large networks
exhibiting high tree-width.
On the positive side, however, an observation that guides some recent
advances in the area of probabilistic reasoning is that real-world
domains typically exhibit a significant degree of deterministic
dependencies and context-specific independencies between the problem
variables. Targeting this property of practical BNs already resulted
in powerful inference
techniques~\cite{Chavira:Darwiche:ijcai-05,sang:etal:aaai-05}. The
general principle underlying these techniques is to
%
\begin{enumerate}[(i)]
\item Compile a BN $\BN$ into a {\em weighted propositional logic
    formula} $\cnf{\BN}$ in CNF, and
 \item Perform an efficient {\em weighted model counting} for
   $\cnf{\BN}$ by reusing (and adapting) certain techniques that
   appear powerful in enhancing backtracking DPLL-style search for
   SAT.
\end{enumerate}

One observation we had at the early stages of developing \pff\ is that
the type of networks and type of queries we have in our problems make
this machinery for solving BNs by weighted CNF model counting very
attractive for our needs.
First, in Section~\ref{ss:primal1} we have
already shown that the BNs representing our belief states exhibit a
large amount of both deterministic nodes and context-specific
independence. Second, the queries of our interest correspond to
computing probability of the ``evidence'' $G\tstamp{m}$ in
$\BN_{\belief_{\seqactions}}$, and this type of query has a clear
interpretation in terms of model
counting~\cite{sang:etal:aaai-05}. Hence, taking this route in \pff,
we compile our belief state BNs to weighted CNFs following the
encoding scheme proposed by~\citeA{sang:etal:aaai-05}, and answer
probabilistic queries using Cachet~\cite{sang:etal:sat-04}, one of the
most powerful systems to date for exact weighted model counting in CNFs.

In general, the weighted CNFs and the weights of such formulas are specified as
follows. Let $\cV = \{V_{1},\dots,$ $V_{n}\}$ be a set of
propositional variables with $Dom(V_{i}) = \{v_{i},\neg v_{i}\}$, and
let $\weight : \bigcup_{i}Dom(V_{i}) \rightarrow \nnreals$ be a
non-negative, real-valued {\em weight} function from the literals of
$\cV$. For any partial assignment $\pi$ to $\cV$, the weight
$\weight(\pi)$ of this assignment is defined as the product of its literals' weights, that is, $\weight(\pi) =
\prod_{l\in\pi}{\weight(l)}$. Finally, a propositional logic formula
$\phi$ is called {\em weighted} if it is defined over such a weighted set of propositional variables.  For any weighted formula $\phi$ over
$\cV$, the weight $\weight(\phi)$ is defined as the sum of the weights of all the complete assignments to $\cV$ satisfying 
$\phi$, that is,
\[
\weight(\phi) = \sum_{\pi \in Dom(\cV)}{\weight(\pi)\delta\left(
\pi \models \phi
\right)},
\]
where $Dom(\cV) = \times_{i}Dom(V_{i})$. For instance, if for all
variables $V_{i}$ we have $\weight(v_{i}) = \weight(\neg v_{i}) = 1$,
then $\weight(\phi)$ simply stands for the number of complete
assignments to $\cV$ that satisfy $\phi$.

Given an initial belief state BN $\initBN$, and a sequence of actions
$\seqactions = \langle a\tind{1},\ldots,a\tind{m}\rangle$ applicable
in $\initbelief$, here we describe how the weighted CNF encoding
$\phi(\BN_{\belief_{\seqactions}})$ (or $\phi(\belief_{\seqactions})$,
for short) of the belief state $\belief_{\seqactions}$ is built and
used in \pff. 
First, we formally specify the generic scheme introduced
by~\citeA{sang:etal:aaai-05} for encoding a BN $\BN$ over variables
$\BNvars$ into a weighted CNF $\phi(\BN)$. 
The encoding formula $\phi(\BN)$ contains two sets of
     variables. First, for each variable $Z \in \BNvars$ and each
     value $z\in Dom(Z)$, the formula $\phi(\BN)$ contains a {\em
     state proposition} with literals $\{z,\neg z\}$, weighted as
     $\weight(z) = \weight(\neg z) = 1$.  These state propositions act
     in $\phi(\belief_{\seqactions})$ as regular SAT propositions.
     Now, for each variable $Z \in \BNvars_{\belief_{\seqactions}}$, let  $Dom(Z) = \{z_{1},\dots,z_{k}\}$ be an arbitrary fixed ordering of $Dom(Z)$.
      Recall that each row $\cpt{Z}[i]$ in the CPT of $Z$ corresponds to an assignment 
      $\zeta_{i}$ (or a set of such assignments) to $Pa(Z)$. Thus, the number of rows in 
      $\cpt{Z}$ is upper bounded by the number of different assignments to $Pa(Z)$, but 
      (as it happens in our case) it can be significantly lower if the dependence of $Z$ on $Pa(Z)$
      induces a substantial local structure.
      Following the ordering of $Dom(Z)$ as above, the entry $\cpt{Z}[i,j]$ 
      contains the conditional probability of $Pr(z_{j}\mid\zeta_{i})$.
      For every CPT entry $\cpt{Z}[i,j]$ but the last one (i.e., $\cpt{Z}[i,k]$), the formula
      $\phi(\BN)$ contains a {\em chance proposition} with literals 
      $\{\langle z_{j}^{i} \rangle, \neg \langle z_{j}^{i} \rangle\}$. 
      These chance variables aim at capturing the probabilistic information from
      the CPTs of $\BN_{\belief_{\seqactions}}$. Specifically, 
    the weight of the literal $\langle z_{j}^{i} \rangle$ 
      is set to $Pr(z_{j} \mid \zeta_{i},\neg z_{1},\dots,\neg z_{j-1})$, that is to
      conditional probability that the entry is 
      true, given that the row is true, and no prior entry in the row is true:
      \begin{equation}
      \label{e:chancew}
      \begin{split}
        \weight \left(\langle z_{j}^{i} \rangle\right) &=
        \frac{\cpt{Z}[i,j]}{1-\sum_{k=1}^{j-1}{\cpt{Z}[i,k]}}\\
        \weight \left( \neg\langle z_{j}^{i} \rangle\right) &=
        1 - \weight \left(\langle z_{j}^{i} \rangle\right)
      \end{split}
      \end{equation}

Considering the clauses of $\phi(\BN)$,  for each variable $Z \in \BNvars$, and each CPT entry 
    $\cpt{Z}[i,j]$, the formula $\phi(\BN)$ contains a clause 
     \begin{equation}
     \label{e:cptclause}
      \left( \zeta_{i} \wedge \neg \langle z_{1}^{i} \rangle \wedge \dots \wedge \neg \langle z_{j-1}^{i} 
      \rangle
      \wedge \langle  z_{j}^{i} \rangle\right) \implication z_{j},
    \end{equation}
    where $\zeta_{i}$ is a conjunction of the literals forming the assignment 
    $\zeta_{i} \in Dom(Pa(Z))$. These clauses ensure that the weights of the complete assignments 
    to the variables of $\phi(\BN)$ are equal to the probability of the 
    corresponding atomic events as postulated by the BN $\BN$. 
    To illustrate the construction in Equations~\ref{e:chancew}-\ref{e:cptclause}, let boolean variables $A$ and $B$ be the parents of a  ternary variable $C$ (with $Dom(C)=\{C_{1},C_{2},C_{3}\}$) in some BN, and let $Pr(C_{1}|A,\neg B) = 0.2$, $Pr(C_{2}|A,\neg B) = 0.4$, and $Pr(C_{3}|A,\neg B) = 0.4$. Let the raw  corresponding to the assignment $A,\neg B$ to $Pa(C)$ be the $i$-th row  of the CPT $\cpt{C}$. In the encoding of this BN, the first two  entries of this raw of $\cpt{C}$ are captured by a pair of respective  chance propositions $\langle C_{1}^{i} \rangle$, and $\langle C_{2}^{i} \rangle$. According to Equation~\ref{e:chancew}, the weights of these propositions are set to  $\weight \left(\langle C_{1}^{i} \rangle\right) = 0.2$, and $\weight \left(\langle C_{1}^{i} \rangle\right) = \frac{0.4}{1-0.2} = 0.5$. Then, according to Equation~\ref{e:cptclause}, the encoding contains three clauses
    \[
      \begin{split}
      &\left( \neg A \vee B \vee \neg\langle C_{1}^{i} \rangle \vee C_{1}\right)\\
      &\left( \neg A \vee B \vee \langle C_{1}^{i} \rangle \vee \neg\langle C_{2}^{i} \rangle \vee C_{2}\right)\\
      &\left( \neg A \vee B \vee \langle C_{1}^{i} \rangle \vee \langle C_{2}^{i} \rangle \vee C_{3}\right)\\
      \end{split}
    \]
    
Finally, for each variable $Z \in \BNvars$, the formula 
    $\phi(\BN)$ contains a standard set of clauses encoding
    the ``exactly one'' relationship between the state propositions capturing the value 
    of $Z$. This accomplishes the encoding of $\BN$ into $\phi(\BN)$.
In the next Section~\ref{ss:ex-encoding} we illustrate this encoding on the belief state BN from our running example.



\begin{figure}[t]
\vspace{-0.0cm}
\begin{tabbing}
xx\=xx\=xx\=xx\=xx\=xx\= \kill
{\bf procedure} $\basicWMC(\phi)$\\
\>  {\bf if} $\phi = \emptyset$ {\bf return} 1\\
\> {\bf if} $\phi$ has an empty clause {\bf return} 0\\
\> {\bf select} a variable $V \in \phi$\\
\>\> {\bf return} $\basicWMC(\phi|_{v})\cdot \weight(v)\;+ \;\basicWMC(\phi|_{\neg v})\cdot \weight(\neg v)$
\end{tabbing}
\caption{\label{fig:bwmc}Basic DPPL-style weighted model counting.}
\end{figure}

The weighted CNF encoding $\phi({\belief_{\seqactions}})$ of the belief state BN $\BN_{\belief_{\seqactions}}$ provides the input to a weighted model counting
procedure. A simple recursive DPPL-style procedure $\basicWMC$
underlying Cachet~\cite{sang:etal:sat-04} is depicted in
Figure~\ref{fig:bwmc}, where the formula $\phi|_{v}$ is obtained from
$\phi$ by setting the literal $v$ to true.
Theorem~3 by~\citeA{sang:etal:aaai-05} shows that if $\phi$ is a
weighted CNF encoding of a BN $\BN$, and $Pr(Q|E)$ is a general query
with respect to $\BN$, query $Q$, and evidence $E$, then we have:
\begin{equation}
\label{e:pqe}
Pr(Q|E) = \frac{\basicWMC(\phi \wedge Q \wedge E)}{\basicWMC(\phi \wedge E)},
\end{equation}
where query $Q$ and evidence $E$ can in fact be arbitrary formulas in propositional logic. Note that, in a special (and very relevant to us) case of empty evidence, Equation~\ref{e:pqe} reduces to $Pr(Q) = \basicWMC(\phi \wedge Q)$, that is, a single call to the $\basicWMC$ procedure. Corollary~\ref {prop:wcnf-correct} is then immediate from our Proposition~\ref{prop:bn-correct} and Theorem~3 by~\citeA{sang:etal:aaai-05}.

\begin{corollary}
\label{prop:wcnf-correct}
   Let $(\actions,\initprob,\goal,\goalprob)$ be a probabilistic planning task with a BN $\initBN$ describing $\initprob$, and $\seqactions$ be an $m$-step sequence of actions applicable in $\initprob$. The probability $\belief_{\seqactions}(G)$ that $\seqactions$ achieves $G$ starting from $\initprob$ is given by:
      \begin{equation}
     \belief_{\seqactions}(G) = {\mathsf{WMC}}\left(\ourf(\belief_{\seqactions})\wedge G(m)\right),
   \end{equation}
where $G(m)$ is a conjunction of the goal literals time-stamped with the time endpoint $m$ of $\seqactions$.
\end{corollary}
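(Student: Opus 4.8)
The plan is to obtain the claim by composing two facts that are already in hand: Proposition~\ref{prop:bn-correct}, which relates the belief-state probability to a marginal of the joint distribution $Pr$ induced by $\BN_{\belief_{\seqactions}}$, and the ``empty evidence'' special case of Equation~\ref{e:pqe}, i.e.\ Theorem~3 of~\citeA{sang:etal:aaai-05} instantiated for the weighted CNF encoding $\ourf(\belief_{\seqactions})$. Since the corollary is asserted to be immediate, the real work is just to make this chaining explicit and to verify that the hypotheses of both results are met.

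First I would use the standing hypothesis that $\seqactions$ is applicable in $\initprob$, which is exactly the condition under which Proposition~\ref{prop:bn-correct} guarantees that $\BN_{\belief_{\seqactions}}$ is well-defined and that its induced joint distribution $Pr$ satisfies $\belief_{\seqactions}(G) = Pr(G\tstamp{m})$, where $G\tstamp{m}$ is the partial assignment that $G$ induces on the final layer $\BNvars\tstamp{m}$. Next I would identify this partial assignment $G\tstamp{m}$ with the propositional query $Q = G(m)$, the conjunction of the time-$m$ goal literals: the probability of a partial assignment under a BN is precisely the probability of the event described by the corresponding conjunction of literals, so $Pr(G\tstamp{m}) = Pr(G(m))$ read as a query against $\BN_{\belief_{\seqactions}}$.

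Then I would apply Equation~\ref{e:pqe} with $Q = G(m)$ and empty evidence $E$. Here I would note that $\ourf(\belief_{\seqactions}) = \ourf(\BN_{\belief_{\seqactions}})$ is, by the construction in Section~\ref{ss:dual} (the chance-proposition weights of Equation~\ref{e:chancew}, the CPT clauses of Equation~\ref{e:cptclause}, and the ``exactly one'' clauses), exactly the encoding scheme to which Theorem~3 applies; hence that theorem gives, in the empty-evidence case, $Pr(G(m)) = \basicWMC(\ourf(\belief_{\seqactions}) \wedge G(m))$. Chaining the three equalities yields $\belief_{\seqactions}(G) = \mathsf{WMC}(\ourf(\belief_{\seqactions}) \wedge G(m))$, where $\mathsf{WMC}$ denotes the same weighted-model-counting quantity computed by $\basicWMC$, which is the asserted identity.

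The only point that needs care---more a piece of bookkeeping than a genuine obstacle---is the second step: one must confirm that $G(m)$, as a propositional formula over the state propositions of layer $m$, selects exactly those complete assignments whose projection onto $\BNvars\tstamp{m}$ extends $G\tstamp{m}$, so that summing the encoding weights over them reproduces the marginal $Pr(G\tstamp{m})$ rather than some coarser or finer event. This is precisely what the ``exactly one'' clauses enforce, since they force each variable's state propositions to encode a single value; it is worth stating explicitly, as it is what licenses treating the propositional formula $G(m)$ as the query corresponding to the partial assignment $G\tstamp{m}$.
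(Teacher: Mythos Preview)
Your proposal is correct and follows exactly the approach the paper indicates: the paper states that the corollary ``is then immediate from our Proposition~\ref{prop:bn-correct} and Theorem~3 by~\citeA{sang:etal:aaai-05},'' and you have simply made that chaining explicit, together with the bookkeeping that the ``exactly one'' clauses justify identifying the partial assignment $G\tstamp{m}$ with the propositional query $G(m)$.
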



\subsection{Example: Weighted CNF Encoding of Belief States}
\label{ss:ex-encoding}

We now illustrate the generic BN-to-WCNF encoding scheme of~\citeA{sang:etal:aaai-05} on the belief state BN $\BN_{\belief_{\seqactions}}$ from our running example in Figure~\ref{fig:example2}.

For $0 \leq i \leq 2$, we introduce time-stamped state propositions $r_{1}(i),r_{2}(i),b_{1}(i),b_{2}(i)$. Likewise, we introduce four state propositions $\poutcome_{1}(1),\poutcome_{2}(1),\poutcome_{3}(1),\poutcome_{1}'(1)$ corresponding to the values of the variable $Y\tstamp{1}$.
The first set of clauses in $\ourf(\belief_{\seqactions})$ ensure the ``exactly one'' relationship between the state propositions capturing the value of a variable in $\BN_{\belief_{\seqactions}}$:
\begin{equation}
\label{e:encoding1}
\begin{split}
  & \left( \poutcome_{1}(1) \vee \poutcome_{2}(1) \vee \poutcome_{3}(1) \vee \poutcome_{1}'(1)\right),\\
  1 \leq i < j & \leq 4\;: \\
  &\left( \neg y_{i}(1) \vee \neg y_{j}(1)\right),\\
  0 \leq i & \leq 2\;: \\
  & \left( r_{1}(i) \vee r_{2}(i) \right),\;
    \left( \neg r_{1}(i) \vee \neg r_{2}(i) \right)\\
  & \left( b_{1}(i) \vee b_{2}(i) \right),\;
    \left( \neg b_{1}(i) \vee \neg b_{2}(i) \right)
\end{split}
\end{equation}

Now we proceed with encoding the CPTs of $\BN_{\belief_{\seqactions}}$. The root nodes have only one row in their CPTs so their chance propositions can be identified with the corresponding state variables~\cite{sang:etal:aaai-05}.
Hence, for the root variable $R\tstamp{0}$ we need neither additional clauses nor special chance propositions, but the state proposition $r_{1}(0)$ of $\phi(\belief_{\seqactions})$ is treated as a chance proposition with $\weight\left(r_{1}(0)\right) = 0.9$. 

Encoding of the variable $B\tstamp{0}$ is a bit more involved. The CPT $\cpt{B\tstamp{0}}$ contains two (content-wise different) rows corresponding to the ``given $r_{1}$'' and ``given $r_{2}$'' cases, 
and both these cases induce a non-deterministic dependence of $B\tstamp{0}$ on $R\tstamp{0}$. 
To encode the content of $\cpt{B\tstamp{0}}$ 
we introduce two chance variables 
$\langle b_{1}(0)^{1}\rangle$ and $\langle b_{1}(0)^{2}\rangle$ with $\weight(\langle b_{1}(0)^{1}\rangle) = 0.7$ and $\weight(\langle b_{1}(0)^{2}\rangle) = 0.2$. 
The positive literals of $\langle b_{1}(0)^{1}\rangle$ and $\langle b_{1}(0)^{2}\rangle$ 
capture the events ``$b_{1}$ given $r_{1}$'' and
``$b_{1}$ given $r_{2}$'', while the negations $\neg \langle b_{1}(0)^{1}\rangle$ and $\neg \langle b_{1}(0)^{2}\rangle$ capture the complementary events ``$b_{2}$ given $r_{2}$'' and ``$b_{2}$ given $r_{2}$'', respectively.
Now consider the ``given $r_{1}$'' row in $T_{B\tstamp{0}}$. To encode this row, we need $\ourf(\belief_{\seqactions})$ to contain $\left( r_{1}(0) \wedge \langle b_{1}(0)^{1}\rangle \right) \implication b_{1}(0)$ and  $\left( r_{1}(0) \wedge \neg \langle b_{1}(0)^{1}\rangle\right) \implication b_{2}(0)$. Similar encoding is required for the row  ``given $r_{2}$'', and thus the encoding of $T_{B\tind{0}}$ introduces four additional clauses:
\begin{equation}
\label{e:encoding2}
\begin{split}
 & \left( \neg r_{1}(0) \vee \neg \langle b_{1}(0)^{1}\rangle \vee b_{1}(0) \right),\;
 \left( \neg r_{1}(0) \vee \langle b_{1}(0)^{1}\rangle \vee b_{2}(0)\right)\\
 & \left(  \neg r_{2}(0) \vee \neg \langle b_{1}(0)^{2}\rangle \vee b_{1}(0) \right),\;
 \left( \neg r_{2}(0) \vee \langle b_{1}(0)^{2}\rangle \vee b_{2}(0)\right)
\end{split}
\end{equation}

Having finished with the $\initBN$ part of $\BN_{\belief_{\seqactions}}$, we proceed with encoding the variable $Y\tstamp{1}$ corresponding to the probabilistic action $move\mbox{-}b\mbox{-}right$. 
To encode the first row of $\cpt{Y\tstamp{1}}$ we introduce three chance propositions $\langle\poutcome_{1}(1)^{1}\rangle$, $\langle\poutcome_{2}(1)^{1} \rangle$, and $\langle\poutcome_{3}(1)^{1} \rangle$; in general, no chance variables are needed for the last entries of the CPT rows. 
The weights of these chance propositions are set according to Equation~\ref{e:chancew} to
$\weight\left(\langle\poutcome_{1}(1)^{1} \rangle\right)=0.7$,  $\weight\left(\langle\poutcome_{2}(1)^{1} \rangle\right)=\frac{0.2}{1-0.7} = 0.6(6)$, and $\weight\left(\langle\poutcome_{3}(1)^{1} \rangle\right)=\frac{0.1}{1-0.9} = 0.1$.
Using these chance propositions, we add to $\phi({\belief_{\seqactions}})$ four clauses as in Equation~\ref{e:cptclause}, notably the first four clauses of Equation~\ref{e:encoding22} below. 

Proceeding the second row of $\cpt{Y\tstamp{1}}$, observe that the value of $R\tstamp{0}$ and $B\tstamp{0}$ in this case fully determines the value of $Y\tstamp{1}$. This deterministic dependence can be encoded 
without using any chance propositions using the last two clauses in Equation~\ref{e:encoding22}. 
\begin{equation}
\label{e:encoding22}
\begin{split}
 & \left( \neg r_{1}(0) \vee  \neg b_{1}(0) \vee \neg \langle\poutcome_{1}(1)^{1}\rangle \vee \poutcome_{1}(1)\right),\\
 & \left( \neg r_{1}(0) \vee  \neg b_{1}(0) \vee \langle\poutcome_{1}(1)^{1}\rangle \vee \neg \langle\poutcome_{2}(1)^{1}\rangle \vee \poutcome_{2}(1)\right),\\
 & \left( \neg r_{1}(0) \vee  \neg b_{1}(0) \vee \langle\poutcome_{1}(1)^{1}\rangle \vee \langle\poutcome_{2}(1)^{1}\rangle \vee \neg \langle\poutcome_{3}(1)^{1}\rangle \vee \poutcome_{3}(1)\right),\\
 & \left( \neg r_{1}(0) \vee  \neg b_{1}(0) \vee \langle\poutcome_{1}(1)^{1}\rangle \vee \langle\poutcome_{2}(1)^{1}\rangle \vee \langle\poutcome_{3}(1)^{1}\rangle \vee \poutcome_{1}'(1)\right),\\
 & \ \\
 & \left( r_{1}(0) \vee \neg \poutcome_{1}'(1) \right),\; \left( b_{1}(0) \vee \neg \poutcome_{1}'(1) \right)
\end{split}
\end{equation}

Using the state/chance variables introduced for $R\tind{0}$, $B\tind{0}$, and $Y\tstamp{1}$, we encode the CPTs of $R\tstamp{1}$ and $B\tstamp{1}$ as:
\begin{equation}
\label{e:encoding3}
\begin{split}
 R\tstamp{1}: & \left( \neg \poutcome_{1}(1) \vee r_{2}(1)\right),\;\left( \neg \poutcome_{2}(1) \vee r_{2}(1) \right),\\
            & \left( \neg \poutcome_{3}(1) \vee \neg r_{1}(0) \vee r_{1}(1)\right),\;\left( \neg \poutcome_{1}'(1) \vee \neg r_{1}(0) \vee r_{1}(1)\right),\\
            & \left( \neg \poutcome_{3}(1) \vee \neg r_{1}(0) \vee r_{1}(1)\right),\;\left( \neg \poutcome_{1}'(1) \vee \neg r_{2}(0) \vee r_{2}(1)\right)\\
 B\tstamp{1}: & \left( \neg \poutcome_{1}(1) \vee b_{2}(1)\right),\\
            & \left(  \poutcome_{1}(1) \vee \neg b_{1}(0) \vee b_{1}(1)\right),\\
            & \left(  \poutcome_{1}(1) \vee \neg b_{2}(0) \vee b_{2}(1)\right)
\end{split}
\end{equation}
Since the CPTs of both $R\tstamp{1}$ and $B\tstamp{1}$ are completely deterministic, their encoding as well is using no chance propositions. Finally, we encode the (deterministic) CPTs of $R\tstamp{2}$ and $B\tstamp{2}$ as:
\begin{equation}
\label{e:encoding4}
\begin{split}
 R\tstamp{2}: & \left( r_{1}(2)\right)\\
 B\tstamp{2}: & \left(  \neg b_{1}(1) \vee b_{1}(2)\right)\\
            & \left(  \neg b_{2}(1) \vee b_{2}(2)\right)
\end{split}
\end{equation}
where the unary clause $\left(r_{1}(2)\right)$ is a reduction of 
$\left(\neg r_{1}(1) \vee r_{1}(2)\right)$ and $\left(\neg r_{2}(1) \vee r_{1}(2)\right)$.
This accomplishes our encoding of $\phi({\belief_{\seqactions}})$.

\subsection{From \cff\ to \pff}
\label{ss:cff2pff}

Besides the fact that weighted model counting is attractive for the
kinds of BNs arising in our context, the weighted CNF representation
of belief states works extremely well with the ideas underlying
\cff~\cite{cff}. This was outlined in the introduction already; here
we give a few more details.

As stated, \cff\ does a forward search in a non-probabilistic belief
space in which each belief state corresponds to a set of world states
considered to be possible.  The main trick of \cff\ is the use of CNF
formulas for an implicit representation of belief states, where
formulas $\ourf(\seqactions)$ encode the semantics of executing action
sequence $\seqactions$ in the initial belief state. Facts known to be
true or false are inferred from these formulas. This computation of
only a partial knowledge constitutes a {\em lazy} kind of belief state
representation, in comparison to other approaches that use explicit
enumeration \cite{bonet:geffner:aips-00} or BDDs \cite{MBP} to fully
represent belief states. The basic ideas underlying \pff\ are: 
\begin{enumerate}[(i)]
\item Define time-stamped Bayesian Networks (BN) describing
probabilistic belief states (Section~\ref{ss:primal1} above).
\item Extend \cff's belief state CNFs to model these BN
(Section~\ref{ss:dual} above).
\item In addition to the SAT reasoning used by \cff, use weighted
model-counting to determine whether the probability of the (unknown)
goals in a belief state is high enough (directly below).
\item Introduce approximate probabilistic reasoning into \cff's
heuristic function (Section~\ref{ss:heuristic} below).
\end{enumerate}
In more detail, given a probabilistic planning task
$(\actions,\initprob,\goal,\goalprob)$, a belief state
$\belief_{\seqactions}$ corresponding to some applicable in
$\initprob$ $m$-step action sequence $\seqactions$, and a proposition
$q \in \fluents$, we say that $q$ is {\em known} in
$\belief_{\seqactions}$ if $\belief_{\seqactions}(q) = 1$, {\em
negatively known} in $\belief_{\seqactions}$ if
$\belief_{\seqactions}(q) = 0$, and {\em unknown} in
$\belief_{\seqactions}$, otherwise.  We begin with determining whether
each $q$ is known, negatively known, or unknown at time $m$.  Re-using
the \cff\ machinery, this classification requires up to two SAT tests
of $\ourf(\belief_{\seqactions}) \wedge \neg q(m)$ and
$\ourf(\belief_{\seqactions}) \wedge q(m)$, respectively. The
information provided by this classification is used threefold. First,
if a subgoal $g \in \goal$ is negatively known at time $m$, then we
have $\belief_{\seqactions}(\goal) = 0$.  On the other extreme, if all
the subgoals of $\goal$ are known at time $m$, then we have
$\belief_{\seqactions}(\goal) = 1$. Finally, if some subgoals of $G$
are known and the rest are unknown at time $m$, then we accomplish
evaluating the belief state $\belief_{\seqactions}$ by testing whether
\begin{equation}
\label{e:testseqacts}
\belief_{\seqactions}(\goal) = {\mathsf{WMC}}\left(\ourf(\belief_{\seqactions}) \wedge \goal(m) \right) \geq \goalprob.
\end{equation}
Note also that having the sets of all (positively/negatively) known
propositions at all time steps up to $m$ allows us to {\em significantly}
simplify the CNF formula $\ourf(\belief_{\seqactions}) \wedge G(m)$ by
inserting into it the corresponding values of known propositions.

After evaluating the considered action sequence $\seqactions$, if we
get $\belief_{\seqactions}(\goal) \geq \goalprob$, then we are
done. Otherwise, the forward search continues, and the actions that
are applicable in $\belief_{\seqactions}$ (and thus used to generate
the successor belief states) are actions whose preconditions are all
known in $\belief_{\seqactions}$.


\section{Heuristic Function}
\label{ss:heuristic}

The key component of any heuristic search procedure is the heuristic
function. The quality (informedness) and computational cost of that
function determine the performance of the search. The heuristic
function is usually obtained from solutions to a relaxation of the
actual problem of interest~\cite{Pearl.heuristics,russell-norvig}. In
classical planning, a successful idea has been to use a relaxation
that ignores the delete effects of the
actions~\cite{drew,bonet:geffner:ai-01,hoffmann:nebel:jair-01}. In
particular, the heuristic of the \ff\ planning system is based on the
notion of {\em relaxed plan}, which is a plan that achieves the goals
while assuming that all delete lists of actions are empty. The relaxed
plan is computed using a Graphplan-style~\cite{blum:furst:ai-97}
technique combining a forward chaining graph construction phase with a
backward chaining plan extraction phase. The heuristic value $h(w)$
that \ff\ provides to a world state $w$ encountered during the search
is the length of the relaxed plan from $w$. In \cff, this methodology
was extended to the setting of conformant planning under initial state
uncertainty (without uncertainty about action effects). Herein, we
extend \cff's machinery to handle probabilistic initial states and
effects. Section~\ref{ss:heuristic0} provides background on the
techniques used in \ff\ and \cff, then Sections~\ref{ss:heuristic1}
and~\ref{ss:heuristic2} detail our algorithms for the forward and
backward chaining phases in \pff, respectively. These algorithms for the two phases of the \pff\ heuristic computation are illustrated on our running example in Sections~\ref{ss:ex-prpg} and~\ref{ss:ex-extract}, respectively.

\subsection{\ff\ and \cff}
\label{ss:heuristic0}

We specify how relaxed plans are computed in \ff; we provide a coarse
sketch of how they are computed in \cff. The purpose of the latter is
only to slowly prepare the reader for what is to come: \cff's
techniques are re-used for \pff\ anyway, and hence will be described in
full detail as part of Sections~\ref{ss:heuristic1}
and~\ref{ss:heuristic2}.

Formally, relaxed plans in classical planning are computed as follows.
Starting from $w$, \ff\ builds a {\em relaxed planning graph\/} as a
sequence of alternating proposition layers $P(t)$ and action layers
$A(t)$, where $P(0)$ is the same as $w$, $A(t)$ is the set of all
actions whose preconditions are contained in $P(t)$, and $P(t+1)$ is
obtained from $P(t)$ by including the add effects (with fulfilled
conditions) of the actions in $A(t)$.  That is, $P(t)$ always contains
those facts that will be true if one would execute (the relaxed
versions of) all actions at the earlier layers up to $A(t-1)$.  The
relaxed planning graph is constructed either until it reaches a
propositional layer $P(m)$ that contains all the goals, or until the
construction reaches a fixpoint $P(t) = P(t+1)$ without reaching the
goals. The latter case corresponds to (all) situations in which a
relaxed plan does not exist, and because existence of a relaxed plan
is a necessary condition for the existence of a real plan, the state $w$
is excluded from the search space by setting $h(w)=\infty$. In the
former case of $G \subseteq P(m)$, a relaxed plan is a subset of
actions in $A(1),\ldots,A(m)$ that suffices to achieve the goals
(under ignoring the delete lists), and it can be extracted by a simple
backchaining loop: For each goal in $P(m)$, select an action in
$A(1),\ldots,A(m)$ that achieves this goal, and iterate the process by
considering those actions' preconditions and the respective effect
conditions as new subgoals.  The heuristic estimate $h(w)$ is then set
to the length of the extracted relaxed plan, that is, to the number of
actions selected in this backchaining process.

Aiming at extending the machinery of \ff\ to conformant planning, in
\cff,~\citeA{cff} suggested to extend the
relaxed planning graph with additional fact layers $uP(t)$ containing
the facts {\em unknown} at time $t$, and then to reason about when
such unknown facts become known in the relaxed planning graph.
As the complexity of this type of reasoning is prohibitive, \cff\
further relaxes the planning task by ignoring not only the delete
lists, but also all but one of the unknown conditions of each action
effect.  That is, if action $a$ appears in layer $A(t)$, and for
effect $e$ of $a$ we have $\con(e) \subseteq P(t) \cup uP(t)$ and
$\con(e) \cap uP(t) \neq \emptyset$, then $\con(e) \cap uP(t)$ is
arbitrarily reduced to contain exactly one literal, and reasoning is
done as if $\con(e)$ had this reduced form from the beginning.

This relaxation converts implications $(\bigwedge_{c \in \con(e) \cap
uP(t)} c(t)) \implication q(t+1)$
that the action effects induce between unknown propositions into their
2-projections that take the form of {\em binary implications} $c(t)
\implication q(t+1)$, for arbitrary $c \in \con(e) \cap uP(t)$.  Due
to the layered structure of the planning graph, the set of all these
binary implications $c(t) \implication q(t+1)$ can be seen as forming
a directed acyclic graph $\mbox{\sl Imp}$. Under the given
relaxations, this graph captures exactly all dependencies between the
truth of propositions over time. Hence, checking whether a proposition
$q$ becomes known at time $t$ can be done as follows. First, backchain
over the implication edges of $\mbox{\sl Imp}$ that end in $q(t)$, and
collect the set {\sl support}$(q(t))$ of leafs\footnote{Following the
\cff\ terminology, by ``leafs'' we refer to the nodes having zero
in-degree.} that are reached. Then, if $\Phi$ is the CNF formula
describing the possible initial states, test by a SAT check whether
\[
\Phi \rightarrow \bigvee_{l \in \mbox{\scriptsize \sl support}(q(t))} l
\]
This test will succeed if and only if at least one of the leafs in {\sl
support}$(q(t))$ is true in every possible initial state. Under the
given relaxations, this is the case if and only if, when applying all actions in
the relaxed planning graph, $q$ will always be true at time
$t$.\footnote{Note here that it would be possible to do a full SAT
check, without any 2-projection (without relying on $\mbox{\sl Imp}$),
to see whether $q$ becomes known at $t$.  However, as indicated above,
doing such a full check for every unknown proposition at every level
of the relaxed planning graph for every search state would very likely
be too expensive, computationally.}

The process of extracting a relaxed plan from the constructed
conformant relaxed planning graph is an extension of \ff's respective
process with machinery that selects actions responsible for relevant
paths in $\mbox{\sl Imp}$. The overall \cff\ heuristic machinery is
sound and complete for relaxed tasks, and yields a heuristic function
that is highly informative across a range of challenging
domains~\cite{cff}.

In this work, we adopt \cff's relaxations, ignoring the delete lists
of the action effects, as well as all but one of the propositions in
the effect's condition. Accordingly, we adopt the following notations
from \cff. Given a set of actions $A$, we denote by $\onecondrel$ any
function from $A$ into the set of all possible actions, such that
$\onecondrel$ maps each $a \in A$ to the action similar to $a$ but
with empty delete lists and with all but one conditioning propositions
of each effect removed; for $\onecondrel(a)$, we write
$a\onecondrel$. By $A\onecondrel$ we denote the action set obtained by
applying $\onecondrel$ to all the actions of $A$, that is,
$A\onecondrel = \left\{ a\onecondrel \mid a \in A \right\}$.  For an
action sequence $\seqactions$ we denote by $\seqactions\onecondrel$
the sequence of actions obtained by applying $\onecondrel$ to every
action along $\seqactions$, that is,
\[
   \seqactions\onecondrel = 
   \begin{cases}
   \langle\rangle, & \seqactions = \langle\rangle\\
   \langle a\onecondrel \rangle \cdot \seqactions'\onecondrel, & \seqactions = \langle a \rangle \cdot \seqactions'\\
   \end{cases}.
\]
For a probabilistic planning task
$(\actions,\initprob,\goal,\goalprob)$, the task
$(\actions\onecondrel,\initprob,\goal,\goalprob)$ is called a
relaxation of $(\actions,\initprob,\goal,\goalprob)$. Finally, if
$\seqactions\onecondrel$ is a plan for
$(\actions\onecondrel,\initprob,\goal,\goalprob)$, then $\seqactions$
is called a relaxed plan for $(\actions,\initprob,\goal,\goalprob)$.

In the next two sections we describe the machinery underlying the
\pff\ heuristic estimation. Due to the similarity between the
conceptual relaxations used in \pff\ and \cff, \pff\ inherits almost
all of \cff's machinery. Of course, the new contributions are those
algorithms dealing with probabilistic belief states and probabilistic
actions.


\subsection{Probabilistic Relaxed Planning Graphs}
\label{ss:heuristic1}

Like \ff\ and \cff, \pff\ computes its heuristic function in two
steps, the first one chaining forward to build a relaxed planning
graph, and the second step chaining backward to extract a relaxed plan. In
this section, we describe in detail \pff's forward chaining step,
building a {\em probabilistic relaxed planning graph} (or PRPG, for
short). In Section~\ref{ss:heuristic2}, we then show how one can
extract a (probabilistic) relaxed plan from the PRPG. We provide a
detailed illustration of the PRPG construction process on the basis of our running example; since the illustration is lengthy, it is moved to  a separate Section~\ref{ss:ex-prpg}.

The algorithms building a PRPG are quite involved; it is instructive
to first consider (some of) the key points before delving into the
details. The main issue is, of course, that we need to extend \cff's
machinery with the ability to determine when the goal set is
sufficiently {\em likely}, rather than when it is known to be true for
sure.  To achieve that, we must introduce into relaxed planning some
effective reasoning about both the probabilistic initial state, and
the effects of probabilistic actions. It turns out that such a
reasoning can be obtained by a certain {\em weighted} extension of the
implication graph. In a nutshell, if we want to determine how likely
it is that a fact $q$ is true at a time $t$, then we propagate certain weights 
backwards through the implication graph, starting in $q(t)$; the
weight of $q(t)$ is set to $1$, and the weight for any $p(t')$ gives
an estimate of {\em the probability of achieving $q$ at $t$ given that
$p$ holds at $t'$}. Computing this probability exactly would, of
course, be too expensive. Our estimation is based on {\em assuming
independence of the various probabilistic events involved.} This is a
choice that we made very carefully; we experimented widely with
various other options before deciding in favor of this technique.

Any simplifying assumption in the weight propagation constitutes, of
course, another relaxation, on top of the relaxations we already
inherited from \cff. The particularly problematic aspect of assuming
independence is that it is not an under-estimating technique.  The
actual weight of a node $p(t')$ -- the probability of achieving $q$ at
$t$ given that $p$ holds at $t'$ -- may be lower than our estimate. In
effect, the PRPG may decide wrongly that a relaxed plan exists: even
if we execute all relaxed actions contained in the successful PRPG,
the probability of achieving the goal by this execution may be less
than the required threshold. In other words, we lose the soundness
(relative to relaxed tasks) of the relaxed planning process.

We experimented with an alternative weight propagation method, based
on an opposite assumption, that the relevant probabilistic events
always co-occur, and that hence the weights must be propagated
according to simple maximization operations. This propagation method
yielded very uninformative heuristic values, and hence inacceptable
empirical behaviour of \pff, even in very simple benchmarks. In our
view, it seems unlikely that an under-estimating yet informative and
efficient weight computation exists. We further experimented with some
alternative non under-estimating propagation schemes, in particular
one based on assuming that the probabilistic events are completely
disjoint (and hence weights should be {\em added}); these schemes gave
better performance than maximization, but lagged far behind the
independence assumption in the more challenging benchmarks.

Let us now get into the actual algorithm building a PRPG. A coarse outline
of the algorithm is as follows. The PRPG is built in a layer-wise
fashion, in each iteration extending the PRPG, reaching up to time
$t$, by another layer, reaching up to time $t+1$. The actions in the
new step are those whose preconditions are known to hold at
$t$. Effects conditioned on unknown facts (note here the reduction of
effect conditions to a single fact) constitute new edges in the
implication graph. In difference to \cff, we don't obtain a single
edge from condition to add effect; instead, we obtain edges from the
condition to ``chance nodes'', where each chance node represents a
probabilistic outcome of the effect; the chance nodes, in turn, are
linked by edges to their respective add effects. The weights of the chance
nodes are set to the probabilities of the respective outcomes, 
the weights of all other nodes are set to $1$. These weights are ``static weights''
which are not ``dynamically'' modified by weight propagation; rather,
the static weights form an input to the propagation.

Once all implication graph edges are inserted at a layer, the
algorithm checks whether any new facts become known. This check is
done very much like the corresponding check in \cff, by testing
whether the disjunction of the support leafs for a proposition $p$ at
$t+1$ is implied by the initial state formula. The two differences to
\cff\ are: (1) Only leafs are relevant whose dynamic weight is $1$
(otherwise, achieving a leaf is not guaranteed to accomplish $p$ at
$t+1$). (2) Another reason for $p$ to become known may be that all
outcomes of an unconditional effect (or an effect with known
condition) result in achievement of $p$ at time $t+1$. We elegantly
formulate the overall test by a single implication test over support
leafs whose dynamic weight equals their own weight.

Like \ff's and \cff's algorithms, the PRPG process has two termination
criteria. The PRPG terminates positively if the goal probability is
high enough at time $t$; the PRPG terminates negatively if, from $t$
to $t+1$, nothing has changed that may result in a higher goal
propability at some future $t'$. The goal probability in a layer $t$
is computed based on weighted model counting over a formula derived
from the support leafs of all goals not known to be true. The criteria
for negative termination check: whether any new facts have become
known or unknown (not negatively known); whether any possibly relevant
new support leafs have appeared; and whether the goal probability has
increased. If neither is the case, then we can stop safely---if the
PRPG terminates unsuccessfully then we have a guarantee that there is
no relaxed plan, and that the corresponding belief is hence a dead
end.

\begin{figure}[thb]
{\small
\begin{tabbing}
{\bf procedure} \= \underline{\sf build-PRPG}$(\plan, \actions, \cinitial,
\goal, \goalprob, \onecondrel)$,\\
\> returns \= a Bool saying if there is a relaxed plan for the belief
state\\
\> \> given by $\plan = \langle a^{-m}, \dots, a^{-1} \rangle$, and\\
\> builds data structures from which a relaxed plan can be extracted\\
$\Phi := \cinitial$, $\mbox{\sl Imp} := \emptyset$\\
$P(-m) := \{ p \mid p \mbox{ is known in
  $\Phi$} \}$, $uP(-m) := \{ p \mid p \mbox{ is unknown in
  $\Phi$} \}$\\
{\bf for} \= $t := -m \dots -1$ {\bf do}\\
\> $A(t)$ := $\{ a^{t}|_{1}^+\} \cup NOOPS$\\
\> ${\mathsf{build\mbox{-}timestep}}(t, A(t))$\\
{\bf endfor}\\
$t := 0$\\
{\bf while} \getp$(t, \goal) < \goalprob$ {\bf do}\\
\> $A(t)$ := $\{ a|_{1}^+ \mid a \in \actions, \pre(a) \subseteq P(t)
\} \cup NOOPS$\\
\> ${\mathsf{build\mbox{-}timestep}}(t, A(t))$\\
\> {\bf if} \= $P(t+1) = P(t)$ and\\
\> \> $uP(t+1) = uP(t)$ and\\
\> \> $\forall p \in uP(t+1): uP(-m)\cap\support(p(t+1)) = uP(-m)\cap\support(p(t))$ and\\
\> \>  \getp$(t+1, \goal) =$ \getp$(t, \goal)$ {\bf then}\\
\> \> {\bf return} FALSE\\
\> {\bf endif}\\
\> $t$ := $t + 1$\\
{\bf endwhile}\\
$T$ := $t$, {\bf return} TRUE
\end{tabbing}}
\caption{\label{code:build-PRPG} Main routine for building a probabilistic relaxed planning graph (PRPG).}
\end{figure}

Let us get into the details. Figure~\ref{code:build-PRPG} depicts the
main routine for building the PRPG for a belief state
$\belief_{\seqactions}$.  As we already specified, the sets $P(t)$,
$uP(t)$, and $A(t)$ contain the propositions that are known to hold at
time $t$ (hold at $t$ with probability $1$), the propositions that are
unknown to hold at time $t$ (hold at $t$ with probability less than
$1$ but greater than $0$), and actions that are known to be applicable
at time $t$, respectively. The layers $t \geq 0$ of PRPG capture
applying the relaxed actions starting from
$\belief_{\seqactions}$. The layers $-m$ to $-1$ of PRPG correspond to
the $m$-step action sequence $\seqactions$ leading from the initial
belief state to the belief state in question
$\belief_{\seqactions}$. We inherit the latter technique from \cff; in
a sense, the PRPG ``reasons about the past''. This may look confusing
at first sight, but it has a simple reason. Imagine the PRPG starts at
level $0$ instead. Then, to check whether a proposition becomes known,
we have to do SAT tests regarding support leafs {\em against the
belief state formula, $\ourf(\belief_{\seqactions})$, instead of the
initial state formula} (similarly for weighted model counting to test
whether the goal is likely enough). Testing against
$\ourf(\belief_{\seqactions})$ is possible, but very expensive
computationally.\footnote{In \cff, this configuration is implemented
as an option; it significantly slows down the search in most domains,
and brings advantages only in a few cases.} The negative-index layers
chain the implication graph all the way back to the initial state, and
hence enable us to perform SAT tests against the -- typically much
smaller -- initial state formula.


Returning to Figure~\ref{code:build-PRPG}, the PRPG is initialized
with an empty implication set {\sl Imp}, $P(-m)$ and $uP(-m)$ are
assigned the propositions that are known and unknown in the initial
belief state, and a weighted CNF formula $\Phi$ is initialized with
$\cinitial$. $\Phi$ is the formula against which implication/weighted
model checking tests are run when asking whether a proposition becomes
known/whether the goal is likely enough. While the PRPG is built,
$\Phi$ is incrementally extended with further clauses to capture the behavior of different effect outcomes.

The {\bf for} loop builds the sets $P$ and $uP$ for the $\plan$'s time
steps $-m \dots -1$ by iterative invocation of the \buildts\ procedure
that each time expands PRPG by a single time level.  At each iteration
$-m \leq t \leq -1$, the sets $P(t+1)$ and $uP(t+1)$ are made to
contain the propositions that are known/unknown after applying the
relaxed version of the action $a\tind{t}\in\plan$ (remember that
$\plan = \langle a\tind{1},\ldots,a\tind{m} \rangle$). To simplify the
presentation, each action set $A(t)$ contains a set of dummy actions
$NOOPS$ that simply transport all the propositions from time layer $t$
to time layer $t+1$. More formally, $NOOPS = \left\{ \noop_{p} \mid p
\in \fluents \right\}$, where $\pre(\noop_{p}) = \emptyset$,
$\effs(\noop_{p}) = \{(\{p\},\{\poutcome\})\}$, and $\poutcome =
(1.0,\{p\},\emptyset)\})$.

The subsequent {\bf while} loop constructs the relaxed planning graph
from layer $0$ onwards by, again, iterative invocation of the
\buildts\ procedure. The actions in each layer $t \geq 0$ are
relaxations of those actions whose preconditions are known to hold at
time $t$ with certainty. This iterative construction is controlled by
two termination tests. First, if the goal is estimated to hold at
layer $t$ with probability higher than $\goalprob$, then we know that
a relaxed plan estimate can be extracted.
Otherwise, if the graph reaches a fix point, then we know that no
relaxed (and thus, no real) plan from $\initbelief$ exists. We
postpone the discussion of these two termination criteria,
and now focus on the time layer construction procedure \buildts.

\begin{figure}[htb]
{\small
\begin{tabbing}
{\bf procedure} \= \underline{\sf build-timestep}$(t, A)$,\\
\> builds $P(t+1)$, $uP(t+1)$, and the implication edges from $t$ to
$t+1$,\\
\> as induced by the action set $A$\\
\hspace{0.4cm} \= \hspace{0.25cm} \= \hspace{0.3cm} \= \hspace{0.3cm} \= \hspace{0.3cm} \= \hspace{0.3cm} \=\kill
$P(t+1) := P(t), uP(t+1) := \emptyset$\\
{\bf for} \= all effects $e$ of an action $a \in A$, 
$\con(e)\in P(t)\cup uP(t)$ {\bf do}\\
\> {\bf for} all $\poutcome \in \poutcomeset(e)$ {\bf do}\\
\> \> $uP(t+1) := uP(t+1) \cup add(\poutcome)$\\
\> \> introduce new fact $\newchancep{\poutcome(t)}$ with $\weight(\newchancep{\poutcome(t)}) = \probe(\poutcome)$\\
\> \> $\mbox{\sl Imp} := \mbox{\sl Imp} \cup \{ (\newchancep{\poutcome(t)},p(t+1)) \mid p \in \add(\poutcome)\}$\\
\> {\bf endfor}\\
\> {\bf if} \= $\con(e) \in uP(t)$ {\bf then}\\
\> \> $\mbox{\sl Imp} := \mbox{\sl Imp} \cup \bigcup_{\poutcome \in \poutcomeset(e)}\{ (\con(e)(t),\newchancep{\poutcome(t)}) \}$\\
\>  {\bf else}\\
\> \> $\Phi := \Phi \wedge \left(\vee_{\poutcome \in \poutcomeset(e)}{\poutcome(t)}\right) \wedge\bigwedge_{\poutcome,\poutcome' \in \poutcomeset(e)}\left(\neg \poutcome(t) \vee \neg \poutcome'(t)\right)$\\
\> {\bf endif}\\
{\bf endfor}\\
{\bf for} \= all $p \in uP(t+1)$ {\bf do}\\
\> ${\mathsf{build\mbox{-}w\mbox{-}impleafs}}(p(t+1),\mbox{\sl Imp})$\\
\> $\mbox{\sl support}(p(t+1)) := \{l \mid l \in \leafs{\mbox{\sl Imp}_{\rightarrow p(t+1)}} \wedge\subweight_{p(t+1)}(l) = \weight(l)\}$\\
\> {\bf if} $\Phi \rightarrow \bigvee_{l \in \mbox{\scriptsize \sl support}(p(t+1))} l$ {\bf then} $P(t+1) :=  P(t+1) \cup \{ p \}$ {\bf endif}\\
{\bf endfor}\\
$uP(t+1) := uP(t+1) \setminus P(t+1)$
\end{tabbing}}
\caption{\label{code:build-timestep}Building a time step of the PRPG.}
\end{figure}
%

The \buildts\ procedure is shown in Figure~\ref{code:build-timestep}.
The first {\bf for} loop of \buildts\ proceeds over all outcomes of
(relaxed) actions in the given set $A$ that may occur at time $t$. For
each such probabilistic outcome we introduce a new chance proposition
weighted by the conditional likelihood of that outcome.\footnote{Of
course, in our implementation we have a special case treatment for
deterministic actions, using no chance nodes (rather than a single
``chance node'' with static weight $1$).}
Having that, we extend {\sl Imp} with binary implications from this
new chance proposition to the add list of the outcome. If we are
uncertain about the condition $\con(e)$ of the corresponding effect at
time $t$, that is, we have $\con(e) \in uP(t)$, then we also add implications from $\con(e)$ to the chance propositions created for the outcomes of $e$. Otherwise, if $\con(e)$
is known at time $t$, then there is no uncertainty about our ability
to make the effect $e$ to hold at time $t$. In this case, we do not ``ground'' the chance propositions created for the outcomes of $e$ into the implication graph, but simply extend 
the running formula $\Phi$ with clauses capturing the ``exactly one''
relationship between these chance propositions corresponding to the
alternative outcomes of $e$ at time $t$. This way, the probabilistic
uncertainty about the outcome of $e$ can be treated as if being a
property of the initial belief state $\initbelief$; This is the only
type of knowledge we add into the knowledge base formula $\Phi$ after
initializing it in \buildprpg\ to $\cinitial$.

%

\begin{table}[thb]
\begin{center}
\setlength\extrarowheight{1pt}
\begin{tabular}{|c|l|}
\hline
\emph{Notation} & \emph{Description}\\
\hline
$\mbox{\sl Imp}_{v\rightarrow u}$ &
The graph containing
exactly all the paths from node $v$ to node $u$ in $\mbox{\sl Imp}$.\\
\hline
$\mbox{\sl
Imp}_{\rightarrow u}$ & The subgraph of $\mbox{\sl Imp}$ formed
by node $u$ and all the ancestors of $u$ in $\mbox{\sl Imp}$.\\
\hline
$\leafs{\mbox{\sl Imp}'}$ &
The set of all zero in-degree nodes in the subgraph $\mbox{\sl
Imp}'$ of $\mbox{\sl Imp}$.\\
\hline
$\grapheffects{\mbox{\sl Imp}'}$ &
The set of time-stamped action effects 
responsible for the implication edges\\
&  of the subgraph
$\mbox{\sl Imp}'$ of $\mbox{\sl Imp}$.\\
\hline
\end{tabular}
\end{center}
\caption{\label{table:implicationgraphnotation}Overview of notations
around the implication graph.}
\end{table}

The second {\bf for} loop checks whether a 
proposition $p$, unknown at time $t$, becomes known at time $t+1$. This part of the
\buildts\ procedure is somewhat more involved;
Table~\ref{table:implicationgraphnotation} provides an overview of the
main notations used in the follows when discussing the various uses
of the implication graph $\mbox{\sl Imp}$.


First thing in the second {\bf for} loop of \buildts, a call to
\buildwil\ procedure associates each node $v(t')$ in $\mbox{\sl
Imp}_{\rightarrow p(t+1)}$ with an estimate
$\subweight_{p(t+1)}(v(t'))$ on the probability of achieving $p$ at
time $t+1$ by the effects $\grapheffects{ \mbox{\sl
Imp}_{v(t')\rightarrow p(t+1)} }$, given that $v$ holds at time
$t'$. In other words, the dynamic weight (according to $p(t+1)$) of
the implication graph nodes is computed. Note that $v(t')$ can be
either a time-stamped proposition $q(t')$ for some $q \in \fluents$,
or a chance proposition $\poutcome(t')$ for some probabilistic outcome
$\poutcome$.

We will discuss the \buildwil\ procedure in detail below. For
proceeding to understand the second {\bf for} loop of \buildts, the
main thing we need to know is the following lemma:

\begin{lemma}
\label{lemma:subweight-complete}
Given a node $v(t')\in\mbox{\sl Imp}_{\rightarrow p(t+1)}$, we have
$\subweight_{p(t+1)}\left(v(t')\right) = \weight\left(v(t')\right)$ if
and only if, given $v$ at time $t'$, the sequence of effects
$\grapheffects{\mbox{\sl Imp}_{v(t')\rightarrow p(t+1)}}$ achieves $p$
at $t+1$ with probability $1$.
\end{lemma}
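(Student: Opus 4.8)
The plan is to prove the equivalence by induction over the acyclic graph $\mbox{\sl Imp}_{\rightarrow p(t+1)}$, processing its nodes in reverse topological order, i.e.\ in order of increasing distance to the sink $p(t+1)$. This matches the way \buildwil\ assigns the dynamic weights: it sets $\subweight_{p(t+1)}(p(t+1)) = 1$ and computes $\subweight_{p(t+1)}(v(t'))$ from the already-computed dynamic weights of the successors of $v(t')$ (the nodes one step closer to $p(t+1)$). I abbreviate $\subweight_{p(t+1)}$ by $\subweight$ and write $C(v(t'))$ for the Boolean ``given $v$ at $t'$, the effects $\grapheffects{\mbox{\sl Imp}_{v(t')\rightarrow p(t+1)}}$ achieve $p$ at $t+1$ with probability $1$'', so the goal is $\subweight(v(t')) = \weight(v(t')) \iff C(v(t'))$ for every node. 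As a preliminary invariant I would first establish, by the same induction, the bound $\subweight(v(t')) \le \weight(v(t'))$; this makes ``$\subweight = \weight$'' a genuine ceiling condition and keeps all the products and sums below inside $[0,1]$. The base case $v(t') = p(t+1)$ is immediate: $\subweight = 1 = \weight(p(t+1))$, and $C(p(t+1))$ holds trivially.

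For the inductive step I distinguish the two node types. For a \emph{chance node} $\poutcome(t')$ the successors in $\mbox{\sl Imp}$ are exactly the add-effect propositions $q(t'+1)$, $q\in\add(\poutcome)$, and the \buildwil\ rule has the form
\[ \subweight(\poutcome(t')) \;=\; \weight(\poutcome(t'))\cdot\Bigl(1 - \prod_{q\in\add(\poutcome)}\bigl(1 - \subweight(q(t'+1))\bigr)\Bigr), \]
with static weight $\weight(\poutcome(t')) = \probe(\poutcome)$ and each successor carrying $\weight(q(t'+1)) = 1$. Since a product of numbers in $[0,1]$ vanishes iff some factor is zero, $\subweight(\poutcome(t')) = \weight(\poutcome(t'))$ holds iff $\subweight(q(t'+1)) = 1 = \weight(q(t'+1))$ for some add effect $q$, which by the induction hypothesis is $\exists q\in\add(\poutcome):C(q(t'+1))$. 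Semantically, occurrence of $\poutcome$ makes \emph{all} of $\add(\poutcome)$ hold at $t'+1$ with certainty, and I would argue that $p$ is then guaranteed iff at least one such $q$ already guarantees $p$: the downstream effects attached to distinct add effects resolve independently, so whenever no single route guarantees $p$ there is a positive-probability joint outcome on which $p$ fails. This gives $C(\poutcome(t')) \iff \exists q:C(q(t'+1))$ and closes the case.

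For a \emph{condition node} $q(t')$ the successors are the chance nodes $\poutcome(t')$ of the effects $e$ with reduced condition $\con(e) = q$ lying on a path to $p(t+1)$; here \buildwil\ combines the mutually exclusive outcomes of one effect \emph{additively} and the distinct, independent effects by a complement-product,
\[ \subweight(q(t')) \;=\; 1 - \prod_{e\,:\,\con(e)=q}\Bigl(1 - \sum_{\poutcome\in\poutcomeset(e)}\subweight(\poutcome(t'))\Bigr), \]
with $\weight(q(t')) = 1$. Thus $\subweight(q(t')) = 1$ iff some effect $e$ has $\sum_{\poutcome\in\poutcomeset(e)}\subweight(\poutcome(t')) = 1$; and because $\sum_{\poutcome\in\poutcomeset(e)}\probe(\poutcome) = 1$ while each $\subweight(\poutcome(t')) \le \probe(\poutcome)$ by the preliminary bound, this sum reaches $1$ iff $\subweight(\poutcome(t')) = \probe(\poutcome) = \weight(\poutcome(t'))$ for \emph{every} outcome of $e$, i.e.\ (by the chance-node case) iff all outcomes of $e$ guarantee $p$. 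On the semantic side, given $q$ every such effect fires but its outcome is uncontrolled, so $p$ is guaranteed iff some single effect has all of its outcomes guaranteeing $p$; the same independence-across-effects argument shows combining several effects cannot help. This yields $\subweight(q(t')) = \weight(q(t')) \iff C(q(t'))$ and completes the induction; restricting the proved equivalence to leaf nodes then justifies the definition of $\support(p(t+1))$ in \buildts.

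I expect the main obstacle to be precisely what the lemma is about: $\subweight$ is only an \emph{independence-based estimate} of the true achievement probability and, as stressed in the text, is in general neither exact nor even an under-estimate. The crux is therefore to show the estimate is nonetheless \emph{exact at the boundary}, meeting its ceiling $\weight(v(t'))$ exactly when the true probability is $1$. Two ingredients carry this and deserve careful statement: the algebraic facts that a complement-product $1 - \prod_i(1-w_i)$ equals $1$ iff some $w_i = 1$, and that a convex combination $\sum_i\probe(\poutcome_i)w_i$ with $\sum_i\probe(\poutcome_i)=1$ and $w_i\le 1$ equals $1$ iff \emph{all} $w_i = 1$; and, on the semantic side, the combinatorial fact that probability-$1$ achievement along $\mbox{\sl Imp}_{\rightarrow p(t+1)}$ decomposes through the same AND/OR recursion --- an ``OR'' over parallel routes/effects and an ``AND'' over the uncontrolled outcomes within one effect. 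The reason these decompositions coincide is exactly the independence of distinct effects: it forces every joint combination of outcomes to have positive probability, so a ``bad'' combination in which no parallel route succeeds is always realizable, and correlation between routes can never manufacture a guarantee that the per-route analysis misses. Making this correspondence precise --- and checking that it also subsumes the $NOOPS$ persistence edges, which are merely deterministic outcomes of weight $1$ --- is the technical heart of the argument; the rest is routine bookkeeping over the two node types.
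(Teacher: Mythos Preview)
Your proposal is correct and follows essentially the same route as the paper's proof: backward induction over the layers of $\mbox{\sl Imp}_{\rightarrow p(t+1)}$, splitting on chance nodes versus fact nodes, and reducing $\subweight = \weight$ to ``some $r\in\add(\poutcome)$ has $\subweight(r(t'+1))=1$'' in the first case and ``some effect $e$ has $\subweight(\poutcome(t'))=\weight(\poutcome(t'))$ for \emph{all} $\poutcome\in\poutcomeset(e)$'' in the second. Your version is in fact more explicit than the paper's in two respects: you isolate the preliminary bound $\subweight\le\weight$ (which the paper uses tacitly when arguing that the inner sum reaches $1$ only if every summand is at its maximum), and you spell out the independence-of-effects argument needed for the semantic ``only if'' direction at fact nodes, which the paper's proof asserts but does not justify in detail.
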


In words, $v(t')$ leads to $p(t+1)$ with certainty iff the dynamic
weight of $v(t')$ equals its static weight. This is a simple
consequence of how the weight propagation is arranged; it should hold
true for any reasonable weight propagation scheme (``do not mark a node as certain if it is not''). A full proof of the lemma appears in
Appendix~\ref{ss:proofs} on pp.~\pageref{lemma:subweight-completeA}.

Re-consider the second {\bf for} loop of \buildts. What happens is the
following. Having finished the \buildwil\ weight propagation for $p$
at time $t+1$, we
\begin{enumerate}
\item collect all the leafs $\mbox{\sl support}(p(t+1))$ of $\mbox{\sl
Imp}_{\rightarrow p(t)}$ that meet the criteria of
Lemma~\ref{lemma:subweight-complete}, and
\item check (by a call to a SAT solver) whether the knowledge-base
formula $\Phi$
implies the disjunction of these leafs.
\end{enumerate}
If the implication holds, then the examined fact $p$ at time $t$ is
added to the set of facts known at time $t$. Finally, the procedure
removes from the set of facts that are known to possibly hold at time
$t+1$ all those facts that were just proven to hold at time $t+1$ with
certainty.

To understand the above, consider the following. With
Lemma~\ref{lemma:subweight-complete}, $\mbox{\sl support}(p(t+1))$
contains exactly the set of leafs achieving which will lead to
$p(t+1)$ with certainty. Hence we can basically use the same
implication test as in \cff. Note, however, that the word
``basically'' in the previous sentence hides a subtle but important
detail. In difference to the situation in \cff, $\mbox{\sl
support}(p(t+1))$ may contain two kinds of nodes: (1) proposition
nodes at the start layer of the PRPG, i.e., at layer $-m$
corresponding to the initial belief; (2) chance nodes at later layers
of the PRPG, corresponding to outcomes of effects that have no unknown
conditions. This is the point where the discussed above updates on the formula $\Phi$
are needed---those keep track of alternative effect outcomes. Hence
testing $\Phi \rightarrow \bigvee_{l \in \mbox{\scriptsize \sl
support}(p(t+1))} l$ is the same as testing whether either: (1) $p$ is
known at $t+1$ because it is always triggered with certainty by at
least one proposition true in the initial world; or (2) $p$ is known
at $t+1$ because it is triggered by all outcomes of an effect that
will appear with certainty. We get the following result:

\begin{lemma}
\label{lemma:disjunction}
Let $(\actions,\initBN,\goal,\goalprob)$ be a probabilistic planning
task, $\seqactions$ be a sequence of actions applicable in
$\initprob$, and $\onecondrel$ be a relaxation function for $A$. For
each time step $t \geq -m$, and each proposition $p \in \fluents$, if
$P(t)$ is constructed by \buildprpg$(\seqactions, \actions,
\cinitial,\goal,\goalprob,\onecondrel)$, then $p$ at time $t$ can be
achieved by a relaxed plan starting with $\seqactions\onecondrel$
\begin{enumerate}[(1)]
\item with probability $> 0$ (that is, $p$ is not negatively known at
time $t$) if and only if $p \in uP(t) \cup P(t)$, and
\item with probability $1$ (that is, $p$ is known at time $t$) if and
only if $p \in P(t)$.
\end{enumerate}
\end{lemma}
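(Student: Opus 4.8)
The plan is to prove both biconditionals simultaneously by induction on the time step $t$, exploiting the fact that the two claims are tightly coupled: at each layer \buildts\ first collects a candidate set of reachable facts (governing part~(1)) and then refines that set by the implication test (governing part~(2)). For the base case $t=-m$ no action has yet been applied, so ``achievable by a relaxed plan starting with $\seqactions\onecondrel$'' reduces to the probability of $p$ in the initial belief state $\initbelief$. Since \buildprpg\ initializes $P(-m)$ and $uP(-m)$ to the propositions that are, respectively, known and unknown in $\Phi=\cinitial$, and since by the correctness of the initial-belief encoding (Corollary~\ref{prop:wcnf-correct} specialized to the empty action sequence) ``known in $\cinitial$'' means $\initbelief(p)=1$ while ``unknown'' means $0<\initbelief(p)<1$, both biconditionals hold at $t=-m$.

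For the inductive step I would assume both claims at time $t$ and analyze a single call of \buildts. I first treat part~(1). The key observation is that $uP(t+1)\cup P(t+1)$ equals $P(t)$ (carried over through the NOOP actions) together with the union of all $\add(\poutcome)$ over outcomes $\poutcome$ of effects $e$ whose single relaxed condition satisfies $\con(e)\in P(t)\cup uP(t)$; the subsequent implication test and the final subtraction $uP(t+1):=uP(t+1)\setminus P(t+1)$ only redistribute elements between $P(t+1)$ and $uP(t+1)$ and leave their union unchanged. For soundness each such $p$ is obtained either by persistence of a positively reachable fact (induction hypothesis on $P(t)$) or by a positive-probability outcome $\probe(\poutcome)>0$ of an effect whose condition is positively reachable at $t$ (induction hypothesis on $P(t)\cup uP(t)$); since the relaxation only drops delete lists and surplus conditions, this witnesses a relaxed plan achieving $p$ at $t+1$ with positive probability. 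For completeness, any relaxed plan making $p$ true at $t+1$ with positive probability must either carry $p$ over from $t$ or trigger an outcome adding $p$ whose single condition is itself positively reachable at $t$, and the induction hypothesis places that condition in $P(t)\cup uP(t)$, so the corresponding effect is enumerated by the first {\bf for} loop and $p$ enters $uP(t+1)\cup P(t+1)$.

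The heart of the argument, and the step I expect to be the hardest, is part~(2): showing that the test $\Phi\rightarrow\bigvee_{l\in\support(p(t+1))}l$ succeeds exactly when $p$ is achievable with probability~$1$ at $t+1$. My plan is to read a satisfying assignment of $\Phi$ as a concrete scenario --- a possible initial world together with a selected outcome for every effect whose condition is already known, these being exactly the alternatives recorded into $\Phi$ by the {\bf else} branch of \buildts\ --- and then to invoke Lemma~\ref{lemma:subweight-complete} to interpret a support leaf: by definition $\support(p(t+1))$ consists of the zero in-degree nodes $l$ of $\mbox{\sl Imp}_{\rightarrow p(t+1)}$ with $\subweight_{p(t+1)}(l)=\weight(l)$, which by that lemma are precisely the leafs from which the effects on $\mbox{\sl Imp}_{l\rightarrow p(t+1)}$ force $p$ at $t+1$ with probability~$1$. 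These leafs are of two kinds --- initial-layer propositions reached by chaining the implication graph back to layer $-m$, and chance nodes of known-condition effects --- and in both cases the corresponding literal is a propositional variable constrained by $\Phi$. Hence $\Phi\rightarrow\bigvee l$ holds iff in every scenario consistent with the initial belief and the recorded outcome choices at least one certainty-forcing leaf is active, i.e.\ iff applying all relaxed actions yields $p$ at $t+1$ in every positive-probability scenario; invoking delete-relaxation monotonicity to equate ``there exists a relaxed plan achieving $p$ with probability~$1$'' with ``the all-actions relaxed plan does so'', this is exactly $\belief(p)=1$ at $t+1$ under the relaxation, which is what the PRPG computes.

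The main obstacles I anticipate are bookkeeping rather than conceptual. First, I must verify that, after chaining through the negative-index layers, the support leafs lie \emph{only} among the variables of $\cinitial$ and the known-condition chance propositions added to $\Phi$, so that the propositional test against $\Phi$ faithfully quantifies over all and only the positive-probability scenarios. Second, I must check the boundary interaction between parts~(1) and~(2): that the final line $uP(t+1):=uP(t+1)\setminus P(t+1)$ keeps the ``unknown'' set equal to the positively reachable facts that are not reachable with certainty, so that $p\in uP(t+1)$ precisely when $0<\belief(p)<1$ at $t+1$ and the ``negatively known'' case ($\belief(p)=0$) corresponds to $p\notin uP(t+1)\cup P(t+1)$. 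Both reduce to a careful but routine inspection of \buildts\ together with the already-established Lemma~\ref{lemma:subweight-complete}.
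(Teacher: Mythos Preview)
Your proposal is correct and follows essentially the same route as the paper's proof: induction on $t$, with the base case read off the initialization of $P(-m)$ and $uP(-m)$, part~(1) handled by tracing how the first {\bf for} loop of \buildts\ populates $uP(t+1)\cup P(t+1)$ (using NOOPs for persistence), and part~(2) handled by combining Lemma~\ref{lemma:subweight-complete} with the structure of $\Phi$ to interpret the implication test. The paper organizes part~(2) slightly differently---it splits ``if'' and ``only if'' explicitly and, for the ``only if'' direction, performs a three-way case split on the effects of $A(t-1)$ (all outcomes add $p$ with known condition; all outcomes add $p$ with unknown condition; everything else)---whereas you phrase the same content as ``every $\Phi$-scenario activates a certainty-forcing support leaf''; these are equivalent packagings of the same argument, and your scenario view is arguably cleaner.
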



This is a consequence of the arguments outlined above. The full proof
of Lemma~\ref{lemma:disjunction} is given in Appendix~\ref{ss:proofs}
on pp.~\pageref{lemma:disjunctionA}.

\begin{figure}[t]
{\small
\begin{tabbing}
{\bf procedure} \= \underline{\sf build-w-impleafs}\ $(p(t),\mbox{\sl Imp})$\\
\> top-down propagation of weights $\subweight_{p(t)}$ from $p(t)$ to all nodes in $\mbox{\sl Imp}_{\rightarrow p(t)}$\\
$\subweight_{p(t)}\left(p(t)\right)$ := 1\\
{\bf for} \= decreasing time steps $t' := (t-1) \dots (-m)$ {\bf do}\\
\> {\bf for} \= all \= chance nodes $\newchancep{\poutcome(t')} \in 
			\mbox{\sl Imp}_{\rightarrow p(t)}$ {\bf do}\\
\> \> $\flbweight := \prod_{r\in\add(\poutcome),r(t'+1)\in \mbox{\scriptsize \sl Imp}_{\rightarrow p(t)}}{\left[ 1- \subweight_{p(t)}\left(r(t'+1)\right) \right]}$\\
\> \> $\subweight_{p(t)}\left(\newchancep{\poutcome(t')}\right) := 
			\weight\left(\newchancep{\poutcome(t')}\right) \cdot (1 - \flbweight)$\\
\> {\bf endfor}\\
\> {\bf for} \= all fact nodes $q(t') \in \mbox{\sl Imp}_{\rightarrow p(t)}$ {\bf do}\\
\> \>  $\flbweight := 1$\\
\> \>  {\bf for} \= all $a \in A(t'), e \in \effs(a), \con(e)=q$ {\bf do}\\
\> \> \> $\flbweight := \flbweight \cdot \left[1 - \sum_{\poutcome\in \poutcomeset(e),\newchancep{\poutcome(t')} \in \mbox{\scriptsize \sl Imp}_{\rightarrow p(t)}}{\subweight_{p(t)}\left(\newchancep{\poutcome(t')}\right)} \right]$\\
\> \> {\bf endfor}\\
\> \> $\subweight_{p(t)}\left(q(t')\right) := 1 - \flbweight$ \\
\> {\bf endfor}\\
{\bf endfor}
\end{tabbing}}
\caption{\label{code:build-wimpleafs}The {\sl build-w-impleafs} procedure for weight back-propagation over the implication graph.}
\end{figure}

Let us now consider the weight-propagating\footnote{The weight propagation scheme of the \buildwil\ procedure is similar in nature to this used in the heuristics module of the recent probabilistic temporal planner Prottle of~\citeA{little:etal:aaai05}.} procedure \buildwil\  depicted in Figure~\ref{code:build-wimpleafs}.
This procedure performs a layered, top-down weight propagation from a given node\footnote{Note that the ``$t$'' here will be
instantiated with $t+1$ when called from \buildts.} $p(t)\in \mbox{\sl Imp}$ down to
the leafs of $\mbox{\sl Imp}_{\rightarrow p(t)}$.  This order of
traversal ensures that each node of $\mbox{\sl Imp}_{\rightarrow
p(t)}$ is processed only after all its successors in $\mbox{\sl
Imp}_{\rightarrow p(t)}$.  
For the chance nodes $\newchancep{\poutcome(t')}$, 
the dynamic weight 
$\subweight_{p(t)}\left(\newchancep{\poutcome(t')}\right)$
 is set to 
\begin{enumerate}
\item the probability that the outcome $\poutcome$ takes place at time
$t'$ given that the corresponding action effect $e(\poutcome)$ does
take place at $t'$, times
\item an estimate of the probability of achieving $p$ at time $t$ by
the effects $\grapheffects{\mbox{\sl
Imp}_{\newchancep{\poutcome(t')}\rightarrow p(t)}}$.
\end{enumerate}
The first quantity is given by the ``global'', static weight
$\weight\left(\newchancep{\poutcome(t')}\right)$ assigned to
$\newchancep{\poutcome(t')}$ in the first {\bf for} loop of \buildts.
The second quantity is derived from the dynamic weights 
$\subweight_{p(t)}\left(r(t'+1)\right)$ for $r \in \add(\poutcome)$,
computed in the previous iteration of the outermost {\bf for} loop of
\buildwil.  Making a heuristic assumption that the effect sets
$\grapheffects{\mbox{\sl Imp}_{r(t'+1)\rightarrow p(t)}}$ for
different $r\in\add(\poutcome)$ are all pairwise independent,
$\flbweight$ is then set to 
the probability of failure to achieve $p$ at $t$ by the effects
$\grapheffects{\mbox{\sl Imp}_{\newchancep{\poutcome(t')}\rightarrow
p(t)}}$. This computation of $\flbweight$ for
$\newchancep{\poutcome(t')}$ is decomposed over the artifacts of
$\poutcome$, and this is where the weight propagation starts taking
place. 
For the fact nodes $q(t')$, 
the dynamic weight $\subweight_{p(t)}\left(q(t')\right)$
 is set to the probability that some action effect conditioned on $q$ at time $t'$  allows (possibly indirectly) achieving the desired fact $p$ at time $t$. Making again the heuristic assumption of independence between various such effects conditioned on $q$ at $t'$, computing $\subweight_{p(t)}\left(q(t')\right)$ is decomposed over the outcomes of these effects.

\begin{figure}[thb]
\begin{tabbing}
{\bf procedure} \= \underline{\getp}\ $(t, G)$\\
\> estimates the probability of achieving $G$ at time $p$.\\
{\bf if} $\goal \not \subseteq P(t) \cup uP(t)$ {\bf then} {\bf return} $0$ {\bf endif}\\
{\bf if} $\goal \subseteq P(t)$ {\bf then} {\bf return} $1$ {\bf endif}\\
{\bf for} \= $g \in G \setminus P(t)$ {\bf do}\\
\> for each $l \in \leafs{\mbox{\sl Imp}_{\rightarrow g(t)}}$, introduce a chance proposition $\langle l_{g} \rangle$ with weight $\subweight_{g(t)}\left(l\right)$\\
\> $\varphi_{g} := (\bigvee_{l \in \mbox{\scriptsize $\leafs{\mbox{\sl
    Imp}_{\rightarrow g(t)}}$} } l) 
\wedge\bigwedge_{l \in \mbox{\scriptsize $\leafs{\mbox{\sl Imp}_{\rightarrow g(t)}}$} \cap uP(-m)}{\left(\neg l \vee \langle l_{g} \rangle \right)}$\\
%
%
%
{\bf endfor}\\
{\bf return} ${\sf WMC}(\Phi \wedge \bigwedge_{g \in G \setminus P(t)} \varphi_{g})$
\end{tabbing}
\caption{\label{code:get-P}Estimating the goal likelihood at a given time step.}
\end{figure}

What remains to be explained of the \buildprpg\ procedure are the two
termination criteria of the {\bf while} loop constructing the planning
graph from the layer $0$ onwards.  The first test is made by a call to
the \getp\ procedure, and it checks whether the PRPG built to the time
layer $T$ contains a relaxed plan for
$(\actions,\initBN,\goal,\goalprob)$.
The \getp\ procedure is shown in Figure~\ref{code:get-P}.  First, if
one of the subgoals is negatively known at time $t$, then, from
Lemma~\ref{lemma:disjunction}, the overall probability of achieving
the goal is $0$. On the other extreme, if all the subgoals are known
at time $t$, then the probability of achieving the goal is $1$. The
correctness of the latter test is implied by
Lemma~\ref{lemma:disjunction} and non-interference of relaxed actions.
This leaves us with the main case in which we are uncertain about some
of the subgoals. This uncertainty is either due to dependence of these
subgoals on the actual initial world state, or due to achieving these
subgoals using probabilistic actions, or due to both. The uncertainty
about the initial state is fully captured by our weighted CNF formula 
$\ourf(\initBN) \subseteq \Phi$. Likewise, the outcomes' chance
propositions $\newchancep{\poutcome}(t')$ introduced into the
implication graph by the \buildts\ procedure are ``chained up'' in
$\mbox{\sl Imp}$ to the propositions on the add lists of these
outcomes, and ``chained down'' in $\mbox{\sl Imp}$ to the unknown
(relaxed) conditions of these outcomes, if any.  Therefore, if some
action outcome $\poutcome$ at time $t' < t$ is relevant to achieving a
subgoal $g \in G$ at time $t$, then the corresponding node
$\newchancep{\poutcome}(t')$ must appear in $\mbox{\sl
Imp}_{\rightarrow g(t)}$, and its weight will be back-propagated by
\buildwil$(g(t),\mbox{\sl Imp})$ down to the leafs of $\mbox{\sl
Imp}_{\rightarrow g(t)}$.  The \getp\ procedure then exploits these
back-propagated estimates by, again, taking a heuristic assumption of
independence between achieving different subgoals. Namely, the
probability of achieving the unknown sub-goals $G \setminus P(t)$ is
estimated by weighted model counting over the formula $\Phi$,
conjoined with probabilistic theories $\varphi_{g}$ of achieving each
unknown goal $g$ in isolation. To understand the formulas
$\varphi_{g}$, consider that, in order to make $g$ true at $t$, we
must achieve at least one of the leafs $l$ of $\mbox{\sl
Imp}_{\rightarrow g(t)}$; hence the left part of the
conjunction. On the other hand, if we make $l$ true, then this achieves
$g(t)$ only with (estimated) probability
$\subweight_{g(t)}\left(l\right)$; hence the right part of the
conjunction requires us to ``pay the price'' if we set $l$ to
true.\footnote{If we do not introduce the extra chance propositions
$\langle l_{g} \rangle$, and instead assign the weight
$\subweight_{g(t)}\left(l\right)$ to $l$ itself, then the outcome is
not correct: we have to ``pay'' also for setting $l$ to false.}

%
%
%

As was explained at the start of this section, the positive PRPG
termination test may fire even if the real goal probability is {\em
not} high enough. That is, \getp\ may return a value higher than the
real goal probability, due to the approximation (independence
assumption) done in the weight propagation. Of course, due to the same
approximation, it may also happen that \getp\ returns a value lower
than the real goal probability.




The second PRPPG termination test comes to check whether we have
reached a point in the construction of PRPG that allows us to conclude
that there is no relaxed plan for $(\actions,\initBN,\goal,\goalprob)$
that starts with the given action sequence $\seqactions$. This
termination criterion asks whether, from time step $t$ to time step
$t+1$, any potentially relevant changes have occurred. A potentially
relevant change would be if the goal-satisfaction probability estimate
\getp\ grows, or if the known and unknown propositions grow, of if the
support leafs of the latter propositions in $\mbox{\sl Imp}$ that
correspond to the initial belief state grow.\footnote{To understand
the latter, note that PRPG can always be added with more and more
replicas of probabilistic actions irrelevant to achieving the goals,
and having effects with {\em known} conditions.  While these action
effects (since they are irrelevant) will not influence our estimate of
goal-satisfaction probability, the chance propositions corresponding
to the outcomes of these effects may become the support leafs of some
unknown proposition $p$. In the latter case, the set of support leafs
$\support(p(t'))$ will infinitely grow with $t' \rightarrow \infty$,
while the projection of $\support(p(t'))$ on the initial belief state
(that is, $\support(p(t))\cap uP(t)$) is guaranteed to reach a fix
point.} If none occurs, then the same would hold in all future
iterations $t' > t$, implying that the required goal satisfaction
probability $\goalprob$ would never be reached. In other words, the
PRPG construction is complete.

\begin{theorem}
\label{t:complete}
Let $(\actions,\initBN,\goal,\goalprob)$ be a probabilistic planning
task, $\seqactions$ be a sequence of actions applicable in
$\initprob$, and $\onecondrel$ be a relaxation function for $A$. If
\buildprpg$(\seqactions, \actions, \cinitial,\goal,
\goalprob,\onecondrel)$ returns FALSE, then there is no relaxed plan
for $(\actions,\initbelief,\goal,\goalprob)$ that starts with
$\seqactions\onecondrel$.
\end{theorem}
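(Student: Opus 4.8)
The plan is to prove the contrapositive-style completeness claim by analyzing what \buildprpg\ guarantees when it returns FALSE. The routine returns FALSE only inside the {\bf while} loop, when all four fixpoint conditions hold simultaneously at some step $t$: namely $P(t+1)=P(t)$, $uP(t+1)=uP(t)$, the initial-belief projections of the support leafs stabilize (that is, $uP(-m)\cap\support(p(t+1)) = uP(-m)\cap\support(p(t))$ for every unknown $p$), and \getp$(t+1,\goal) = $ \getp$(t,\goal)$. My first step is to argue that once all four of these conditions hold at step $t$, they continue to hold at every later step $t' > t$. Intuitively, the layer-construction procedure \buildts\ is a deterministic function of the ``state'' of the PRPG --- the known facts $P$, the unknown facts $uP$, and the relevant (initial-belief) support structure --- so if two consecutive layers have identical such state, then the construction has entered a cycle (in fact a true fixpoint) and all future layers replicate it. I would make this precise by noting that the set $A(t)$ of applicable relaxed actions depends only on $P(t)$, and that the newly added implication edges, chance nodes, and the relevant support leafs at step $t+1$ are determined by $A(t)$ together with $P(t), uP(t)$ and the existing relevant support.

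\textbf{From the fixpoint to the goal-probability bound.}

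The second step connects this stabilization to the actual goal-achievement probability of any relaxed plan extending $\seqactions\onecondrel$. The key observation is that \getp$(t,\goal)$ is exactly the quantity that the PRPG uses to decide success: a relaxed plan for $(\actions,\initBN,\goal,\goalprob)$ starting with $\seqactions\onecondrel$ exists if and only if, at some layer $t'$, we have \getp$(t',\goal) \geq \goalprob$. Here I would lean on Lemma~\ref{lemma:disjunction}, which characterizes precisely which propositions are known/possible at each layer, and on the fact that \getp\ correctly reflects, within the adopted relaxation and independence assumptions, the goal probability achievable at that layer. Since the \getp\ value is a monotone non-decreasing function of the layer index $t'$ (more actions can only add more ways to achieve the goal, never remove them, under the delete-relaxation), establishing that \getp\ has stabilized from $t$ to $t+1$, combined with the first step, yields \getp$(t',\goal) = $ \getp$(t,\goal) < \goalprob$ for all $t' \geq t$. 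Because FALSE was returned, the {\bf while} guard \getp$(t,\goal) < \goalprob$ was still satisfied, so this stabilized value is strictly below $\goalprob$, and hence no layer ever reaches the threshold.

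\textbf{The main obstacle and how to handle it.}

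The hard part will be the first step: arguing rigorously that the four stabilization conditions, all of which compare only layer $t$ to layer $t+1$, genuinely force stabilization at \emph{all} future layers. The delicate issue (flagged in the paper's own footnote) is that $\support(p(t'))$ can grow without bound as $t' \to \infty$ through irrelevant replicated probabilistic effects with \emph{known} conditions, so one cannot simply demand that the full support sets stabilize. I would therefore carefully separate the two kinds of support leafs identified in the discussion preceding Lemma~\ref{lemma:disjunction}: the initial-belief proposition nodes at layer $-m$ (tracked by $uP(-m)\cap\support(\cdot)$) and the chance nodes from known-condition effects (which are folded into the formula $\Phi$). I must show that only the former can affect either whether a proposition becomes known (via the SAT test $\Phi \rightarrow \bigvee_{l} l$) or the \getp\ computation, so that stabilizing the initial-belief projection together with $P$, $uP$, and \getp\ is sufficient to guarantee no future change in any quantity the termination test or the goal estimate depends on. Establishing this ``irrelevance of the growing part'' cleanly --- that the added known-condition chance leafs never change the outcome of the implication test nor the weighted-model-count in \getp\ beyond what $\Phi$ already encodes --- is the crux, and everything else (monotonicity of \getp, the inductive propagation of the fixpoint, and the final threshold argument) follows routinely once it is in place.
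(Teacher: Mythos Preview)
Your first step --- showing that the four fixpoint conditions at layer $t$ propagate to all later layers --- is on the right track and matches the paper's strategy. The paper proves this by a direct induction, and your sketch of why the ``growing'' chance-node leafs are harmless is in the correct spirit.

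The genuine gap is in your second step. You assert as a ``key observation'' that a relaxed plan for $(\actions,\initBN,\goal,\goalprob)$ starting with $\seqactions\onecondrel$ exists if and only if \getp$(t',\goal) \geq \goalprob$ at some layer $t'$, and then argue by monotonicity and stabilization. But this biconditional is not a given, and it is exactly the heart of the theorem. The function \getp\ is computed under the independence assumption in \buildwil, and the paper is explicit that \getp\ can return values either above \emph{or below} the true relaxed goal probability. Hence ``\getp\ stabilizes strictly below $\goalprob$'' does not, by itself, rule out the existence of a relaxed plan whose true probability meets the threshold. Leaning on Lemma~\ref{lemma:disjunction} does not help here: that lemma characterizes the $0/1$ cases (known and not-negatively-known), not the intermediate probabilities that \getp\ approximates.

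The paper closes this gap with a case split you are missing. If every implication edge leading into the unknown subgoals comes from a \emph{deterministic} outcome, then no independence approximation is ever invoked in the weight propagation, and \getp$(t,\goal)$ is a genuine upper bound on the achievable goal probability; together with the fixpoint, \getp$(t,\goal) < \goalprob$ then excludes any relaxed plan. If, on the other hand, some truly probabilistic outcome is relevant to an unknown subgoal, then replicating that action in $A(t+1)$ strictly increases the weighted model count, so \getp$(t+1,\goal) > $ \getp$(t,\goal)$, contradicting the assumed fixpoint --- meaning FALSE could not have been returned in this case. You need this dichotomy (or an equivalent argument) to bridge from ``the estimate stabilizes'' to ``no relaxed plan exists''; without it your proof does not go through.
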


Note that Theorem~\ref{t:complete} holds despite the approximation
done during weight propagation, making the assumption of probabilistic
independence. For Theorem~\ref{t:complete} to hold, the only
requirement on the weight propagation is this: {\em if the real weight
still grows, then the estimated weight still grows.} This requirement
is met under the independence assumption. It would not be met under
the assumption of co-occurence, propagating weights by maximization
operations, and thereby conservatively under-estimating the
weights. With that propagation, if the PRPG fails then we cannot
conclude that there is no plan for the respective belief. This is
another good argument (besides the bad quality heuristics we observed
empirically) against using the conservative estimation.

The full proof to Theorem~\ref{t:complete} is given in
Appendix~\ref{ss:proofs} on pp.~\pageref{t:completeA}. The theorem
finalizes our presentation and analysis of the process of constructing
probabilistic relaxed planning graphs. 

\subsection{Example: PRPG Construction}
\label{ss:ex-prpg}

To illustrate the construction of a PRPG by the algorithm in Figures~\ref{code:build-PRPG}-\ref{code:get-P}, let us consider a simplification of our running Examples~\ref{example1}-\ref{example2} in which
\begin{enumerate}[(i)]
\item only the actions $\{move\mbox{-}b\mbox{-}right, move\mbox{-}left\}$ constitute the action set $\actions$,
\item the goal is $G = \{r_{1},b_{2}\}$, and the required lower bound on the probability of success $\goalprob = 0.9$,
\item the initial belief state $\initbelief$ is given by the BN $\initBN$ as in Example~\ref{example2}, and
\item the belief state $\belief_{\plan}$ evaluated by the heuristic function corresponds to the actions sequence $\plan = \langle move\mbox{-}b\mbox{-}right \rangle$. 
\end{enumerate}
The effects/outcomes of the actions $\actions$ considered in the construction of PRPG are described in Table~\ref{table:example3}, where $e^{\sf mbr}$ is a re-notation of the effect $e$ in Table~\ref{table:example1}, the effect $e'$ in Table~\ref{table:example1} is effectively ignored due to the emptiness of its add effects.

\begin{table}[ht]
\begin{center}
  \setlength{\extrarowheight}{2pt}	
  \begin{tabular}{|c|c|c|c|c|c|c|}
  \hline
   $a$ & $\!\effs(a)\!$ & $\con(e)$ & $\con(e)\onecondrel$ & $\poutcomeset(e)$ & $\probe(\poutcome)$ & $\add(\poutcome)$ \\
  \hline
  \hline
  & & & & $\poutcome^{\sf mbr}_{1}$ & 0.7 & $\{r_{2},b_{2}\}$ \\
  \cline{5-7}
  $a^{\sf mbr}$ ($move\mbox{-}b\mbox{-}right$) & $e^{\sf mbr}$ & $\!\{r_{1},b_{1}\}\!$ & $\!\{r_{1}\}\!$ & $\poutcome^{\sf mbr}_{2}$ & 0.2 & $\{r_{2}\}$ \\
  \cline{5-7}
  & & & & $\poutcome^{\sf mbr}_{3}$ & 0.1 & $\emptyset$ \\
  \hline
  $a^{\sf ml}$ $(move\mbox{-}left)$ & $e^{\sf ml}$ & $\!\{r_{2}\}\!$ & $\!\{r_{2}\}\!$ & $\poutcome^{\sf ml}$ &  1.0 & $\!\{r_{1}\}\!$ \\
  \hline
  \hline
  $\noop_{r_{1}}$ & $e^{r_{1}}$ & $\!\{r_{1}\}\!$ & $\!\{r_{1}\}\!$ & $\poutcome^{r_{1}}$ &  1.0 & $\!\{r_{1}\}\!$ \\
  \hline
  $\noop_{r_{2}}$ & $e^{r_{2}}$ & $\!\{r_{2}\}\!$ & $\!\{r_{2}\}\!$ & $\poutcome^{r_{2}}$ &  1.0 & $\!\{r_{2}\}\!$ \\
  \hline
  $\noop_{b_{1}}$ & $e^{b_{1}}$ & $\!\{b_{1}\}\!$ & $\!\{b_{1}\}\!$ & $\poutcome^{b_{1}}$ &  1.0 & $\!\{b_{1}\}\!$ \\
  \hline
  $\noop_{b_{2}}$ & $e^{b_{2}}$ & $\!\{b_{2}\}\!$ & $\!\{b_{2}\}\!$ & $\poutcome^{b_{2}}$ &  1.0 & $\!\{b_{2}\}\!$ \\
  \hline
  \end{tabular}
\end{center}
\caption{\label{table:example3} Actions and their $\onecondrel$ relaxation for the PRPG construction example.} 
\end{table}

The initialization phase of the \buildprpg\ procedure results in $\Phi = \cinitial$, $\mbox{\sl Imp} := \emptyset$, $P(-1) = \emptyset$, and $uP(-1) = \{r_{1},r_{2},b_{1},b_{2}\}$. The content of $uP(-1)$ is  depicted in the first column of nodes in Figure~\ref{fig:impexample}.
The first {\bf for} loop of \buildprpg\ (constructing PRPG for the ``past'' layers corresponding to $\plan$) makes a single iteration, and calls the \buildts\ procedure with $t=-1$ and $A(\mbox{-}1) = \{a^{\sf mbr}\}\cup NOOPS$. (In what follows, using the names of the actions we refer to their $\onecondrel$ relaxations as given in Table~\ref{table:example3}.) The add list of the outcome $\poutcome^{\sf mbr}_{3}$ is empty, and thus it adds no nodes to the implication graph. Other than that, the chance nodes introduced to $\mbox{\sl Imp}$ by this call to \buildts\ 
appear in the second column of Figure~\ref{fig:impexample}. The first outer {\bf for} loop of \buildts\ results in $\mbox{\sl Imp}$ given by columns 1-3 of Figure~\ref{fig:impexample}, $uP(0) = uP(-1)$, and no extension of $\Phi$.

\begin{figure}[t]
\begin{center}
\begin{minipage}{\textwidth}
\def\objectsizestyle{\scriptstyle}
\xymatrix@R=20pt@C=30pt{
							 & & & *+[F-:<3pt>]{\poutcome^{\sf ml}(0)}  \ar `r/7pt[d] `[dddr] [dddr] \\
							 &  *+[F-:<3pt>]{\poutcome^{\sf mbr}_{1}(\mbox{-}1)} \ar `r/12pt[d] `[dddr] [dddr] \ar `r/18pt[d] `/5pt[dddddr] [dddddr] & & *+[F-:<3pt>]{\poutcome^{\sf mbr}_{1}(0)} \ar `r/12pt[d] `[dddr] [dddr] \ar `r/18pt[d] `/5pt[dddddr] [dddddr] & & *+[F-:<3pt>]{\poutcome^{\sf mbr}_{1}(1)} \ar `r/12pt[d] `[dddr] [dddr] \ar `r/18pt[d] `/5pt[dddddr] [dddddr]\\
							 &  *+[F-:<3pt>]{\poutcome^{\sf mbr}_{2}(\mbox{-}1)} \ar[ddr] & & *+[F-:<3pt>]{\poutcome^{\sf mbr}_{2}(0)} \ar[ddr] & & *+[F-:<3pt>]{\poutcome^{\sf mbr}_{2}(1)} \ar[ddr]\\
*+[F]{r_{1}(\mbox{-}1)} \ar[uur] \ar[ur] \ar[r] & *+[F-:<3pt>]{\poutcome^{r_{1}}(\mbox{-}1)} \ar[r] & *+[F]{r_{1}(0)}  \ar[uur] \ar[ur] \ar[r] & *+[F-:<3pt>]{\poutcome^{r_{1}}(0)}  \ar[r] & *+[F-,]{r_{1}(1)}\\
*+[F]{r_{2}(\mbox{-}1)} \ar[r] & *+[F-:<3pt>]{\poutcome^{r_{2}}(\mbox{-}1)} \ar[r] & *+[F]{r_{2}(0)} \ar `r/15pt[u] `[uuuur] [uuuur]  \ar[r] & *+[F-:<3pt>]{\poutcome^{r_{2}}(0)}  \ar[r] & *+[F]{r_{2}(1)} \ar[r] & *+[F-:<3pt>]{\poutcome^{r_{2}}(1)}  \ar[r] & *+[F]{r_{2}(2)}\\
*+[F]{b_{1}(\mbox{-}1)} \ar[r] & *+[F-:<3pt>]{\poutcome^{b_{1}}(\mbox{-}1)} \ar[r] & *+[F]{b_{1}(0)} \ar[r] &  *+[F-:<3pt>]{\poutcome^{b_{1}}(0)} \ar[r] & *+[F]{b_{1}(1)} \ar[r] & 
*+[F-:<3pt>]{\poutcome^{b_{1}}(1)} \ar[r] & *+[F]{b_{1}(2)}\\
*+[F]{b_{2}(\mbox{-}1)} \ar[r] & *+[F-:<3pt>]{\poutcome^{b_{2}}(\mbox{-}1)} \ar[r] & *+[F]{b_{2}(0)} \ar[r] & *+[F-:<3pt>]{\poutcome^{b_{2}}(0)} \ar[r] & *+[F]{b_{2}(1)} \ar[r] & *+[F-:<3pt>]{\poutcome^{b_{2}}(1)} \ar[r] & *+[F]{b_{2}(2)}
}
\end{minipage}
\end{center}
\caption{\label{fig:impexample} 
The implication graph $\mbox{\sl Imp}$. The odd columns of nodes depict the sets of unknown propositions $uP(t)$. The even columns of nodes depict the change propositions introduced for the probabilistic outcomes of the actions $A(t)$.}
\end{figure}
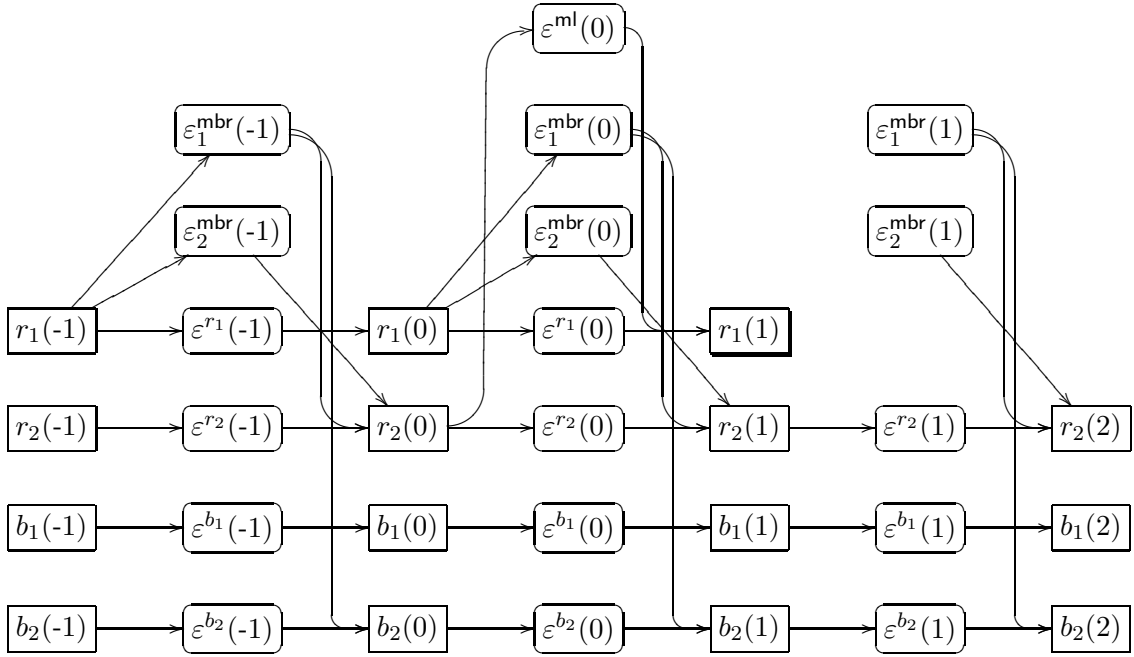

In the second outer {\bf for} loop of \buildts, the weight propagating  procedure \buildwil\ is called for each unknown fact $p(0) \in uP(0) = \{r_{1}(0),r_{2(0)}, b_{1}(0), b_{2}(0)\}$, generating the ``$p(0)$-oriented'' weights as in Table~\ref{table:example3:imp0}. For each $p(0) \in uP(0)$, the set of supporting leafs $\mbox{\sl support}(p(0)) = \{p(-1)\}$, none of them is implied by $\Phi = \initBN$, and thus the set of known facts $P(0)$ remains equal to $P(-1) = \emptyset$, and $uP(-1)$ equal to $= uP(-1)$.

\begin{table}[ht]
\begin{center}
\begin{tabular}{c|c|c|c|c|c|c|c|c|c|c|c|c|c|c|}
 \cline{2-15}
 & \multicolumn{4}{c|}{$t'=0$} & \multicolumn{10}{c|}{$t'=-1$}\\
 \cline{2-15}
 & $\!r_{1}\!$ & $\!r_{2}\!$ & $\!b_{1}\!$ & $\!b_{2}\!$ & $\!\poutcome^{\sf mbr}_{1}\!$ & $\!\poutcome^{\sf mbr}_{2}\!$ & $\!\poutcome^{r_{1}}\!$ & $\!\poutcome^{r_{2}}\!$ & $\!\poutcome^{b_{1}}\!$ & $\!\poutcome^{b_{2}}\!$ & $\!r_{1}\!$ & $\!r_{2}\!$ & $\!b_{1}\!$ & $\!b_{2}\!$\\
 \cline{2-15}
$\subweight_{r_{1}(0)}$ & $1$ & & & & & & 1 & & & & 1 & & & \\
\cline{2-15}
$\subweight_{r_{2}(0)}$ &  & 1 & & & 0.7 & 0.2 & & 1 & & & 0.9 & 1 & & \\
 \cline{2-15}
$\subweight_{b_{1}(0)}$ &  & & 1 & & & &  & & 1 & &  & & 1 & \\
\cline{2-15}
$\subweight_{b_{2}(0)}$ &  & &  & 1 & 0.7 &  &  & &  & 1 & 0.7 & &  & 1\\
\cline{2-15}
\end{tabular}
\end{center}
\caption{\label{table:example3:imp0} The columns in the table correspond to the nodes in the implication graph $\mbox{\sl Imp}$, and each row provides the weights $\subweight_{p(0)}$ for some $p(0) \in uP(0)$. An entry in the row of $p(0)$ is empty if and only if the node associated with the corresponding column does not belong to the implication subgraph $\mbox{\sl Imp}_{\rightarrow p(0)}$.}
\end{table}

Having finished with the {\bf for} loop, the \buildprpg\ procedure proceeds with the {\bf while} loop that builds the ``future'' layers of PRPG. The test of goal (un)satisficing \getp$(0, \goal) < \goalprob$ evaluates to TRUE as we get \getp$(0, \goal) = 0.63 < 0.9$, and thus the loop proceeds with its first iteration. To see 
the former, consider the implication graph $\mbox{\sl Imp}$ constructed so far (columns 1-3 in Figure~\ref{fig:impexample}).
For our goal $\goal = \{r_{1},b_{2}\}$ we have
$\leafs{\mbox{\sl Imp}_{\rightarrow r_{1}(0)}} = \{r_{1}(-1)\}$, and $\leafs{\mbox{\sl Imp}_{\rightarrow b_{2}(0)}} = \{r_{1}(-1),b_{2}(-1)\}$. As $\{r_{1}(0),b_{2}(0)\} \subset uP(0)$ and $\Phi=\cinitial$, we have 
\[
    {\mathsf{get\mbox{-}P}}(0,\goal) = {\sf WMC}\left(\cinitial \wedge \varphi_{r_{1}} \wedge \varphi_{b_{2}}\right),
\]
where
\begin{equation}
\label{e:stam}
\begin{split}
  \varphi_{r_{1}} &= \left( \langle r_{1,r_{1}}\rangle \right) \wedge \left(r_{1} \leftrightarrow \langle r_{1,r_{1}}\rangle \right), \\
  \varphi_{b_{2}} &= \left( \langle r_{1,b_{2}} \rangle \vee \langle b_{2,b_{2}} \rangle \right) \wedge 
  \left(r_{1}(-1) \leftrightarrow \langle r_{1,b_{2}}\rangle \right) \wedge \left(b_{2}(-1) \leftrightarrow \langle b_{2,b_{2}}\rangle \right),
 \end{split}
\end{equation}
and
\begin{equation}
\label{e:superstam}
\begin{split}
  \weight\left( \langle r_{1,r_{1}}\rangle \right) & =  \subweight_{r_{1}(0)}\left( r_{1}(-1) \right) = 1\\
  \weight\left( \langle b_{2,b_{2}}\rangle \right) & =  \subweight_{b_{2}(0)}\left( b_{2}(-1) \right) = 1 \hspace{1cm}.\\
  \weight\left( \langle r_{1,b_{2}}\rangle \right) & =  \subweight_{b_{2}(0)}\left( r_{1}(-1) \right) = 0.7
 \end{split}
\end{equation}
Observe that the two models of $\cinitial$ consistent with $r_{2}$ immediately falsify the sub-formula  $\cinitial \wedge\varphi_{r_{1}}$. Hence, we have 
\begin{eqnarray*}
   {\mathsf{get\mbox{-}P}}(0,\goal) & = &
   {\sf WMC}\left(\cinitial \wedge \varphi_{r_{1}} \wedge \varphi_{b_{2}}|_{r_{1}(-1)=1,b_{1}(-1)=1}\right) +\\
   && {\sf WMC}\left(\cinitial \wedge \varphi_{r_{1}} \wedge \varphi_{b_{2}}|_{r_{1}(-1)=1,b_{2}(-1)=1}\right)\\
    & = & \initbelief(r_{1},b_{1})\cdot \weight\left( \langle r_{1,r_{1}}\rangle \right) \cdot \weight\left( \langle r_{1,b_{2}}\rangle \right) + \initbelief(r_{1},b_{2})\cdot \weight\left( \langle r_{1,r_{1}}\rangle \right) \cdot \weight\left( \langle r_{1,b_{2}}\rangle \right)\cdot \weight\left( \langle b_{2,b_{2}}\rangle \right)\\
    &=& 0.63 \cdot 1 \cdot 0.7 + 0.27 \cdot 1 \cdot 0.7 \cdot 1\\
    &=& 0.63
\end{eqnarray*}
  
In the first iteration of the {\bf while} loop, \buildprpg\ calls the \buildts\ procedure with $t=0$ and $A(0) = \{a^{\sf mbr},a^{\sf ml}\} \cup NOOPS$.
The chance nodes introduced to $\mbox{\sl Imp}$ by this call to \buildts\ 
appear in the forth column of Figure~\ref{fig:impexample}. The first outer {\bf for} loop of \buildts\ results in $\mbox{\sl Imp}$ given by columns 1-5 of Figure~\ref{fig:impexample}, $uP(1) = uP(0)$, and no extension of $\Phi$. As before, in the second {\bf for} loop of \buildts, the \buildwil\ procedure is called for each unknown fact $p(1) \in uP(1) = \{r_{1}(1),r_{2(1)}, b_{1}(1), b_{2}(1)\}$, generating the ``$p(1)$-oriented'' weights. The interesting case here is the case of weight propagation \buildwil$(r_{1}(1),\mbox{\sl Imp})$, resulting in weights
\begin{center}
\begin{tabular}{lclcl}
\begin{minipage}{0.25\textwidth}
\begin{eqnarray*}
\subweight_{r_{1}(1)}(r_{1}(1)) &=& 1\\
\subweight_{r_{1}(1)}(\poutcome^{\sf ml}(0)) &=& 1\\
\subweight_{r_{1}(1)}(\poutcome^{r_{1}}(0)) &=& 1\\
\subweight_{r_{1}(1)}(r_{1}(0)) &=& 1\\
\subweight_{r_{1}(1)}(r_{2}(0)) &=& 1
\end{eqnarray*}
\end{minipage}
& $\Rightarrow$ &
\begin{minipage}{0.25\textwidth}
\begin{eqnarray*}
\subweight_{r_{1}(1)}(\poutcome^{r_{1}}(\mbox{-}1)) &=& 1\\
\subweight_{r_{1}(1)}(\poutcome^{r_{2}}(\mbox{-}1)) &=& 1\\
\subweight_{r_{1}(1)}(\poutcome^{\sf mbr}_{1}(\mbox{-}1)) &=& 0.7\\
\subweight_{r_{1}(1)}(\poutcome^{\sf mbr}_{2}(\mbox{-}1)) &=& 0.2
\end{eqnarray*}
\end{minipage}
& $\Rightarrow$ &
\begin{minipage}{0.25\textwidth}
\begin{eqnarray*}
\subweight_{r_{1}(1)}(r_{1}(\mbox{-}1)) &=& 1\\
\subweight_{r_{1}(1)}(r_{2}(\mbox{-}1)) &=& 1 
\end{eqnarray*}
\end{minipage}
\end{tabular}
\end{center}
for the nodes in $\mbox{\sl Imp}_{\rightarrow r_{1}(1)}$. From that, the set of supporting leafs of $r_{1}(1)$ is assigned to $\mbox{\sl support}(r_{1}(1)) = \{r_{1}(-1),r_{2}(-1)\}$, and since $\Phi = \cinitial$ does implies $r_{1}(-1)\vee r_{2}(-1)$, the fact $r_{1}$ is concluded to be known at time $1$, and is added to $P(1)$. For all other nodes $p(1) \in uP(1)$ we still have $\mbox{\sl support}(p(1)) = \{p(-1)\}$, and thus they all remain unknown at time $t=1$ as well. 
Putting things together, this call to the \buildwil\ procedure 
results with $P(1) = \{r_{1}(1)\}$, and $uP(1) = \{r_{2(1)}, b_{1}(1), b_{2}(1)\}$. The  {\bf while} loop of the \buildprpg\ procedure proceeds with checking the fixpoint termination test, and this immediately fails due to $P(1) \neq P(0)$. Hence, the {\bf while} loop proceeds with the next iteration corresponding to $t=1$. 

The test of goal (un)satisficing \getp$(1, \goal) < \goalprob$ still evaluates to TRUE because we have \getp$(1, \goal) = 0.899 < 0.9$. Let us follow this evaluation of \getp$(1, \goal)$ in detail as well.
Considering the implication graph $\mbox{\sl Imp}$ constructed so far up to time $t=1$ (columns 1-5 in Figure~\ref{fig:impexample}), and having $\goal \cap uP(1) = \{b_{2}(1)\}$, $\leafs{\mbox{\sl Imp}_{\rightarrow b_{2}(1)}} = \{r_{1}(-1),b_{2}(-1)\}$, and (still) $\Phi = \cinitial$, we obtain
\[
    {\mathsf{get\mbox{-}P}}(1,\goal) = {\sf WMC}\left(\cinitial \wedge  \varphi_{b_{2}}\right),
\]
with
\begin{equation}
\label{e:stam2}
\begin{split}
  \varphi_{b_{2}} &= \left( \langle r_{1,b_{2}} \rangle \vee \langle b_{2,b_{2}} \rangle \right) \wedge 
  \left(r_{1}(-1) \leftrightarrow \langle r_{1,b_{2}}\rangle \right) \wedge \left(b_{2}(-1) \leftrightarrow \langle b_{2,b_{2}}\rangle \right),
 \end{split}
\end{equation}
While the structure of $\varphi_{b_{2}}$ in Equation~\ref{e:stam2} is identical to this in Equation~\ref{e:stam}, the weights associated with the auxiliary chance propositions are different, notably
\begin{equation}
\label{e:superstam2}
\begin{split}
  \weight\left( \langle b_{2,b_{2}}\rangle \right) & =  \subweight_{b_{2}(1)}\left( b_{2}(-1) \right) = 1 \hspace{1cm}.\\
  \weight\left( \langle r_{1,b_{2}}\rangle \right) & =  \subweight_{b_{2}(1)}\left( r_{1}(-1) \right) = 0.91
 \end{split}
\end{equation}
The difference in $\weight\left( \langle r_{1,b_{2}}\rangle \right)$ between Equation~\ref{e:superstam} and Equation~\ref{e:superstam2} stems from
the fact that $r_{1}(-1)$ supports $b_{2}(1)$ not only via the effect 
$e^{\sf mbr}$ at time $-1$ but also via the a different instance of the same effect at time $0$. 
Now, the only model of $\cinitial$ that falsify $\varphi_{b_{2}}$ is the one that sets both $r_{1}$ and $b_{2}$ to false. Hence, we have
\begin{eqnarray*}
   {\mathsf{get\mbox{-}P}}(1,\goal) & = & \initbelief(r_{1},b_{1})\cdot \weight\left( \langle r_{1,b_{2}}\rangle \right) +\\
   & & \initbelief(r_{1},b_{2})\cdot \weight\left( \langle r_{1,b_{2}}\rangle \right)\cdot \weight\left( \langle b_{2,b_{2}}\rangle \right) +\\
   & & \initbelief(r_{2},b_{2})\cdot \weight\left( \langle b_{2,b_{2}}\rangle \right)\\
   & = & 0.63 \cdot 0.91 + 0.27 \cdot 0.91 \cdot 1 + 0.08 \cdot 1\\
   & = & 0.899 
\end{eqnarray*}

Having verified \getp$(1, \goal) < \goalprob$, the {\bf while} loop proceeds with the construction for time $t=2$, and calls the \buildts\ procedure with $t=1$ and $A(1) = \{a^{\sf mbr},a^{\sf ml}\} \cup NOOPS$. The chance nodes introduced to $\mbox{\sl Imp}$ by this call to \buildts\ appear in the sixth column of Figure~\ref{fig:impexample}.
The first outer {\bf for} loop of \buildts\ results in $\mbox{\sl Imp}$ given by columns 1-7 of Figure~\ref{fig:impexample}, and

\begin{equation}
\label{e:updatedPhi}
\begin{split}
   \Phi = \cinitial & \wedge \left( \poutcome^{\sf mbr}_{1}(1) \vee \poutcome^{\sf mbr}_{2}(1) \vee \poutcome^{\sf mbr}_{3}(1) \right) \wedge\\
   & \wedge
   \left( \neg\poutcome^{\sf mbr}_{1}(1) \vee \neg\poutcome^{\sf mbr}_{2}(1)\right)  \wedge  \left( \neg\poutcome^{\sf mbr}_{1}(1) \vee \neg\poutcome^{\sf mbr}_{3}(1) \right) \wedge \left( \neg\poutcome^{\sf mbr}_{2}(0) \vee \neg\poutcome^{\sf mbr}_{3}(0) \right)
   \end{split}
\end{equation}

\noindent Next, the \buildwil\ procedure is called as usual for each unknown fact $p(2) \in uP(2) = \{r_{2(2)}, b_{1}(2), b_{2}(2)\}$. The information worth detailing here is that now we have 
$\leafs{\mbox{\sl Imp}_{\rightarrow b_{2}(2)}} = \{b_{2}(-1),r_{1}(-1),\poutcome^{\sf mbr}_{1}(1)\}$, and
$\support(b_{2}(2)) = \{b_{2}(-1),\poutcome^{\sf mbr}_{1}(1)\}$. However, we still have $\Phi \rightarrow \bigvee_{l\in\support(p(2))}{l}$ for no $p(2) \in uP(2)$, and thus the set of known facts $P(2)$ remains equal to $P(1) = \{r_{1}\}$.

Returning from the call to the \buildwil\ procedure, \buildprpg\ proceeds with checking the fixpoint termination condition. This time, the first three equalities of the condition do hold, yet the condition is not satisfied due to \getp$(2, \goal) >$ \getp$(t, \goal)$. To see the latter, notice that we have 
\[
{\mathsf{get\mbox{-}P}}(2,\goal) = {\sf WMC}\left(\Phi \wedge  \varphi_{b_{2}}\right),
\]
where $\Phi$ is given by Equation~\ref{e:updatedPhi}, 
\begin{equation}
\label{e:stam3}
\begin{split}
  \varphi_{b_{2}} &= \left( \langle r_{1,b_{2}} \rangle \vee \langle b_{2,b_{2}} \rangle \vee \poutcome^{\sf mbr}_{1}(1) \right) \wedge 
  \left(r_{1}(-1) \leftrightarrow \langle r_{1,b_{2}}\rangle \right) \wedge \left(b_{2}(-1) \leftrightarrow \langle b_{2,b_{2}}\rangle \right),
 \end{split}
\end{equation}
and 
\begin{equation}
\label{e:superstam3}
\begin{split}
  \weight\left( \langle b_{2,b_{2}}\rangle \right) & =  \subweight_{b_{2}(1)}\left( b_{2}(-1) \right) = 1 \hspace{1cm}.\\
  \weight\left( \langle r_{1,b_{2}}\rangle \right) & =  \subweight_{b_{2}(1)}\left( r_{1}(-1) \right) = 0.91\\
  \weight( \poutcome^{\sf mbr}_{1}(1)) & = \subweight_{b_{2}(1)}( \poutcome^{\sf mbr}_{1}(1)) = 0.7
 \end{split}
\end{equation}
It is not hard to verify that 
\begin{eqnarray*}
   {\mathsf{get\mbox{-}P}}(2,\goal) & = & {\mathsf{get\mbox{-}P}}(1,\goal) + \initbelief(r_{2},b_{1})\cdot \weight(\poutcome^{\sf mbr}_{1}(1))\\
   &=& 0.899 + 0.02 \cdot 0.7\\
   &=& 0.913
\end{eqnarray*}
Note that now we do have ${\mathsf{get\mbox{-}P}}(2,\goal) \geq \goalprob$, and therefore \buildprpg\ aborts the {\bf while} loop by passing the goal satisficing test, and sets $T=2$. This finalizes the construction of PRPG, and thus, our example.

\subsection{Extracting a Probabilistic Relaxed Plan}
\label{ss:heuristic2}

If the construction of the PRPG succeeds in reaching the goals with
the estimated probability of success
${\mathsf{get\mbox{-}P}}(T,\goal)$ exceeding $\goalprob$, then we
extract a relaxed plan consisting of $\actions' \subseteq
\actions(0),\dots,\actions(T-1)$, and use the size of $\actions'$ as
the heuristic value of the evaluated belief state $\belief_{\plan}$.

Before we get into the technical details, consider that there are some
key differences between relaxed (no delete lists) probabilistic
planning on the one hand, and both relaxed classical and relaxed
qualitative conformant planning on the other hand. In relaxed
probabilistic planning, it might make sense to execute the same action
numerous times in consecutive time steps. In fact, this might be
essential -- just think of throwing a dice in a game until a ``6''
appears. In contrast, in the relaxed classical and qualitatively
uncertain settings this is not needed -- once an effect has been
executed, it remains true forever. Another complication in
probabilistic planning is that the required goal-achievement
probability is specified over a conjunction (or, possibly, some more
complicated logical combination) of different facts. While increasing
the probability of achieving each individual sub-goal $g \in \goal$ in
relaxed planning will always increase the overall probability of
achieving $\goal$, choosing the right distribution of effort among the
sub-goals to pass the required threshold $\goalprob$ for the whole
goal $\goal$ is a non-trivial problem.

A fundamental problem is the aforementioned lack of guarantees of the
weight propagation. On the one hand, the construction of PRPG and
Lemma~\ref{lemma:disjunction} imply that $\plan\onecondrel$
concatenated with an arbitrary linearization $\relplanbig$ of
$\actions(0),\dots,\actions(T-1)$ is executable in $\initbelief$. On
the other hand, due to the independence assumption made in the
\buildwil\ procedure, ${\mathsf{get\mbox{-}P}}(T,\goal) \geq
\goalprob$ does {\em not} imply that the probability of achieving
$\goal$ by $\plan\onecondrel$ concatenated with $\relplanbig$ exceeds
$\goalprob$. A ``real'' relaxed plan, in that sense, might not even
exist in the constructed PRPG.

Our answer to the above difficulties is to extract relaxed plans that
are correct {\em relative to the weight propagation.} Namely, we use
an implication graph ``reduction'' algorithm that computes a minimal
subset of that graph which still -- according to the weight
propagation -- sufficiently supports the goal. The relaxed plan then
corresponds to that subset. Obviously, this ``solves'' the difficulty
with the lack of ``real'' relaxed plans; we just do the relaxed plan
extraction according to the independence assumption (besides ignoring
deletes and removing all but one condition of each effect). The
mechanism also naturally takes care of the need to apply the same
action several times: this corresponds to several implication graph
edges which are all needed in order to obtain sufficient weight. The
choice of how effort is distributed among sub-goals is circumvented in
the sense that all sub-goals are considered in conjunction, that is, the
reduction is performed once and for all. Of course, there remains a
choice in which parts of the implication graph should be removed. We
have found that it is a useful heuristic to make this choice based on
which actions have already been applied on the path to the belief. We
will detail this below.

Making another assumption on top of the previous relaxations can of
course be bad for heuristic quality. The ``relaxed plans'' we extract
are not guaranteed to actually achieve the desired goal
probability. Since the relaxed plans are used only for search
guidance, per se this theoretical weakness is only of marginal
importance. However, an over-estimation of goal probability might
result in a bad heuristic because the relaxed plan does not include
the right actions, or does not apply them often enough. In
Section~\ref{ss:results}, we will discuss an example domain where
\pff\ fails to scale for precisely this reason.

\begin{sloppypar}
Figure~\ref{code:extract-PRPlan} shows the main routine \extractprp\
for extracting a relaxed plan from a given PRPG (note that $T$ is the
index of the highest PRPG layer, c.f.\
Figure~\ref{code:build-PRPG}). The sub-routines of \extractprp\ are
shown in
Figures~\ref{code:extract-subroutines1}-\ref{code:extract-subroutines2}. At
a high level, the \extractprp\ procedure consists of two parts:
\begin{enumerate}[1.]
 \item 
 {\em Reduction} of the implication graph, aiming at identifying a set
 of time-stamped action effects that can be ignored without decreasing
 our estimate of goal-achievement probability \getp$(T,\goal)$ below
 the desired threshold $\goalprob$, and
 \item 
 {\em Extraction} of a valid relaxed plan $\relplan$ such that
 (schematically) constructing PRPG with $\relplan$ instead of the full
 set of $\actions(0),\dots,\actions(T)$ would still result in
 \getp$(T,\goal) \geq \goalprob$.
\end{enumerate}
\end{sloppypar}

\begin{figure}[thb]
{\small
\begin{tabbing}
{\bf procedure} \= \underline{\sf extract-PRPlan}$(PRPG(\plan, \actions, \cinitial,
\goal, \goalprob, \onecondrel))$,\\
\> selects actions from $A(0), \dots, A(T-1)$\\ 
$\mbox{\sl Imp}'$ := ${\mathsf{reduce\mbox{-}implication\mbox{-}graph}}()$\\
\iterativesg$(\mbox{\sl Imp}')$\\
${\mathsf{sub\mbox{-}goal}}(\goal \cap P(T))$\\
{\bf for} \= decreasing time steps $t$ := $T, \dots, 1$ {\bf do}\\
\> {\bf for} \= all $g \in \goal(t)$ {\bf do}\\
\> \> {\bf if} \= $\exists a \in A(t-1), e \in \effs(a), \con(e) \in
P(t-1), \forall \poutcome\in\poutcomeset(e):g \in \add(\poutcome)$ {\bf then}\\
\> \> \> {\bf add-to-relaxed-plan} \= one such $a$ at time $t$\\
\> \> \> ${\mathsf{sub\mbox{-}goal}}(\pre(a) \cup \con(e))$\\
\> \> {\bf else }\\
\> \> \> ${\mbox{\sl Imp}^{\;g(t)}}$ := \constructsupport$(\support(g(t)))$\\
\> \> \> \iterativesg$({\mbox{\sl Imp}^{\;g(t)}})$\\
\> \> {\bf endif}\\
\> {\bf endfor}\\
{\bf endfor}
\end{tabbing}}
\caption{\label{code:extract-PRPlan}Extracting a probabilistic relaxed plan.}
\end{figure}

The first part is accomplished by the \reduceimp\ procedure, depicted
in Figure~\ref{code:extract-subroutines1}. As of the first step in the
algorithm, the procedure considers only the parts of the implication
graph that are relevant to achieving the unknown sub-goals. Next,
\reduceimp\ performs a greedy iterative elimination of actions from
the ``future'' layers $0,\dots,T-1$ of PRPG until the probability
estimate \getp$(T,\goal)$ over the reduced set of actions goes below
$\goalprob$. While, in principle, any action from $A(0),\dots,A(T-1)$
can be considered for elimination, in \reduceimp\ we examine only {\em
repetitions of the actions that already appear in
$\plan$}. Specifically, \reduceimp\ iterates over the actions $a$ in
$\plan\onecondrel$, and if $a$ repeats somewhere in the ``future''
layers of PRPG, then one such repetition $a(t')$ is considered for
removal. If removing this repetition of $a$ is found safe with respect
to achieving $\goalprob$,\footnote{Note here that the formula for {\sf
WMC} is constructed exactly as for the \getp\ function, c.f.\
Figure~\ref{code:get-P}.} then it is effectively removed by
eliminating all the edges in $\mbox{\sl Imp}$ that are induced by
$a(t')$. Then the procedure considers the next repetition of $a$. If
removing another copy of $a$ is not safe anymore, then the procedure
breaks the inner loop and considers the next action.

%

%

\begin{figure}[thb]
{\small
\begin{tabbing}
{\bf procedure} \= \underline{\sf reduce-implication-graph}$()$\\
\> operates on the PRPG;\\
\> returns a sub-graph of $\mbox{\sl Imp}$.\\
$\mbox{\sl Imp}' := \cup_{g \in G\setminus P(T)} \mbox{\sl Imp}_{\rightarrow g(T)}$\\
{\bf for} \= all actions $a \in \plan\onecondrel$ {\bf do}\\
\> {\bf for} \= all edges $(\poutcome(t'),p(t'+1)) \in \mbox{\sl Imp}''$, induced by $a(t') \in A(t')$, for some $t' \geq 0$ {\bf do}\\
\> \> $\mbox{\sl Imp}''$ \= $:= \mbox{\sl Imp}'$\\
\> \>  remove from $\mbox{\sl Imp}''$ all the edges induced by $a \in A(t')$\\ 
\> \> {\bf for} \= all $g \in G \setminus P(t)$ {\bf do}\\
\> \> \> for each $l \in \leafs{{\mbox{\sl Imp}''}_{\rightarrow g(T)}}$, introduce a chance proposition $\langle l_{g} \rangle$ with weight $\subweight_{g(T)}\left(l\right)$\\
\> \> \> $\varphi_{g} := (\bigvee_{l \in \mbox{\scriptsize $\leafs{{\mbox{\sl
    Imp}''}_{\rightarrow g(T)}}$} } l) 
\wedge\bigwedge_{l \in \mbox{\scriptsize $\leafs{{\mbox{\sl Imp}''}_{\rightarrow g(T)}}$} \cap uP(-m)}{\left(\neg l \vee \langle l_{g} \rangle \right)}$\\
%
%
\> \> {\bf endfor}\\
\> \> {\bf if} \= ${\sf WMC}(\Phi \wedge \bigwedge_{g \in G \setminus P(T)} \varphi_{g}) \geq \goalprob$ {\bf then} $\mbox{\sl Imp}' := \mbox{\sl Imp}''$ {\bf else break endif}\\
\> {\bf endfor}\\
{\bf endfor}\\
{\bf return} $\mbox{\sl Imp}'$
\end{tabbing}}
\caption{\label{code:extract-subroutines1}The procedure reducing the implication graph.}
\end{figure}

To illustrate the intuition behind our focus on the repetitions of the
actions from $\plan$, let us consider the following example of a
simple logistics-style planning problem with probabilistic actions.
Suppose we have two locations $A$ and $B$, a truck that is known to be
initially in $A$, and a heavy and uneasy to grab package that is known
to be initially on the truck. The goal is to have the package unloaded
in $B$ with a reasonably high probability, and there are two actions
we can use -- moving the truck from $A$ to $B$ ($a^{\sf m}$), and
unloading the package ($a^{\sf u}$). Moving the truck does not
necessarily move the truck to $B$, but it does that with an extremely
high probability. On the other hand, unloading the bothersome  package
succeeds with an extremely low probability, leaving the package on the
truck otherwise.  Given this data, consider the belief state
$\belief_{\plan}$ corresponding to ``after trying to move the truck
once'', that is, to the action sequence $\langle a^{\sf m}
\rangle$. To achieve the desired probability of success, the PRPG will
have to be expanded to a very large time horizon $T$, allowing the
action $a^{\sf u}$ to be applied sufficiently many times. However, the
fact ``truck in $B$'' is not known in the belief state
$\belief_{\plan}$, and thus the implication graph will also contain
the same amount of applications of $a^{\sf m}$. Trimming away most of
these applications of $a^{\sf m}$ will still keep the probability
sufficiently high.

The reader might ask at this point what we hope to achieve by
``trimming away most of the applications of $a^{\sf m}$''. The point
is, intuitively, that the implication graph reduction mechanism is a
means to {\em understand what has been accomplished already, on the
path to $\belief_{\plan}$.} Without such an understanding, the relaxed
planning can be quite indiscriminative between search states. Consider
the above example, and assume we have not one but two troubled
packages, $P1$ and $P2$, on the truck, with unload actions $a^{\sf
u1}$ and $a^{\sf u2}$. The PRPG for $\belief_{\plan}$ contains copies
of $a^{\sf u1}$ and $a^{\sf u2}$ at layers up to the large horizon
$T$. Now, say our search starts to unload $P1$. In the resulting
belief, the PRPG still has $T$ steps because the situation has not
changed for $P2$. Each step of the PRPG still contains copies of both
$a^{\sf u1}$ and $a^{\sf u2}$ -- and hence the heuristic value remains
the same as before! In other words, without an implication graph
reduction technique, relevant things that are accomplished may remain
hidden behind other things that have not yet been accomplished. In the
above example, this is not really critical because, as soon as we have
tried an unload for {\em each} of $P1$ and $P2$, the time horizon $T$
decreases by one step, and the heuristic value is reduced. It is,
however, often the case that some sub-task must be accomplished before
some other sub-task can be attacked. In such situations, without
implication graph reduction, the search staggers across a huge plateau
until the first task is completed. We observed this in a variety of
benchmarks, and hence designed the implication graph reduction to make
the relaxed planning aware of what has already been done.

Of course, since our weight propagation may over-estimate true probabilities, and 
hence over-estimate what was achieved in the past, the implication
graph reduction may conclude prematurely that a sub-task has been
``completed''. This leads us to the main open question in this
research; we will get back to this at the end of
Section~\ref{ss:results}, where we discuss this in the context of an
example where \pff's performance is bad.

%

\begin{figure}[htb]
{\small
\begin{tabbing}
{\bf procedure} \= \underline{\sf extract-subplan}$(\mbox{\sl Imp}')$\\
\>  actions that are helpful for achieving uncertain goals $\goal \cap uP(T)$ and \\
\> subgoals all the essential conditions of these actions\\
{\bf for} \= 
each edge $(\poutcome(t), p(t+1)) \in {\mbox{\sl Imp}'}$ such that $t \geq 0$ {\bf do}\\
\> {\bf if} \= action $a$ and its effect $e \in \effs(a)$ be responsible for $\poutcome$ at time $t$ {\bf time}\\
\> \> {\bf add-to-relaxed-plan} $a$ at time $t$\\
\> \> ${\mathsf{sub\mbox{-}goal}}((\pre(a) \cup \con(e)) \cap P(t))$\\
\> {\bf endif}
{\bf endfor}\\
\ \\
{\bf procedure} \= \underline{\sf sub-goal}$(P)$\\
\> inserts the propositions in $P$ as sub-goals\\
\> at the layers of their first appearance in the PRPG\\
{\bf for} \= all $p \in P$ {\bf do}\\
\> $t_{0}$ := $\argmin_{t}{\{p \in P(t)\}}$\\
\> {\bf if} $t_0 \geq 1$ {\bf then} $\goal(t_0) := \goal(t_0) \cup \{ p \}$ {\bf endif}\\
{\bf endfor}\\
\\
{\bf procedure} \= \underline{\sf construct-support-graph}$(\support(g(t)))$\\
\> takes a subset $\support(g(t))$ of $\leafs{\mbox{\sl Imp}_{\rightarrow g(t)}}$  weighted according to $g(t)$;\\
\> returns a sub-graph $\mbox{\sl Imp}'$ of $\mbox{\sl Imp}$.\\
$\mbox{\sl Imp}'$ := $\emptyset$\\
$open$ := $\support(g(t))$\\
{\bf while} \= $open \neq \emptyset$ {\bf do}\\
\> $open$ := $open \setminus \{p(t')\}$\\
\> {\bf choose} \= $a \in A(t'), e \in \effs(a), \con(e)=\{p\}$ such that\\
\> \> $\forall \poutcome \in \poutcomeset(e): (p(t'), \poutcome(t')) \in \mbox{\sl Imp}_{g(t)} \wedge \weight_{g(t)}(\poutcome(t')) = \weight(\poutcome(t'))$\\
\> {\bf for} \= each $\poutcome \in \poutcomeset(e)$ {\bf do}\\
\> \> {\bf choose} $q \in \add(\poutcome)$ such that $\weight_{g(t)}(q(t'+1)) = 1$\\
\> \> $\mbox{\sl Imp}'$ := $\mbox{\sl Imp}' \cup \{(p(t'),\poutcome(t')),(\poutcome(t'),q(t'+1))\}$\\
\> \> $open$ := $open \cup \{q(t'+1)\}$\\
\> {\bf endfor}
{\bf endwhile}\\
{\bf return} $\mbox{\sl Imp}'$
\end{tabbing}}
\caption{\label{code:extract-subroutines2}Sub-routines for ${\mathsf{extract\mbox{-}PRPlan}}$.}
\end{figure}

Let us get back to explaining the \extractprp\ procedure. After the
implication graph reduction, the procedure proceeds with the relaxed
plan extraction. The process makes use of proposition sets
$G(1),\dots,G(T)$, which are used to store time-stamped sub-goals
arising at layers $1 \leq t \leq T$ during the relaxed plan
extraction. The sub-routine \iterativesg\
(Figure~\ref{code:extract-subroutines2})
\begin{enumerate}[1.]
\item adds to the constructed relaxed plan all the time-stamped
actions responsible for the edges of the reduced implication graph
${\mbox{\sl Imp}'}$, and
\item subgoals everything outside the implication graph that condition
the applicability of the effects responsible for the edges of
${\mbox{\sl Imp}'}$.
\end{enumerate}
Here and in the later phases of the process, the sub-goals are added
into the sets $G(1),\dots,G(T)$ by the \subgoal\ procedure that simply
inserts each given proposition as a sub-goal at the first layer of its
appearance in the PRPG. Having accomplished this extract-and-subgoal
pass of \iterativesg\ over ${\mbox{\sl Imp}'}$, we also subgoal all
the goal conjuncts known at time $T$.

In the next phase of the process, the sub-goals are considered layer
by layer in decreasing order of time steps $T \geq t \geq 1$. For each
sub-goal $g$ at time $t$, certain supporting actions are selected into
the relaxed plan. If there is an action $a$ and some effect $e \in
\effs(a)$ that are known to be applicable at time $t-1$, and guarantee
to achieve $g$ with certainty, then $a$ is added to the constructed
relaxed plan at $t-1$. Otherwise, we
\begin{enumerate}[1.]
\item use the \constructsupport\ procedure to extract a sub-graph
${\mbox{\sl Imp}^{\;g(t)}}$ consisting of a set of implications that
together ensure achieving $g$ at time $t$, and
\item use the already discussed procedure \iterativesg\ to
\begin{enumerate}
\item add to the constructed relaxed plan all the time-stamped actions
responsible for the edges of ${\mbox{\sl Imp}^{\;g(t)}}$, and
\item subgoal everything outside this implication graph ${\mbox{\sl
Imp}^{\;g(t)}}$ that condition the applicability of the effects
responsible for the edges of ${\mbox{\sl Imp}^{\;g(t)}}$.
\end{enumerate}
\end{enumerate}
Processing this way all the sub-goals down to $G(1)$ finalizes the
extraction of the relaxed plan estimate. Section~\ref{ss:ex-extract} provides a detailed illustration of this
process on the PRPG constructed in Section~\ref{ss:ex-prpg}. In any event, it is easy to verify that the relaxed plan we extract is sound relative to our weight propagation, in the following sense.

\begin{proposition}
Let $(\actions,\initBN,\goal,\goalprob)$ be a probabilistic planning
task, $\seqactions$ be a sequence of actions applicable in
$\initprob$, and $\onecondrel$ be a relaxation function for $A$ such
that \buildprpg$(\seqactions, \actions, \cinitial,\goal,
\goalprob,\onecondrel)$ returns TRUE. Let $A(0)^{s},\dots,A(T-1)^{s}$
be the actions selected from $A(0),\dots,A(T-1)$ by \extractprp. When
constructing a relaxed planning graph using only
$A(0)^{s},\dots,A(T-1)^{s}$, then \getp$(T, \goal) \geq \goalprob$.
\end{proposition}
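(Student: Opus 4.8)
The plan is to follow the two-stage structure of \extractprp\ and argue that each stage preserves the goal-probability estimate. First I would establish a loop invariant for \reduceimp: the returned subgraph $\mbox{\sl Imp}'$ satisfies ${\mathsf{WMC}}(\Phi \wedge \bigwedge_{g \in G\setminus P(T)} \varphi_{g}) \geq \goalprob$, where each $\varphi_{g}$ is built from the leaves of $\mbox{\sl Imp}'_{\rightarrow g(T)}$ with chance-proposition weights $\subweight_{g(T)}(l)$. This is immediate from the code: the working graph is overwritten by a candidate $\mbox{\sl Imp}''$ only inside the branch guarded by the test ${\mathsf{WMC}}(\cdots)\geq\goalprob$, and the loop breaks (discarding the candidate) otherwise; hence the final $\mbox{\sl Imp}'$ always passes the test. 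Since this WMC expression has exactly the functional form computed by \getp\ at $t=T$, the reduction certifies \getp$(T,\goal)\geq\goalprob$ relative to $\mbox{\sl Imp}'$.

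Second, I would show that the actions selected by the extraction phase, when used to rebuild the PRPG, regenerate at least the subgraph $\mbox{\sl Imp}'$ together with the structure its weights depend on. The engine here is the subgoaling discipline: whenever \iterativesg\ or \constructsupport\ picks an action $a$ with effect $e$ to back an edge of $\mbox{\sl Imp}'$, the procedure \subgoal s $\pre(a)\cup\con(e)$, reinserting each such proposition at the layer of its first appearance. Processing the subgoals in decreasing time order $T,\dots,1$ then gives a backward induction: at each layer the actions backing the current subgoals are applicable in the rebuilt graph because their preconditions were themselves subgoaled and supported at strictly earlier layers, grounding out in the fixed ``past'' layers $-m,\dots,0$ that encode $\belief_{\seqactions}$ and are inherited unchanged. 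This yields $\mbox{\sl Imp}' \subseteq \mbox{\sl Imp}^{s}$ for the implication graph $\mbox{\sl Imp}^{s}$ of the rebuilt PRPG.

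Third, I would close the argument by monotonicity of the weight propagation. The chance-node update $\subweight=\weight\cdot(1-\flbweight)$ with $\flbweight=\prod_{r}[1-\subweight(r)]$, and the fact-node update $\subweight=1-\prod_{e}[1-\sum_{\poutcome}\subweight(\poutcome)]$, are both non-decreasing in the set of outgoing edges and in the descendant weights; hence in the larger graph $\mbox{\sl Imp}^{s}$ the dynamic weight of each relevant leaf is at least its $\mbox{\sl Imp}'$ value, and the WMC assembled by \getp\ is in turn non-decreasing in those chance-proposition weights. Combining with the first step would give \getp$(T,\goal)$ over $\mbox{\sl Imp}^{s}$ $\;\geq\;$ \getp$(T,\goal)$ over $\mbox{\sl Imp}' \geq \goalprob$.

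The hard part will be reconciling the two weight regimes. \reduceimp\ certifies its threshold using the dynamic weights $\subweight_{g(T)}$ produced by the \emph{original}, full-graph propagation, whereas the proposition asks about \getp\ recomputed on the rebuilt PRPG, whose propagation sees only the selected actions; since $\mbox{\sl Imp}^{s}$ is a subgraph of the full PRPG, its recomputed weights are a priori no larger than the ones the reduction relied on, so the monotonicity step must be anchored at $\mbox{\sl Imp}'$ rather than at the full graph. The crux is therefore to verify that the selected actions regenerate every path in $\mbox{\sl Imp}'$ that contributes to each relevant leaf weight, so that the recomputed weights dominate the $\mbox{\sl Imp}'$-intrinsic ones and the certified bound carries over. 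Care is also needed for the subgoals known at $T$ (the $\goal\cap P(T)$ branch) and for the certainty-support subgraphs returned by \constructsupport, which must be shown to contribute their edges without lowering any weight, using Lemma~\ref{lemma:subweight-complete} to guarantee the chosen support leaves achieve their targets with weight equal to their static weight.
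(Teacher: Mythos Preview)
The paper's own proof is a single sentence: ``By construction: \reduceimp\ leaves enough edges in the graph so that the weight propagation underlying \getp\ still concludes that the goal probability is high enough.'' That is, the authors treat the proposition as immediate from the loop invariant you state in your first paragraph, and stop there.

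Your first step is exactly the paper's argument. Everything from your second paragraph onward---the regeneration of $\mbox{\sl Imp}'$ by the selected actions, the monotonicity of the weight propagation in the edge set, and especially the ``hard part'' about reconciling the weights used inside \reduceimp\ with the weights recomputed on the rebuilt PRPG---goes well beyond what the paper actually proves. You have correctly noticed that the proposition, read literally, speaks of \emph{rebuilding} the PRPG from the selected actions and then evaluating \getp, whereas the one-line proof only appeals to the edge-removal invariant of \reduceimp\ on the \emph{existing} graph. The paper does not engage with this distinction; it simply takes the invariant as sufficient. Your analysis is more rigorous than the paper's, and the subtlety you isolate (that \reduceimp\ tests against dynamic weights $\subweight_{g(T)}$ that were propagated on the \emph{full} implication graph, not on the reduced one) is genuine and not addressed in the paper. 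Whether one regards the paper's proof as complete therefore depends on how charitably one reads ``constructing a relaxed planning graph using only $A(0)^{s},\dots,A(T-1)^{s}$''; the authors evidently intend the claim only relative to the weight propagation already computed, which is also signalled by the phrase ``sound relative to our weight propagation'' in the text preceding the proposition.
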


\begin{proof}
By construction: ${\mathsf{reduce\mbox{-}implication\mbox{-}graph}}$
leaves enough edges in the graph so that the weight propagation
underlying \getp\ still concludes that the goal probability is high
enough.
\end{proof}

\commentout{

\begin{theorem}
\label{t:sound}
Let $(\actions,\initBN,\goal,\goalprob)$ be a probabilistic planning
task, $\seqactions$ be a sequence of actions applicable in
$\initprob$, and $\onecondrel$ be a relaxation function for $A$ such
that \buildprpg$(\seqactions, \actions, \cinitial,\goal,
\goalprob,\onecondrel)$ returns TRUE. Let $A(0)^{s},\dots,A(T-1)^{s}$
be the actions selected from by $A(0),\dots,A(T-1)$ \extractprp.  If,
for each pair $g,g' \in \goal$, and each pair $l,l' \in uP(-m), l\neq
l'$, we have $\mbox{\sl Imp}_{l(-m)\rightarrow g(T)}\cap \mbox{\sl
Imp}_{l'\rightarrow g'(T)} = \emptyset$,
then 
$\seqactions\onecondrel$ concatenated with an arbitrary linearization
of $A(0)^{s},$$\dots,A(T-1)^{s}$ is a plan for
$(\actions\onecondrel,\initBN,\goal,\goalprob)$.
\end{theorem}

}

\subsection{Example: Extracting a Relaxed Plan from PRPG}
\label{ss:ex-extract}

We illustrate the process of the relaxed plan extraction on the PRPG as in Figure~\ref{fig:impexample}, constructed for the belief state and problem specification as in example in Section~\ref{ss:ex-prpg}. In this example we have $T=2$, $\goal \cap uP(2) = \{ b_{2} \}$, and thus the implication graph $\mbox{\sl Imp}$ gets immediately reduced to its sub-graph $\mbox{\sl Imp}'$ depicted in Figure~\ref{fig:impexample4}a.  As the plan $\plan$ to the belief state in question consists of only a single action $a^{\sf mbr}$, the only action instances that are considered for elimination by the outer {\bf for} loop of \reduceimp\  are  $a^{\sf mbr}(0)$ and $a^{\sf mbr}(1)$. If $a^{\sf mbr}(0)$ is chosen to be examined, then the implication sub-graph $\mbox{\sl Imp}'' = \mbox{\sl Imp}'$ is further reduced by removing all the edges due to $a^{\sf mbr}(0)$, and the resulting $\mbox{\sl Imp}''$ appears\footnote{The dashed edges in Figure~\ref{fig:impexample4}b can be removed from $\mbox{\sl Imp}''$ either now or at a latter stage if $\mbox{\sl Imp}''$ is chosen to replace $\mbox{\sl Imp}'$.} in Figure~\ref{fig:impexample4}b. The $\Phi$ and $\varphi_{b_{2}}$ components of the evaluated formula $\Phi\wedge\varphi_{b_{2}}$ are given by Equation~\ref{e:updatedPhi} and Equation~\ref{e:stam3}, respectively, and the weights associated with the chance propositions in Equation~\ref{e:stam3} over the reduced implication graph $\mbox{\sl Imp}''$ are
\begin{equation}
\label{e:superstam4}
\begin{split}
  \weight\left( \langle b_{2,b_{2}}\rangle \right) & =  \subweight_{b_{2}(1)}\left( b_{2}(-1) \right) = 1 \\
  \weight\left( \langle r_{1,b_{2}}\rangle \right) & =  \subweight_{b_{2}(1)}\left( r_{1}(-1) \right) = 0.7 \hspace{1cm}.\\
  \weight( \poutcome^{\sf mbr}_{1}(1)) & = \subweight_{b_{2}(1)}( \poutcome^{\sf mbr}_{1}(1)) = 0.7
 \end{split}
\end{equation}
The weight model counting of $\Phi\wedge\varphi_{b_{2}}$ evaluates to $0.724 < \theta$, and thus $\mbox{\sl Imp}''$ does not replace $\mbox{\sl Imp}'$. The only alternative action removal is this of $a^{\sf mbr}(1)$, and it can be seen from the example in Section~\ref{ss:ex-prpg} that this attempt for action elimination will also result in probability estimate lower than $\theta$. Hence, the only effect of \reduceimp\ 
on the PRPG processed by the \extractprp\ procedure is the reduction of the implication graph to only the edges relevant to achieving $\{b_{2}\}$ at time $T=2$. The reduced implication sub-graph $\mbox{\sl Imp}'$ returned by the \reduceimp\ procedure is 
depicted in Figure~\ref{fig:impexample4}a.

\begin{figure}[t]
\begin{center}
\begin{tabular}{c}
\begin{minipage}{\textwidth}
\def\objectsizestyle{\scriptstyle}
\xymatrix@R=20pt@C=30pt{							 &  *+[F-:<3pt>]{\poutcome^{\sf mbr}_{1}(\mbox{-}1)} \ar `r/18pt[d] `/5pt[ddr] [ddr] & & *+[F-:<3pt>]{\poutcome^{\sf mbr}_{1}(0)} \ar `r/18pt[d] `/5pt[ddr] [ddr] & & *+[F-:<3pt>]{\poutcome^{\sf mbr}_{1}(1)}  \ar `r/18pt[d] `/5pt[ddr] [ddr]\\
*+[F]{r_{1}(\mbox{-}1)} \ar[ur]  \ar[r] & *+[F-:<3pt>]{\poutcome^{r_{1}}(\mbox{-}1)} \ar[r] & *+[F]{r_{1}(0)}  \ar[ur]  & & & &\\
*+[F]{b_{2}(\mbox{-}1)} \ar[r] & *+[F-:<3pt>]{\poutcome^{b_{2}}(\mbox{-}1)} \ar[r] & *+[F]{b_{2}(0)} \ar[r] & *+[F-:<3pt>]{\poutcome^{b_{2}}(0)} \ar[r] & *+[F]{b_{2}(1)} \ar[r] & *+[F-:<3pt>]{\poutcome^{b_{2}}(1)} \ar[r] & *+[F]{b_{2}(2)}
}
\end{minipage}\\
\ \\
(a)\\
\ \\
\begin{minipage}{\textwidth}
\def\objectsizestyle{\scriptstyle}
\xymatrix@R=20pt@C=30pt{							 &  *+[F-:<3pt>]{\poutcome^{\sf mbr}_{1}(\mbox{-}1)} \ar `r/18pt[d] `/5pt[ddr] [ddr] & &  & & *+[F-:<3pt>]{\poutcome^{\sf mbr}_{1}(1)}  \ar `r/18pt[d] `/5pt[ddr] [ddr]\\
*+[F]{r_{1}(\mbox{-}1)} \ar[ur]  \ar@{-->}[r] & *+[F-:<3pt>]{\poutcome^{r_{1}}(\mbox{-}1)} \ar@{-->}[r] & *+[F]{r_{1}(0)}   & & & &\\
*+[F]{b_{2}(\mbox{-}1)} \ar[r] & *+[F-:<3pt>]{\poutcome^{b_{2}}(\mbox{-}1)} \ar[r] & *+[F]{b_{2}(0)} \ar[r] & *+[F-:<3pt>]{\poutcome^{b_{2}}(0)} \ar[r] & *+[F]{b_{2}(1)} \ar[r] & *+[F-:<3pt>]{\poutcome^{b_{2}}(1)} \ar[r] & *+[F]{b_{2}(2)}
}
\end{minipage}\\
\ \\
(b)\\
\ \\
\begin{minipage}{\textwidth}
\def\objectsizestyle{\scriptstyle}
\xymatrix@R=20pt@C=30pt{
							 & & & *+[F-:<3pt>]{\poutcome^{\sf ml}(0)}  \ar `r/7pt[d] `[dr] [dr] \\
*+[F]{r_{1}(\mbox{-}1)} \ar[r] & *+[F-:<3pt>]{\poutcome^{r_{1}}(\mbox{-}1)} \ar[r] & *+[F]{r_{1}(0)}  \ar[r] & *+[F-:<3pt>]{\poutcome^{r_{1}}(0)}  \ar[r] & *+[F-,]{r_{1}(1)} & \; & \; & \; \\
*+[F]{r_{2}(\mbox{-}1)} \ar[r] & *+[F-:<3pt>]{\poutcome^{r_{2}}(\mbox{-}1)} \ar[r] & *+[F]{r_{2}(0)} \ar `r/15pt[u] `[uur] [uur] 
}
\end{minipage}\\
\ \\
(c)
\end{tabular}
\end{center}
\caption{\label{fig:impexample4} Illustrations for various steps of the relaxed plan extraction from the PRPG constructed in Section~\ref{ss:ex-prpg}, and, in particular, from the implication graph of the latter, depicted in Figure~\ref{fig:impexample}.}
\end{figure}
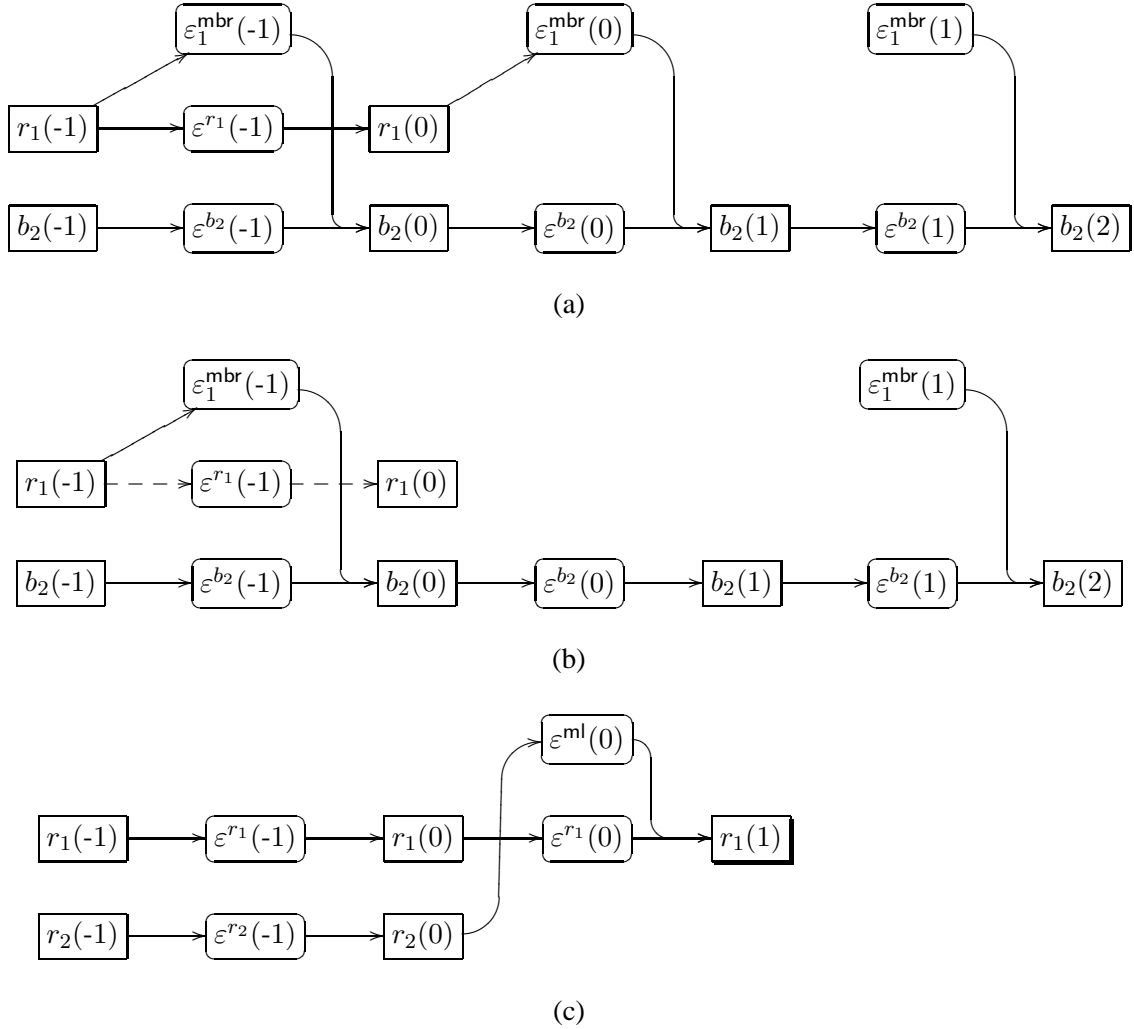

Next, the \iterativesg\ procedure iterates over the edges of $\mbox{\sl Imp}'$ and adds to the initially empty relaxed plan applications of $a^{\sf mbr}$ at times $0$ and $1$. The action $a^{\sf mbr}$ has no preconditions, and the condition $r_{1}$ of the effect $\poutcome^{\sf mbr}_{1} \in \effs(a^{\sf mbr})$ is known at time $1$. Hence, \iterativesg\ invokes the \subgoal\ procedure on $\{r(1)\}$, and the latter is added into the proposition set $G(1)$. The subsequent call \subgoal$(\goal\cap P(T))$ = \subgoal$(\{r_{1}\})$ leads to no further extensions of $G(2),G(1)$ as we already have $r_{1} \in G(1)$. Hence, the outer {\bf for}  loop of \extractprp\ starts with $G(2) = \emptyset$, and $G(1) = \{r_{1}\}$.

Since  $G(2)$ is empty, the first sub-goal considered by \extractprp is $r_{1}$ from $G(1)$. For $r_{1}$ at time $1$, no action effect at time $0$ passes the test of the {\bf if} statement---the condition $r_{2}$ of $\poutcome^{\sf ml}$ is not known at time $0$, and the same is true\footnote{In fact, it is easy to see from the construction of the \subgoal\ procedure that if $p$ belongs to $G(t)$, then the condition of the noop's effect $\poutcome^{p}$ cannot be known at time $t-1$.} for $\poutcome^{r_{1}}$. Hence, the subgoal $r_{1}(1)$ is processed by extracting a sub-plan to support achieving it with certainty. First, the \constructsupport\ procedure is called with $\support(r_{1}(1)) = \{r_{1}(-1),r_{2}(-1)\}$ (see Section~\ref{ss:ex-prpg}). The extracted sub-graph ${\mbox{\sl Imp}^{\;r_{1}(1)}}$ of the original implication graph ${\mbox{\sl Imp}}$ is depicted in Figure~\ref{fig:impexample4}c, and invoking the procedure \iterativesg\ on ${\mbox{\sl Imp}^{\;r_{1}(1)}}$ results in adding (i) application of $a^{\sf ml}$ at time $0$, and (ii) no new subgoals. Hence, the proposition sets $G(1),G(2)$ get emptied, and thus we end up with extracting a relaxed plan $\langle a^{\sf mbr}(0), a^{\sf ml}(0), a^{\sf mbr}(1) \rangle$.

\section{Empirical Evaluation}
\label{ss:results}

We have implemented \pff\ in C, starting from the \cff\ code. With
$\goalprob=1.0$, \pff\ behaves exactly like \cff\ (except that \cff\
cannot handle non-deterministic effects).  Otherwise, \pff\ behaves as
described in the previous sections, and uses Cachet
\cite{sang:etal:aaai-05} for the weighted model counting.
To better home in on strengths and weaknesses of our approach, the
empirical evaluation of \pff\ has been done in two steps. In
Section~\ref{ss:results1} we evaluate \pff\ on problems having
non-trivial uncertain initial states, but only deterministic actions.
In Section~\ref{ss:results2} we examine \pff\ on problems with
probabilistic action effects, and with both sources of uncertainty. We
compare \pff's performance to that of the probabilistic planner
\pond~\cite{pond:icaps-06}. The reasons for choosing \pond\ as the
reference point are twofold. First, similarly to \pff, \pond\
constitutes a forward-search planner guided by a non-admissible
heuristic function based on (relaxed) planning graph
computations. Second, to our knowledge, \pond\ clearly is the most
efficient probabilistic planner reported in the
literature.\footnote{In our experiments we have used a recent version
2.1 of \pond\ that significantly enhances \pond2.0~\cite{pond:icaps-06}. The authors would like to thank Dan Bryce and
Rao Kambhampati for providing us with a binary distribution of
\pond2.1.}

The experiments were run on a PC running at 3GHz with 2GB main memory
and 2MB cache running Linux. Unless stated otherwise, each
domain/problem pair was tried at four levels of desired probability of
success $\theta \in \{0.25, 0.5, 0.75, 1.0\}$. Each run of a planner
was time-limited by 1800 seconds of user time. \pff\ was run in the
default configuration inherited from \ff, performing one trial of
enforced hill-climbing and switching to best-first search in case of
failure. In domains without probabilistic effects, we found that
\pff's simpler relaxed plan extraction developed for that case~\cite{domshlak:hoffmann:icaps-06}, performs better than
the one described in here. We hence switch to the simpler version in
these domains.\footnote{Without probabilistic effects, relaxed plan
extraction proceeds very much like in \cff, with an additional
straightforward backchaining selecting support for the unknown
goals. The more complicated techniques developed in here to deal with
relaxed plan extraction under probabilistic effects appear to have a
more unstable behavior than the simpler techniques. If there {\em are}
probabilistic effects, then the simple backchaining is not meaningful
because it has no information on how many times an action must be
applied in order to sufficiently support the goal.} 

Unlike \pff, the heuristic computation in \pond\ has an element of
randomization; namely, the probability of goal achievement is
estimated via sending a set of random particles through the relaxed
planning graph (the number of particles is an input parameter). For
each problem instance, we averaged the runtime performance of \pond\
over 10 independent runs. In special cases where \pond\ timed out on
some runs for a certain problem instance, yet not on all of the 10
runs, the average we report for \pond\ uses the lower-bounding time
threshold of 1800s to replace the missing time points. In some cases,
\pond's best-case performance differs a lot from its average
performance; in these cases, the best-case performance is also
reported. We note that, following the suggestion of Dan Bryce, \pond\
was run in its default parameter setting, and, in particular, this
includes the number of random particles (64) selected for computing
\pond's heuristic estimate~\cite{pond:icaps-06}.

\subsection{Initial State Uncertainty and Deterministic Actions}
\label{ss:results1}

We now examine the performance of \pff\ and \pond\ in a collection of
domains with probabilistic initial states, but with deterministic
 action effects. We will consider the domains one by one, discussing for each
a set of runtime plots. For some of the problem instances,
Table~\ref{T:results} shows more details, providing features of the
instance size as well as detailed results for \pff, including the
number of explored search states and the plan length.

\begin{table*}[tb]
\begin{center}
{\scriptsize
\begin{tabular}{|l|l||c|c||c||c|}\hline
    &       &        $\goalprob=0.25$  & $\goalprob=0.5$   & $\goalprob=0.75$ & $\goalprob=1.0$ \\\hline
Instance           & \#actions/\#facts/\#states& $t$/$|S|$/$l$      & $t$/$|S|$/$l$      &      $t$/$|S|$/$l$  & $t$/$|S|$/$l$  \\ \hline\hline
Safe-uni-70 &70/71/140      & 1.39/19 /18       & 4.02/36/35        & 8.06/54/53  & 4.62/71 /70 \\ \hline     
Safe-cub-70 &70/70/138     &   0.28/6/5         & 0.76/13/12        & 1.54/22/21   & 4.32/70/69 \\ \hline \hline
Cube-uni-15 &6/90/3375       &    3.25/145/26     & 3.94/150/34       & 5.00/169/38   & 25.71/296/42 \\ \hline     
Cube-cub-15 &6/90/3375      &    0.56/41/8       & 1.16/70/13         &1.95/109/18    &26.35/365/42 \\ \hline    \hline    
Bomb-50-50  &2550/200/$> 2^{100}$     &    0.01/1/0        &0.10/17/16         & 0.25/37/36   & 0.14/51/50   \\ \hline     
Bomb-50-10  &510/120/$> 2^{60}$      &    0.00/1/0        & 0.89/248/22       & 4.04/778/62  & 1.74/911/90  \\ \hline   
Bomb-50-5   &255/110/$> 2^{55}$      &    0.00/1/0        & 1.70/468/27       & 4.80/998/67  &2.17/1131/95 \\ \hline  
Bomb-50-1   &51/102/$> 2^{51}$      &    0.00/1/0        &  2.12/662/31      & 6.19/1192/71 &2.58/1325/99 \\ \hline    \hline
Log-2    &3440/1040/$> 20^{10}$   &   0.90/117/54      & 1.07/152/62       & 1.69/205/69  & 1.84/295/78\\ \hline
Log-3    &3690/1260 /$> 30^{10}$  &    2.85/159/64     & 8.80/328/98      & 4.60/336/99  & 4.14/364/105\\ \hline
Log-4    &3960/1480/$> 40^{10}$   &    2.46/138/75     & 8.77/391/81      & 6.20/377/95  & 8.26/554/107 \\ \hline \hline  
Grid-2    &2040/825 /$> 36^{10}$   &  0.07/39/21        & 1.35/221/48       &6.11/1207/69 & 6.14/1207/69     \\ \hline     
Grid-3    &2040/841 /$> 36^{10}$   &  16.01/1629/76     & 15.8/1119/89      &82.24/3974/123  & 66.26/3974/123  \\ \hline     
Grid-4    &2040/857 /$> 36^{10}$   &  28.15/2167/96     & 51.58/2541/111     & 50.80/2541/115      &193.47/6341/155     \\ \hline  \hline
Rovers-7    &393/97 /$> 6^{3}*3^8$      &  0.01/ 37/18        &0.01/ 37/18         &0.01/ 37/18          &0.01/ 37/18      \\ \hline
RoversP-7    &393/133 /$> 6^{3}*3^8$    &  2.15/942/65       &2.23/983/75        &2.37/1008/83        &2.29/1008/83         \\ \hline
RoversPP-7   &393/133 /$> 6^{3}*3^8$    &  8.21/948/65       &12.48/989/75       &12.53/994/77        &16.20/1014/83        \\ \hline
RoversPPP-7  &395/140 /$> 6^{3}*3^8$    &  25.77/950/67      &41.18/996/79       & 0.01/UNSAT          & 0.01/UNSAT             
\\ \hline
\end{tabular}
}
\end{center}
\caption{\label{T:results}Empirical results for problems with
  probabilistic initial states. Times $t$ in seconds, search space
  size $|S|$ (number of calls to the heuristic function), plan length
  $l$.}
\end{table*}

Our first three domains are probabilistic versions of traditional
conformant benchmarks: ``Safe'', ``Cube'', and ``Bomb''. In Safe, out
of $n$ combinations one opens the safe. We are given a probability
distribution over which combination is the right one. The only type
of action in Safe is trying a combination, and the objective is to
open the safe with probability $\geq \goalprob$. We experimented with
two probability distributions over the $n$ combinations, a uniform one
(``Safe-uni'') and a distribution that declines according to a cubic
function (``Safe-cub'').  Table~\ref{T:results} shows that \pff\ can
solve this very efficiently even with $n=70$. Figure~\ref{graph:safe}
compares between \pff\ and \pond, plotting their time performance on
an identical linear scale, where $x$-axes show the number of
combinations.

\begin{figure}[tb]
\begin{center}
\begin{tabular}{cc}
  \includegraphics[width=9.0cm]{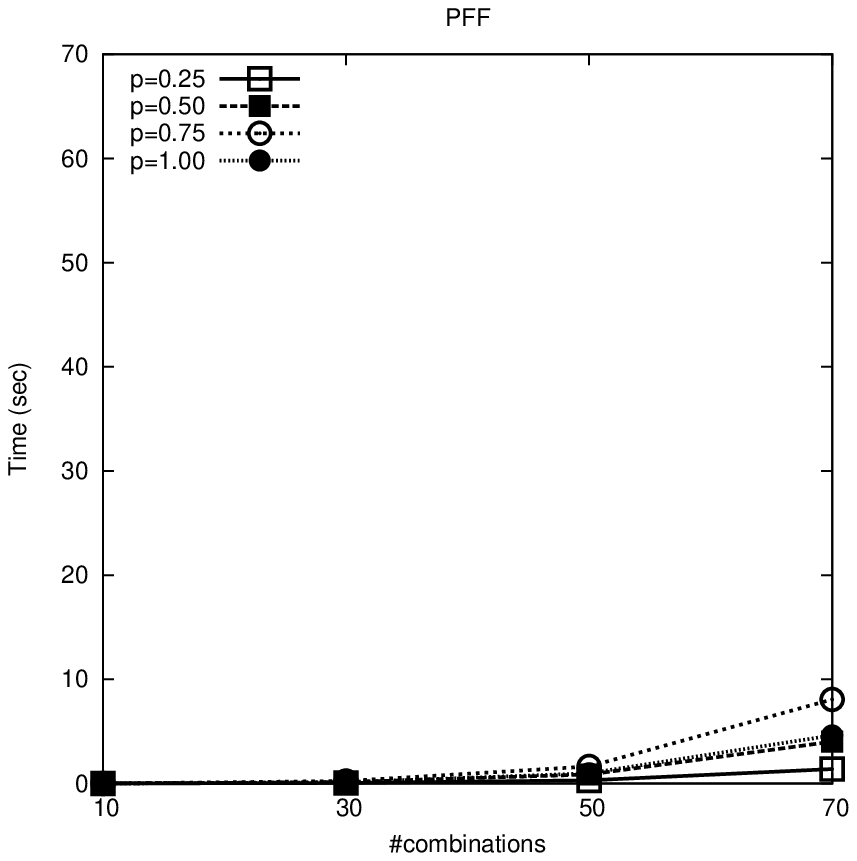} & \hspace{-1.5cm}
\includegraphics[width=9.0cm]{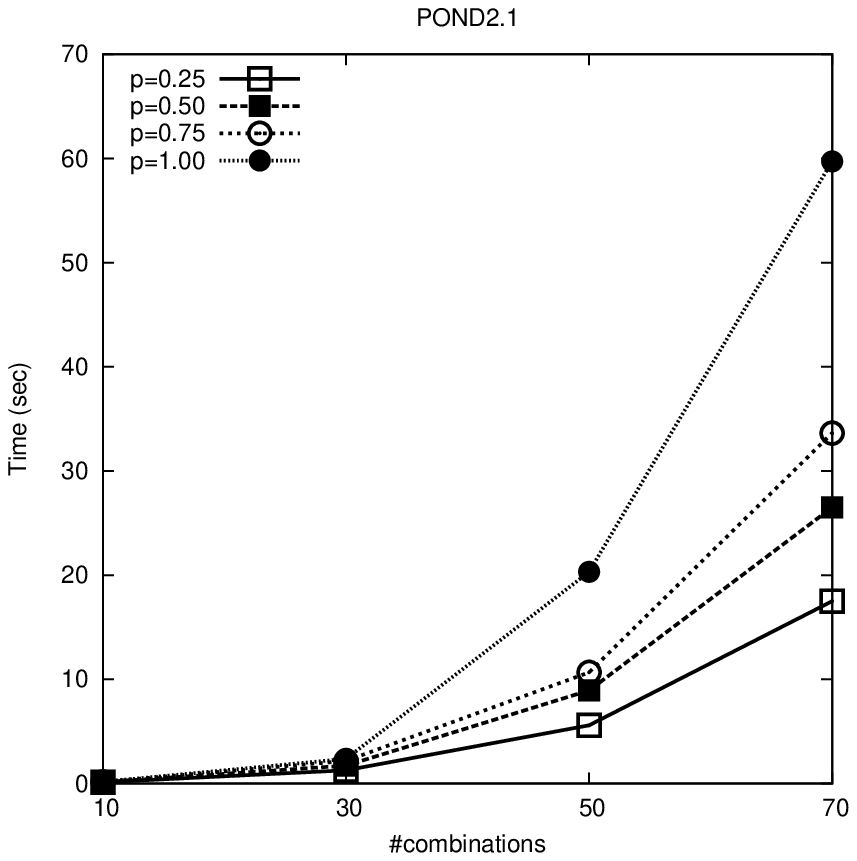}\\
\multicolumn{2}{c}{(a) Uniform prior distribution over the combinations.}\\
  \includegraphics[width=9.0cm]{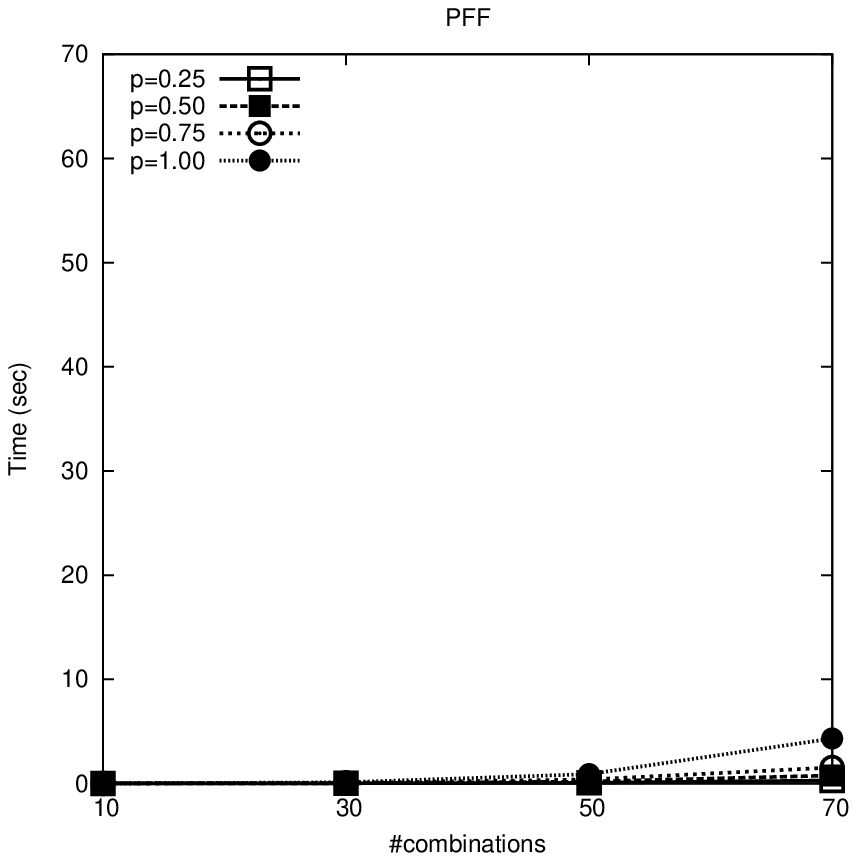} &\hspace{-1.5cm}
\includegraphics[width=9.0cm]{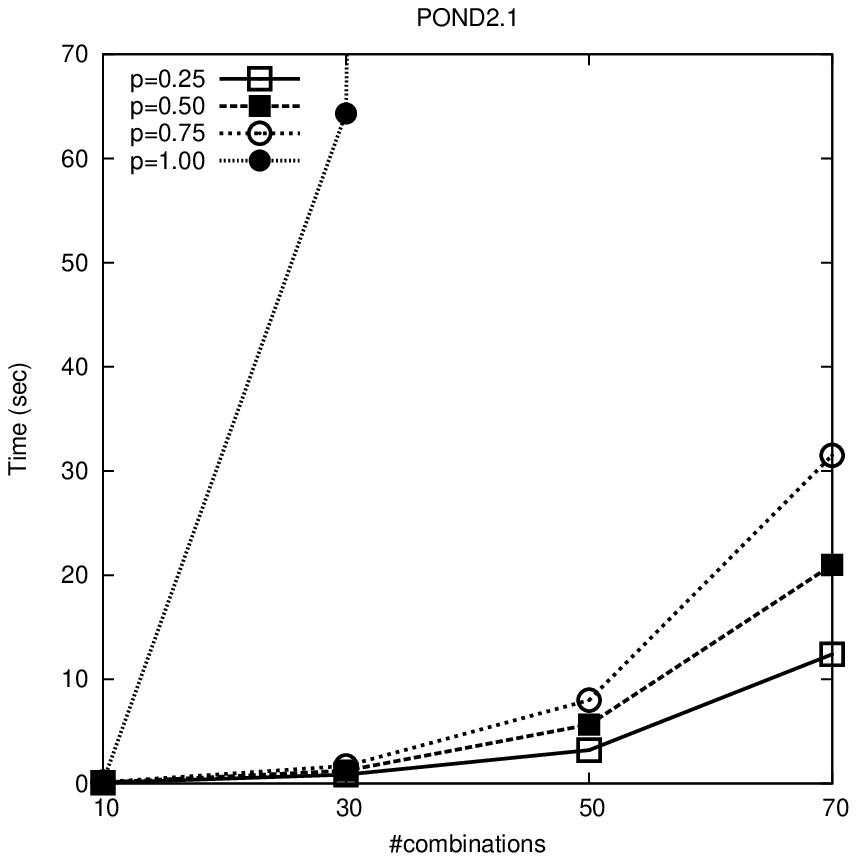}\\
\multicolumn{2}{c}{(b) Cubic decay prior distribution over the combinations.}
\end{tabular}
\end{center}
\vspace{-0.2cm}
\caption{\label{graph:safe} The Safe domain, \pff\ (left) vs.\ \pond\ (right).}
\end{figure}

From the graphs it is easy to see that \pff\ outperforms \pond\ by at
least an order of magnitude on both Safe-uni and Safe-cub. But a more
interesting observation here is not necessarily the difference in time
performance, but the relative performance of each planner on Safe-uni
and Safe-cub. Note that Safe-cub is somewhat ``easier'' than Safe-uni
in the sense that, in Safe-cub, fewer combinations must be tried to
guarantee a given probability $\goalprob$ of opening the safe.  This
because the dominant part of the probability mass lies on the
combinations at the head of the cubic distribution (the last
combination has probability $0$ to be the right combination, and thus
it needs not be tried even when $\goalprob=1.0$). The question is now
whether the heuristic functions of \pff\ and \pond\ exploit this
difference between Safe-uni and Safe-cub. Table~\ref{T:results} and
Figure~\ref{graph:safe} provide an affirmative answer for this
question for the heuristic function of \pff. The picture with \pond\
was less clear as the times spent by \pond\ on (otherwise identical)
instances of Safe-uni and Safe-cub were roughly the same.\footnote{On
Safe-cub with $n=70$ and $\theta \in \{0.75, 1.0\}$, \pond\ undergoes
an exponential blow-up that is not shown in the graphs since these
data points would obscure the other data points; anyway, we believe
that this blow-up is due only to some unfortunate troubles with
numerics.}

Another interesting observation is that, for both \pff\ and \pond,
moving from $\goalprob=1.0$ to $\goalprob<1.0$, that is, from planning
with qualitative uncertainty to truly probabilistic planning,
typically did not result in a performance decline. We even get {\em
improved} performance (except for $\goalprob=0.75$ in Safe-uni). The
reason seems to be that the plans become shorter. This trend can be
observed also in most other domains. The trend is particularly
remarkable for \pff, since moving from $\goalprob=1.0$ to
$\goalprob<1.0$ means to move from a case where no model counting is
needed to a case where it is needed. (In other words, \pff\
automatically ``specializes'' itself for the qualitative uncertainty,
by not using the model counting. To our knowledge, the same is not
true of \pond, which uses the same techniques in both cases.)

\begin{figure}[tb]
\begin{center}
\begin{tabular}{cc}
\includegraphics[width=9.0cm]{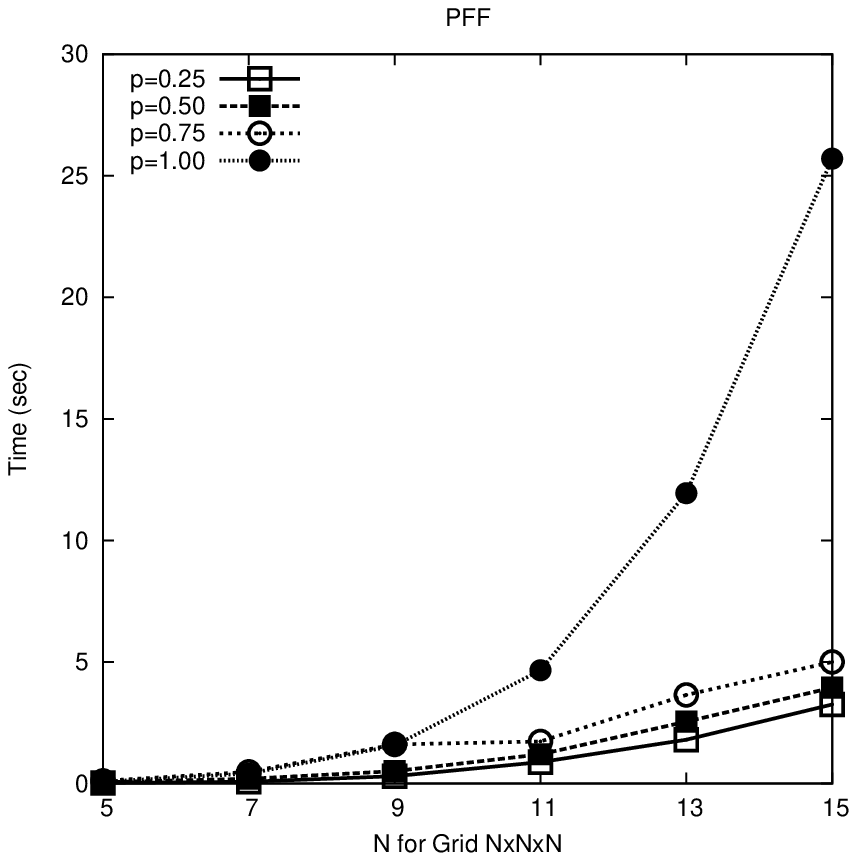} & \hspace{-1.5cm}
\includegraphics[width=9.0cm]{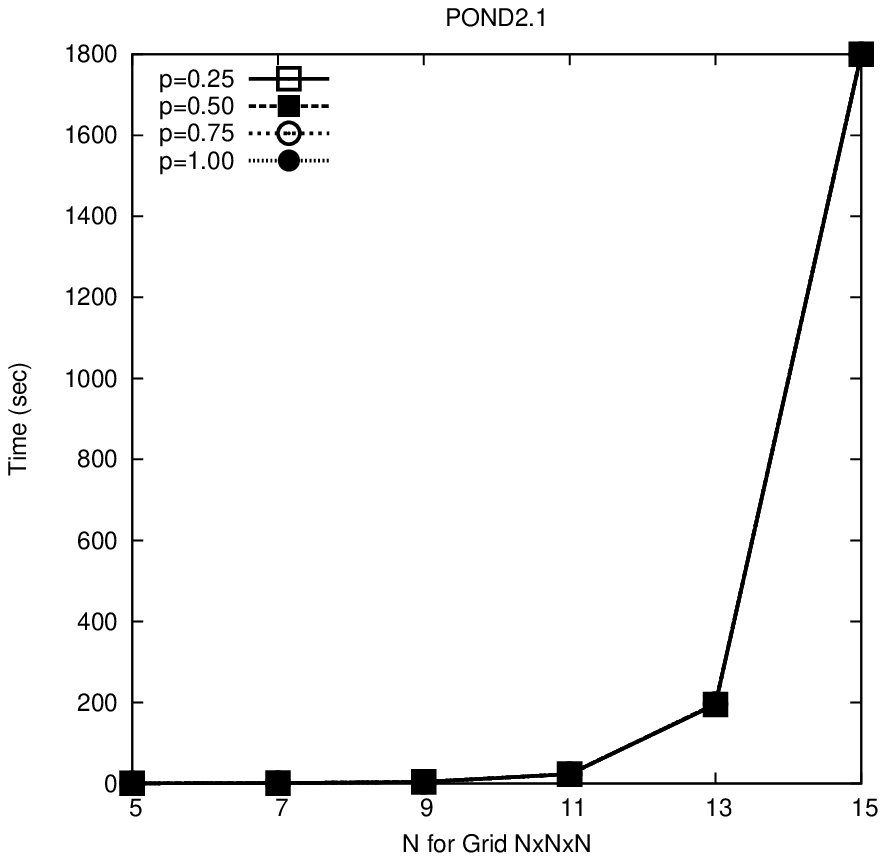}\\
\multicolumn{2}{c}{(a) Uniform prior distribution over the initial position.}\\
\includegraphics[width=9.0cm]{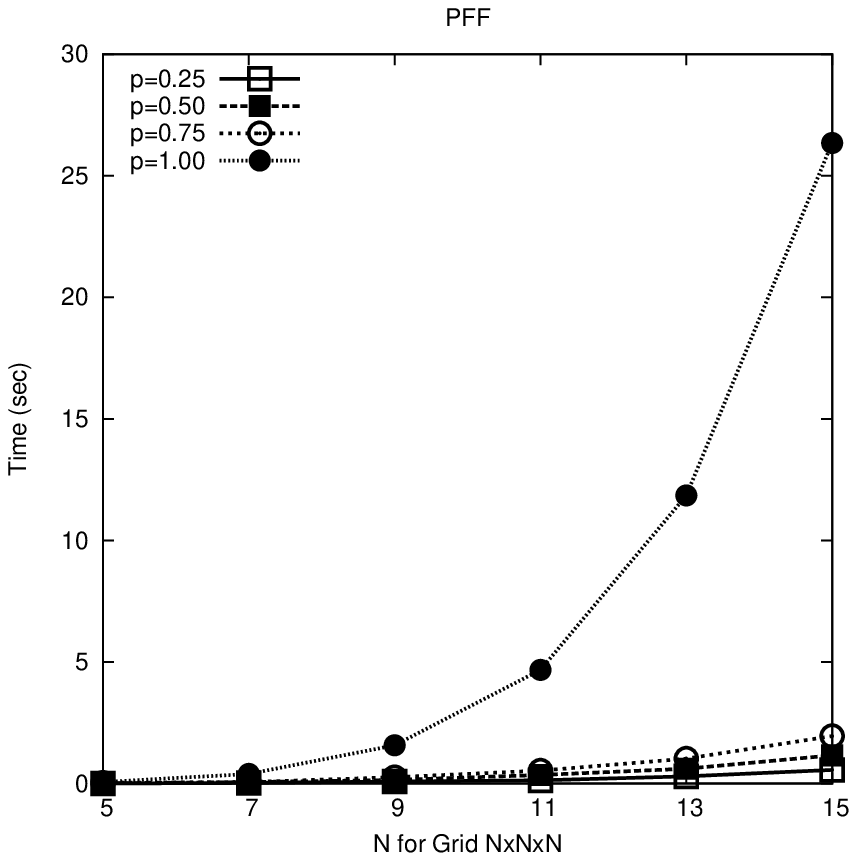} & \hspace{-1.5cm}
\includegraphics[width=9.0cm]{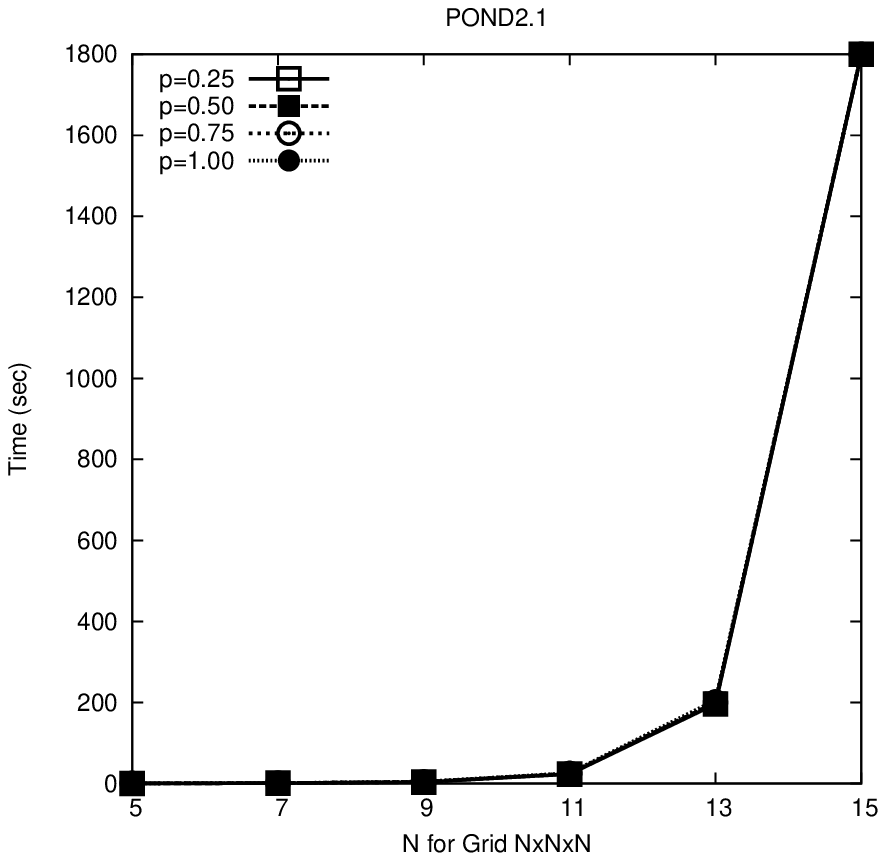}\\
\multicolumn{2}{c}{(b) Cubic decay prior distribution over the initial position.}
\end{tabular}
\end{center}
\vspace{-0.2cm}
\caption{\label{graph:cube} The Cube domain, \pff\ (left) vs.\ \pond\ (right).}
\end{figure}

In Cube, the task is to move into a corner of a $3$-dimensional grid,
and the actions correspond to moving from the current cube cell to one
of the (up to 6) adjacent cube cells. Again, we created problem
instances with uniform and cubic distributions (over the initial
position in each dimension), and again, \pff\ scales well, easily
solving instances on a $15\times 15\times 15$ cube.  Within our time
limit, \pond\ was capable of solving Cube problems with cube width
$\leq 13$. Figure~\ref{graph:cube} compares between \pff\ and \pond\
in more detail, plotting their time performance on {\em different}
linear scales (with $x$-axes capturing the width of the grid in each
dimension), and showing at least an order of magnitude advantage for
\pff. Note that,
\begin{itemize}
\item \pff\ generally becomes faster with decreasing $\goalprob$ (with
decreasing hardness of achieving the objective), while $\goalprob$
does not seem to have a substantial effect on the performance of
\pond,
\item \pff\ exploits the relative easiness of Cube-cub (e.g., see
Table~\ref{T:results}), while the time performance of \pond\ on
Cube-cub and Cube-uni is qualitatively identical.
\end{itemize}
We also tried a version of Cube where the task is to move into the
grid {\em center}. \pff\ is bad at doing so, reaching its performance
limit at $n=7$. This weakness in the Cube-center domain is inherited
from \cff. As detailed by \citeA{cff}, the reason for the weakness
lies in the inaccuracy of the heuristic function in this domain. There
are two sources of this inaccuracy. First, to solve
Cube-center in reality, one must start with moving into a corner in order to establish her position; in the relaxation, without delete lists, this is not necessary. Second, the relaxed planning graph computation over-approximates
not only what can be achieved in future steps, but also what has
already been achieved on the path to the considered belief state. For
even moderately long paths of actions, the relaxed planning graph
comes to the (wrong) conclusion that the goal has already been
achieved, so the relaxed plan becomes empty and there is no heuristic
information.

\begin{figure}[tb]
\begin{center}
\begin{tabular}{cc}
\includegraphics[width=9.0cm]{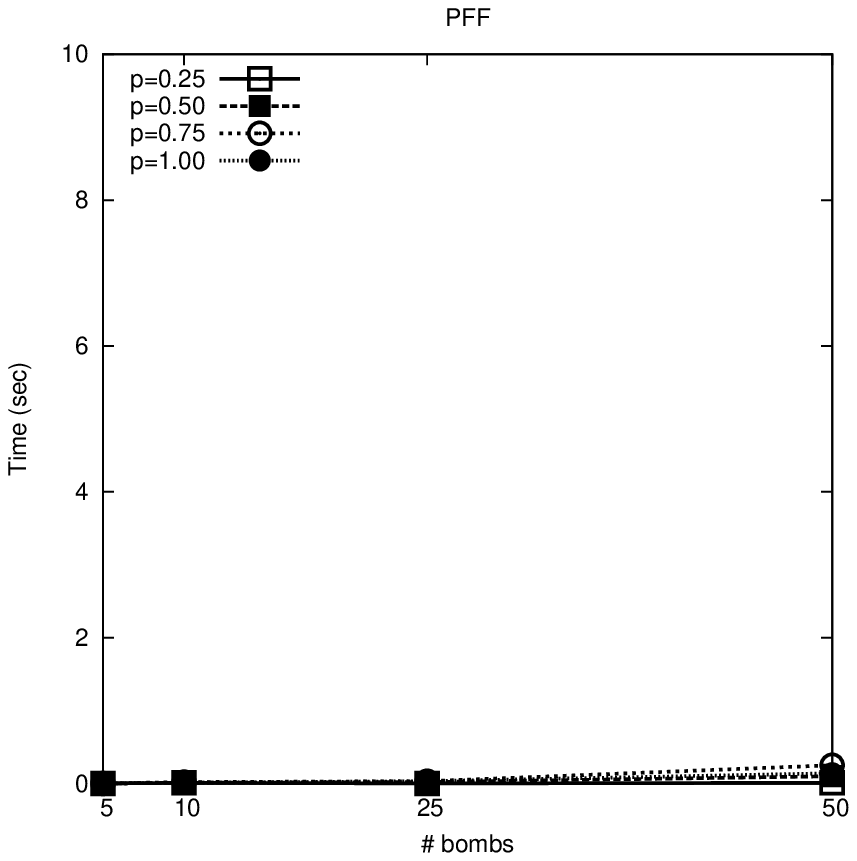} & \hspace{-1.5cm}
\includegraphics[width=9.0cm]{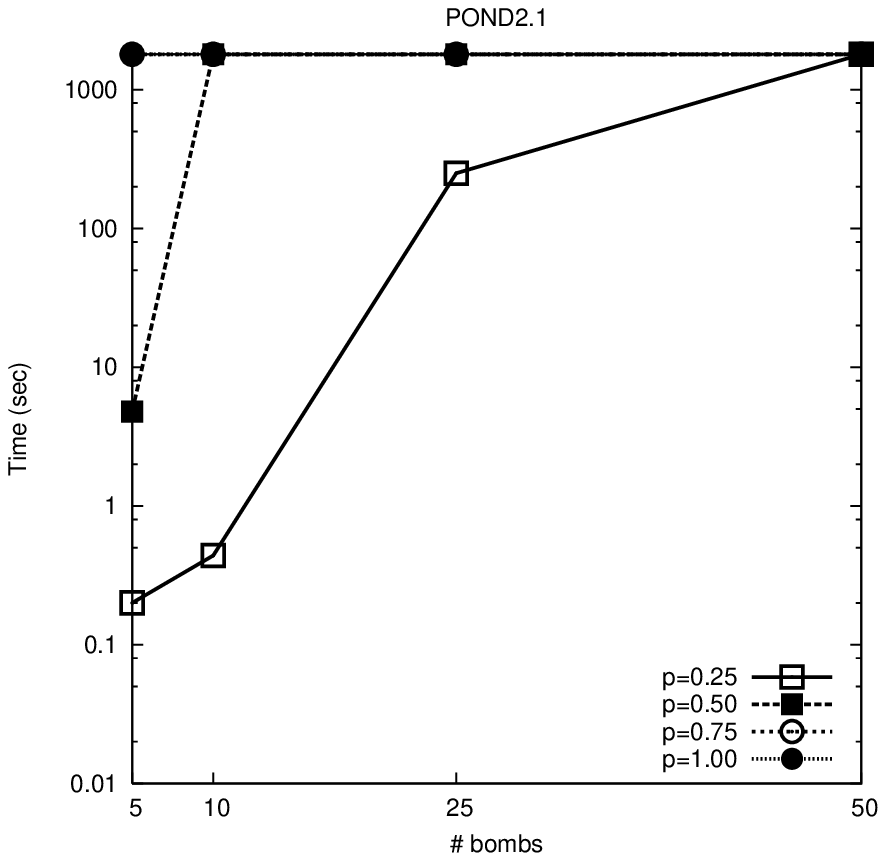}\\
\multicolumn{2}{c}{(a) 50 toilets}\\
\includegraphics[width=9.0cm]{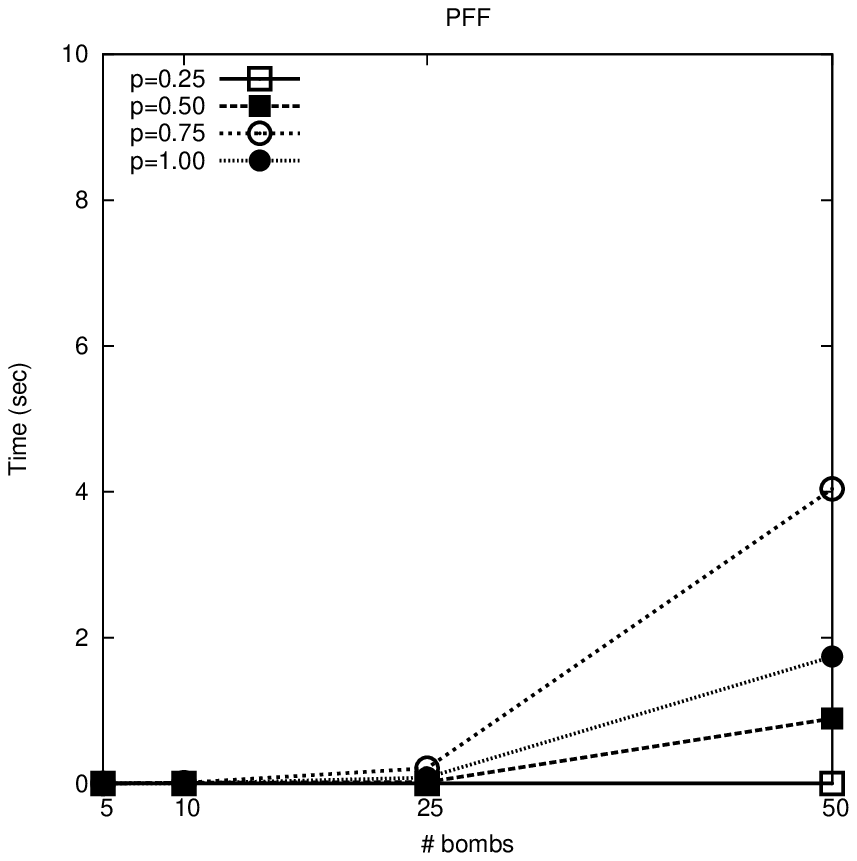} & \hspace{-1.5cm}
\includegraphics[width=9.0cm]{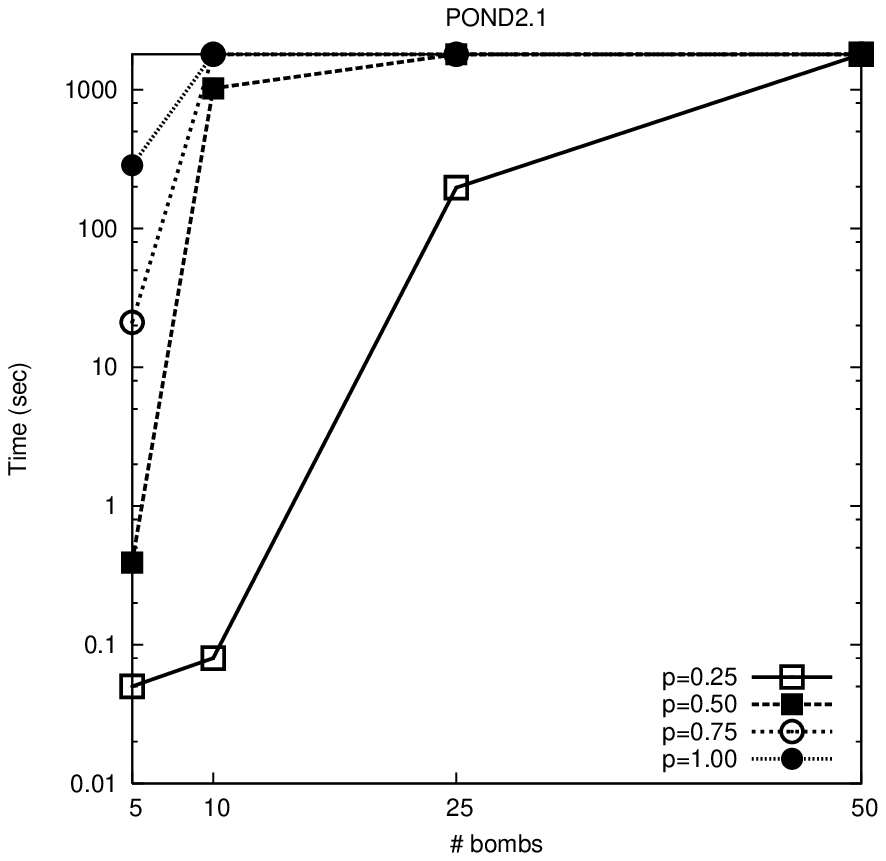}\\
\multicolumn{2}{c}{(b) 10 toilets}\\
\end{tabular}
\end{center}
\vspace{-0.2cm}
\caption{\label{graph:bomb}The Bomb domain, \pff\ (left) vs.\ \pond\ (right).}
\end{figure}

\begin{figure}[tb]
\begin{center}
\begin{tabular}{cc}
\includegraphics[width=9.0cm]{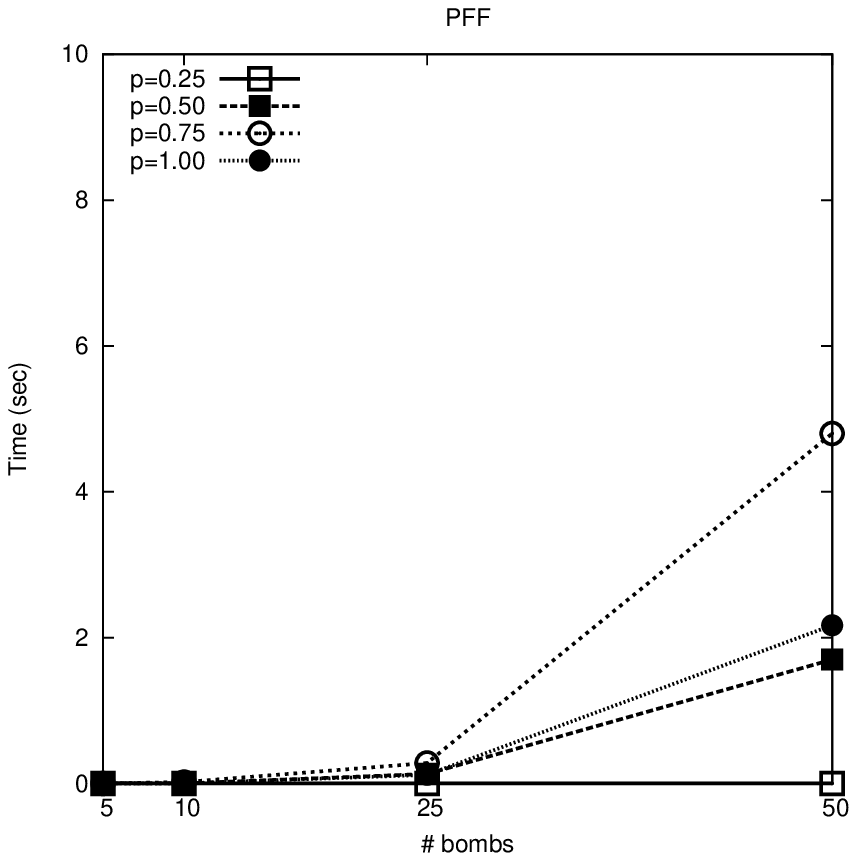} & \hspace{-1.5cm}
\includegraphics[width=9.0cm]{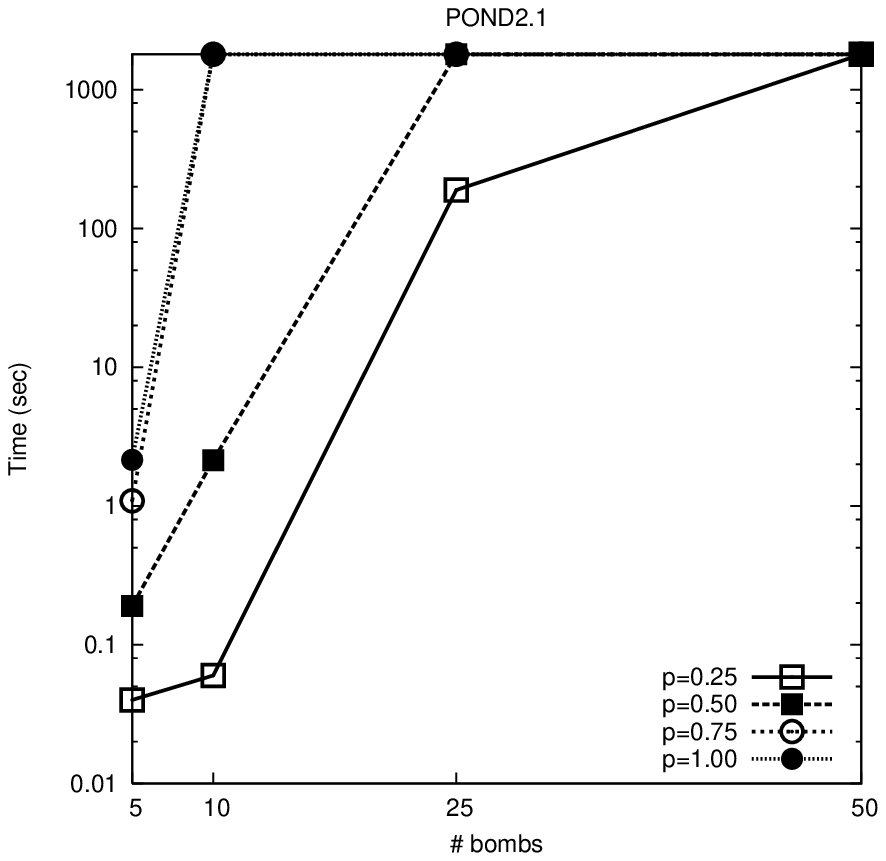}\\
\multicolumn{2}{c}{(c) 5 toilets}\\
\includegraphics[width=9.0cm]{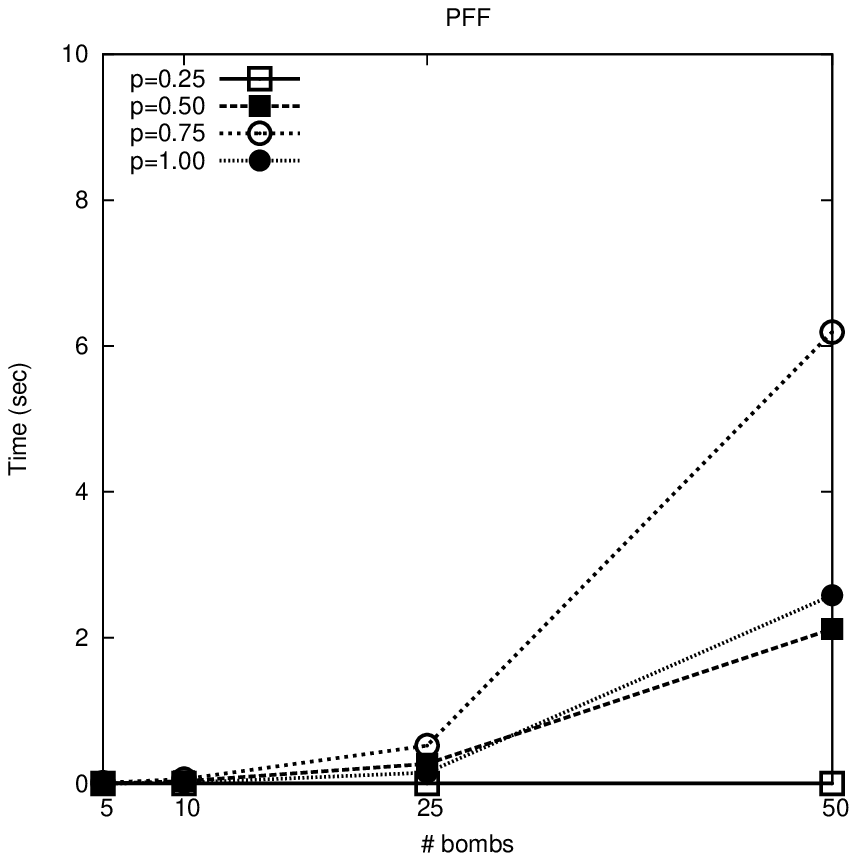} & \hspace{-1.5cm}
\includegraphics[width=9.0cm]{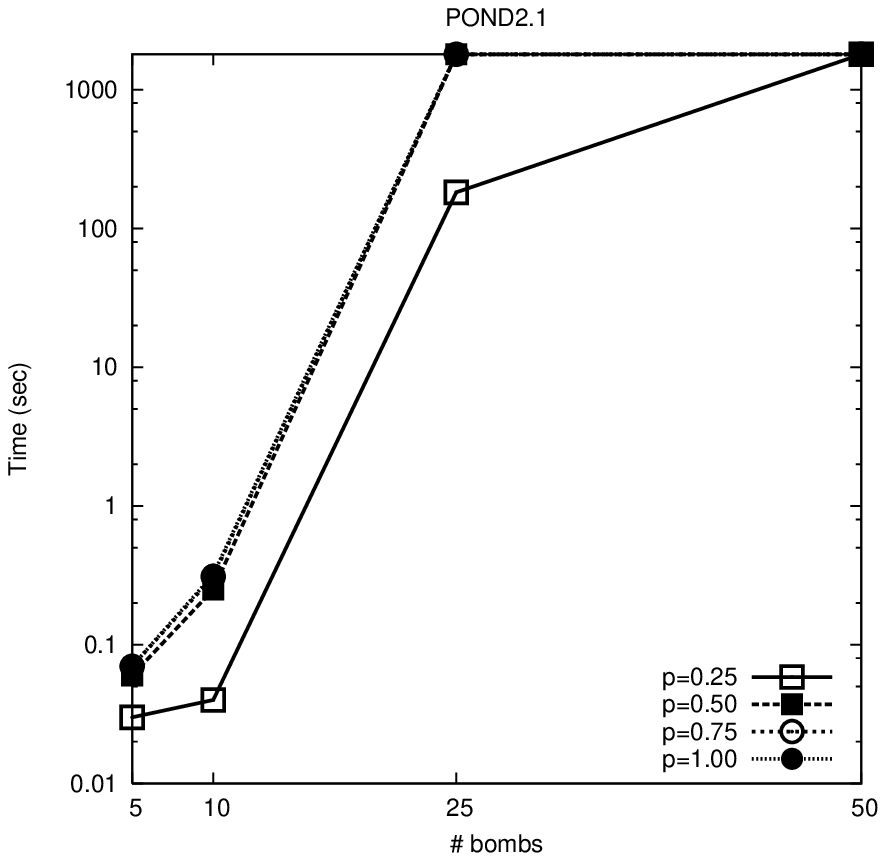}\\
\multicolumn{2}{c}{(d) 1 toilet}
\end{tabular}
\end{center}
\vspace{-0.2cm}
\caption{\label{graph:bomb2}The Bomb domain, \pff\ (left) vs.\ \pond\ (right).}
\end{figure}

Next we consider the famous Bomb-in-the-Toilet domain (or Bomb, for
short). Our version of Bomb contains $n$ bombs and $m$ toilets, where
each bomb may be armed or not armed {\em independently} with
probability $1/n$, resulting in huge numbers of initially possible
world states.  Dunking a bomb into an unclogged toilet disarms the
bomb, but clogs the toilet. A toilet can be unclogged by flushing it.
Table~\ref{T:results} shows that \pff\ scales nicely to $n=50$, and
becomes faster as $m$ increases. The latter is logical and desirable
as having more toilets means having more ``disarming devices'',
resulting in shorter plans needed. Figures~\ref{graph:bomb}
and~\ref{graph:bomb2} compare between \pff\ and \pond, plotting the
time performance of \pff\ on a linear scale, and that of \pond\ on a
logarithmic scale. The four pairs of graphs correspond to four choices
of number of toilets $m \in \{50,10,5,1\}$. The $x$-axes in all these
graphs correspond to the number of potentially armed bombs, where we
checked problems with $n \in \{5, 10, 25,
50\}$. Figure~\ref{graph:bomb} shows that this time \pff\ is at least
four orders of magnitude faster than \pond; At the extremes, while the
hardest combination of $n=50$, $m=1$, and $\goalprob=0.75$ took \pff\
less than 7 seconds, \pond\ timed-out on most of the problem
instances. In addition,
\begin{itemize}
\item In Bomb as well, \pff\ exhibit the nice pattern of improved
performance as we move from non-probabilistic ($\goalprob=1.0$) to
probabilistic planning (specifically, $\goalprob\leq 0.5$; for
$\goalprob\leq 0.25$, the initial state is good enough already).
\item While the performance of \pff\ improves with the number of
toilets, \pond\ seems to exhibit the inverse dependence, that is,
being more sensitive to the number of states in the problem (see
Table~\ref{T:results}) rather to the optimal solution depth.
\end{itemize}
Finally, we remark that, though length-optimality is not explicitly
required in probabilistic conformant planning, for all of Safe, Cube,
and Bomb, \pff's plans are optimal (the shortest possible).

Our next three domains are adaptations of benchmarks from
deterministic planning: ``Logistics'', ``Grid'', and ``Rovers''. We
assume that the reader is familiar with these domains. Each
Logistics-$x$ instance contains 10 cities, 10 airplanes, and 10
packages, where each city has $x$ locations. The packages are with
chance $0.88$ at the airport of their origin city, and uniformly at
any of the other locations in that city. The effects of all loading
and unloading actions are conditional on the (right) position of the
package. Note that higher values of $x$ increase not only the space of
world states, but also the initial uncertainty.
Grid is the complex grid world run in the AIPS'98 planning
competition~\cite{ipc1}, featuring locked positions that must be opened with
matching keys.  Each Grid-$x$ here is a modification of instance nr. 2
(of 5) run at AIPS'98, with a $6 \times 6$ grid, 8 locked positions,
and 10 keys of which 3 must be transported to a goal position. Each
lock has $x$ possible, uniformly distributed shapes, and each of the 3
goal keys has $x$ possible, uniformly distributed initial
positions. The effects of pickup-key, putdown-key, and open-lock
actions are conditional.



Finally, our last set of problems comes from three cascading
modifications of instance nr. 7 (of 20) of the Rovers domain used at
the AIPS'02 planning competition. This problem instance has 6
waypoints, 3 rovers, 2 objectives, and 6 rock/soil samples.  From
Rovers to RoversPPP we modify the instance/domain as follows.
\begin{itemize}
\item Rovers is the original AIPS'02 problem instance nr. 7, and we
use it hear mainly for comparison.
\item In RoversP, each sample is with chance $0.8$ at its original
waypoint, and with chance $0.1$ at each of the others two waypoints. Each
objective may be visible from 3 waypoints with uniform distribution
(this is a probabilistic adaptation of the domain suggested by~\citeR{CAltAlt}).
\item RoversPP enhances RoversP by {\em conditional} probabilities in
the initial state, stating that whether or not an objective is visible
from a waypoint depends on whether or not a rock sample (intuition: a
large piece of rock) is located at the waypoint.  The probability of
visibility is much higher if the latter is not the case. Specifically,
the visibility of each objective depends on the locations of two rock
samples, and if a rock sample is present, then the visibility
probability drops to $0.1$.
\item RoversPPP extends RoversPP by introducing the need to collect
data about water existence. Each of the soil samples has a certain
 probability ($< 1$) to be ``wet''. For communicated sample data, an
additional operator tests whether the sample was wet. If so, a fact
``know-that-water'' contained in the goal is set to true.  The
probability of being wet depends on the location of the sample.
\end{itemize}
We show no runtime plots for Logistics, Grid, and Rovers, since \pond\
runs out of either time or memory on all considered instances of these
domains. Table~\ref{T:results} shows that the scaling behavior of
\pff\ in these three domains is similar to that observed in the
previous domains. The goals in the RoversPPP problem cannot be
achieved with probabilities $\goalprob \in \{0.75, 1.0\}$. This is
proved by \pff's {\em heuristic function}, providing the correct
answer in split seconds.


\subsection{Probabilistic Actions}
\label{ss:results2}

Our first two domains with probabilistic actions are the famous
``Sand-Castle''~\cite{Maxplan} and ``Slippery-Gripper''~\cite{Buridan}
domains. The domains are simple, but they posed the first challenges
for probabilistic planners; our performance in these domains serves an
indicator of the progress relative to previous ideas for probabilistic
planning.

\begin{figure}[tb]
\begin{center}
\begin{tabular}{cc}
\includegraphics[width=9.0cm]{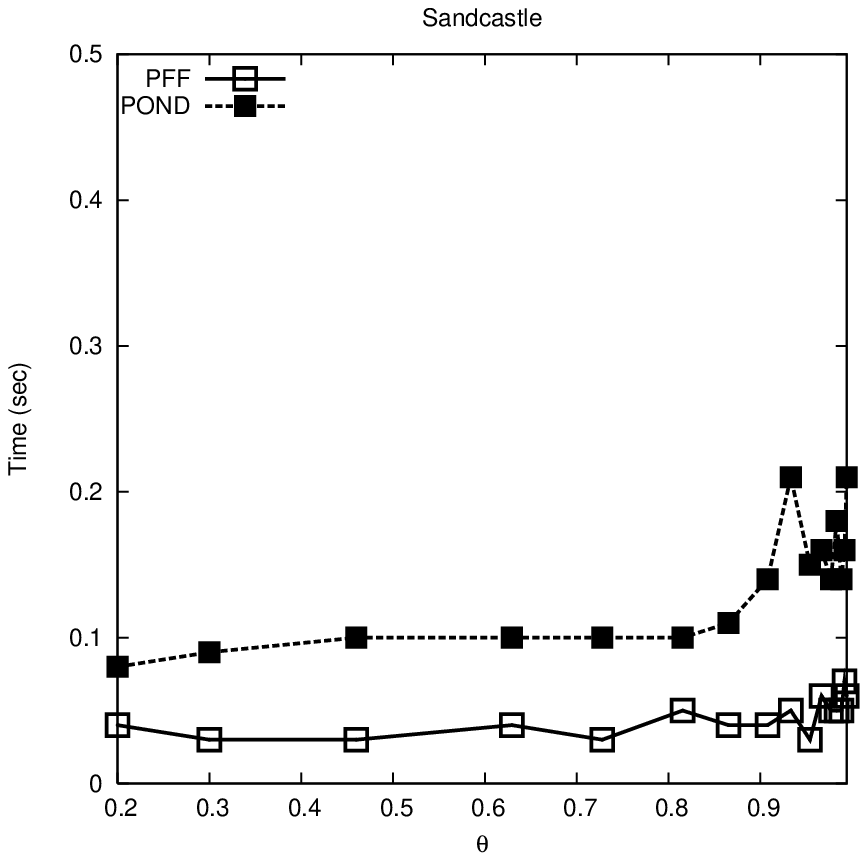} & \hspace{-1.5cm}
\includegraphics[width=9.0cm]{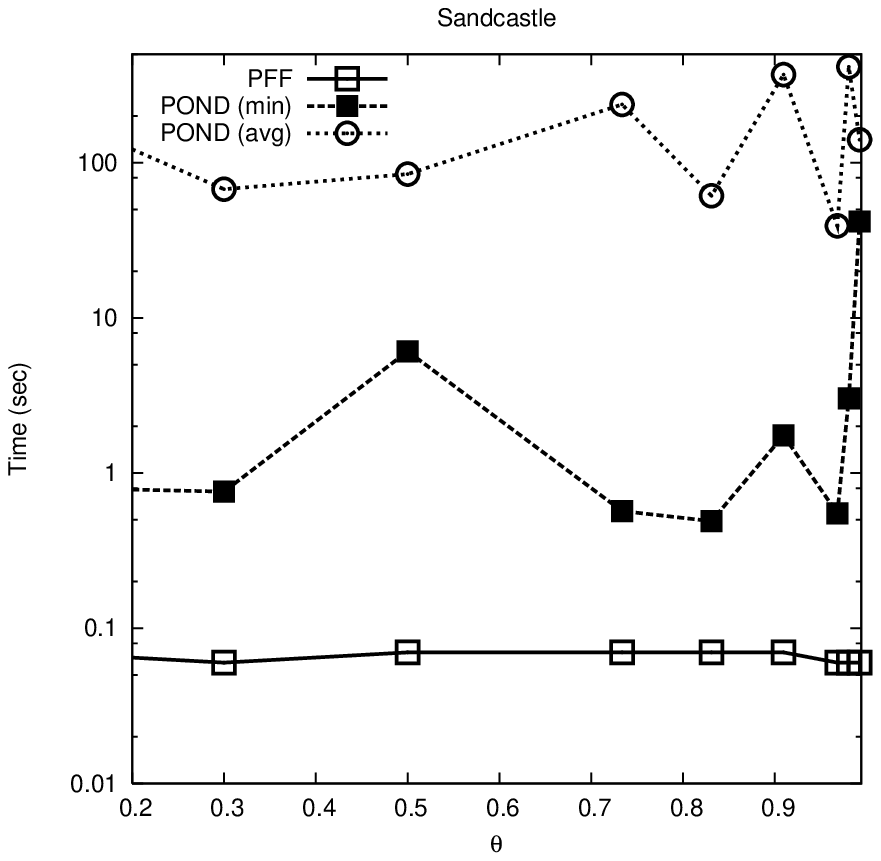}\\
(a) & (b)
\end{tabular}
\end{center}
\vspace{-0.2cm}
\caption{\label{graph:sandcastle} \pff\ and \pond\ on problems from (a) Sand-Castle, and (b) Slippery-Gripper.}
\end{figure}

In Sand-Castle, the states are specified by two boolean variables
$moat$ and $castle$, and state transitions are given by two actions
$dig\mbox{-}moat$ and $erect\mbox{-}castle$. The goal is to erect the
castle. Building a moat with $dig\mbox{-}moat$ might fail with
probability $0.5$. Erecting a castle with $erect\mbox{-}castle$
succeeds with probability $0.67$ if the moat has already been built,
and with probability $0.25$, otherwise. If failed,
$erect\mbox{-}castle$ also destroys the moat with probability
$0.5$. Figure~\ref{graph:sandcastle}(a) shows that both \pff\ and
\pond\ solve this problem in less than a second for arbitrary high
values of $\goalprob$, with the performance of both planners being
almost independent of the required probability of success.

Slippery-Gripper is already a bit more complicated domain. The states
in Slippery-Gripper are specified by four boolean variables
$grip\mbox{-}dry$, $grip\mbox{-}dirty$, $block\mbox{-}painted$, and
$block\mbox{-}held$, and there are four actions $dry$, $clean$,
$paint$, and $pickup$. In the initial state, the block is neither
painted nor held, the gripper is clean, and the gripper is dry with
probability $0.7$.  The goal is to have a clean gripper holding a
painted block.  Action $dry$ dries the gripper with probability
$0.8$. Action $clean$ cleans the gripper with probability
$0.85$. Action $paint$ paints the block with probability $1$, but
makes the gripper dirty with probability $1$ if the block was held,
and with probability $0.1$ if it was not. Action $pickup$ picks up the
block with probability $0.95$ if the gripper is dry, and with
probability $0.5$ if the gripper is wet.

Figure~\ref{graph:sandcastle}(b) depicts (on a log-scale) the relative
performance of \pff\ and \pond\ on Slippery-Gripper as a function of
growing $\theta$. The performance of \pff\ is nicely flat around
$0.06$ seconds. This time, the comparison with \pond\ was somewhat
problematic, because, for any fixed $\theta$, \pond\ on
Slippery-Gripper exhibited a huge variance in runtime. In
Figure~\ref{graph:sandcastle}(b) we plot the best runtimes for \pond,
as well as its average runtimes.  The best run-times for \pond\ for
different values of $\theta$ vary around a couple of seconds, but the
average runtimes are significantly worse. (For some high values of
$\theta$ \pond\ timed-out on some sample runs, and thus the plot
provides a lower bound on the average runtimes.)

\begin{figure}[tb]
\begin{center}
\begin{tabular}{cc}
\includegraphics[width=9.0cm]{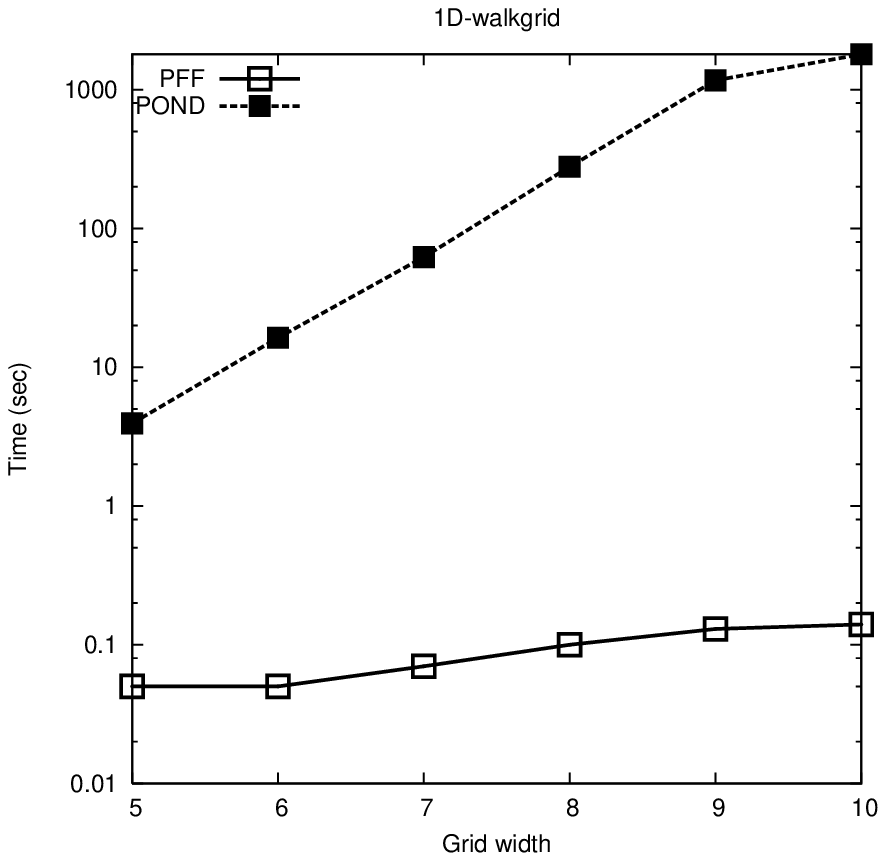} & \hspace{-1.5cm}
\includegraphics[width=9.0cm]{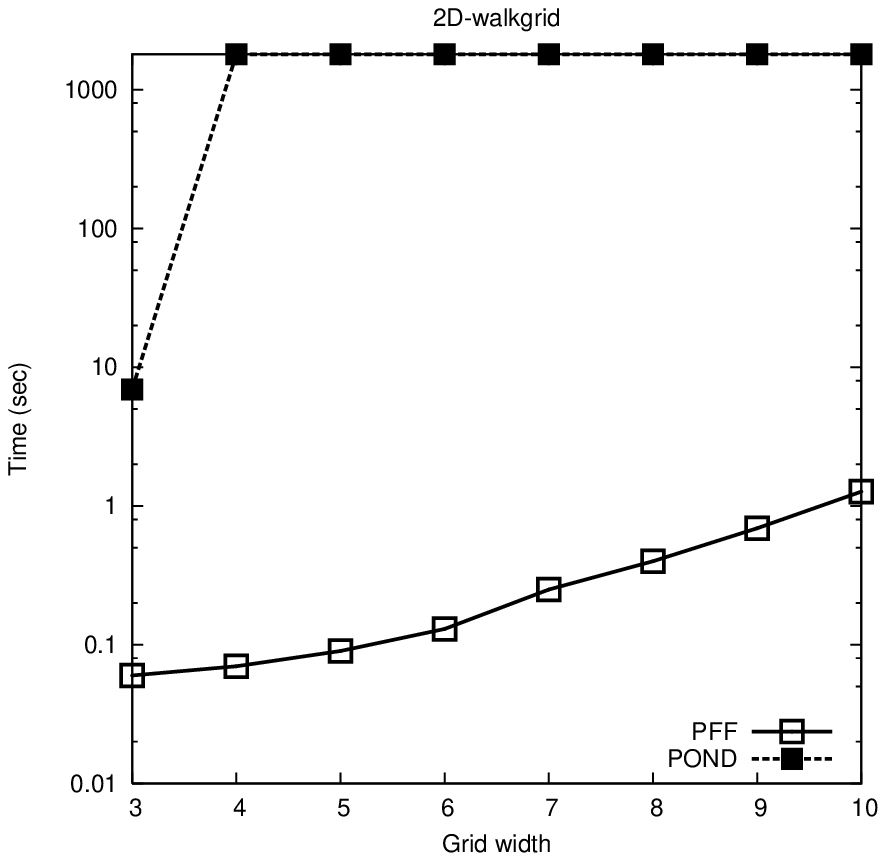}\\
(a) & (b)
\end{tabular}
\end{center}
\vspace{-0.2cm}
\caption{\label{graph:walkgrid} \pff\ and \pond\ on problems from (a)
1D-WalkGrid with $\theta=0.9$, and (b) 2D-WalkGrid with
$\theta=0.01$.}
\end{figure}

In the next two domains, ``1D-WalkGrid'' and ``2D-WalkGrid'', the
robot has to pre-plan a sequence of conditional movements taking it
from a corner of the grid to the farthest (from the initial position)
corner~\cite{CPP2}. In 1D-WalkGrid the grid is one-dimensional, while
in 2D-WalkGrid the grid is two-dimensional.
Figure~\ref{graph:walkgrid}(a) depicts (on a log-scale) a snapshot of
the relative performance of \pff\ and \pond\ on one-dimensional grids
of width $n$ and $\theta = 0.9$. The robot is initially at $(1,1)$,
should get to $(1,n)$, and it can try moving in each of the two
possible directions. Each of the two movement actions moves the robot
in the right direction with probability $0.8$, and keeps it in place
with probability $0.2$. It is easy to see from
Figure~\ref{graph:walkgrid}(a) that the difference between the two
planners in this domain is substantial---while runtime of \pff\ grows
only linearly with $x$, the same dependence for \pond\ is seemingly
exponential.

The 2D-WalkGrid domain is already much more challenging for
probabilistic planning. In all 2D-WalkGrid problems with $n \times n$
grids the robot is initially at $(1,1)$, should get to $(n,n)$, and it
can try moving in each of the four possible directions. Each of the
four movement actions advances the robot in the right direction with
probability $0.8$, in the opposite direction with probability $0$, and
in either of the other two directions with probability $0.1$.
Figure~\ref{graph:walkgrid}(a) depicts (on a log-scale) a snapshot of
the relative performance of \pff\ and \pond\ on 2D-WalkGrid with very
low required probability of success $\theta = 0.01$, and this as a
function of the grid's width $n$. The plot shows that \pff\ still
scales well with increasing $n$ (though not linearly anymore), while
\pond\ time-outs for all grid widths $n > 3$. For higher values of
$\theta$, however, \pff\ does reach the time-out limit on rather small
grids, notably $n=6$ and $n=5$ for $\theta=0.25$ and $\theta=0.5$,
respectively. The reason for this is that \pff's heuristic function is
not good enough at estimating how many times, at an early point in the
plan, a probabilistic action must be applied in order to sufficiently
support a high goal threshold at the end of the plan. We explain this
phenomenon in more detail at the end of this section, where we find
that it also appears in a variant of the well-known Logistics domain.

Our last set of problems comes from the standard Logistics
domain. Each problem instance $x\mbox{-}y\mbox{-}z$ contains $x$
locations per city, $y$ cities, and $z$ packages. We will see that
\pff\ scales much worse, in Logistics, in the presence of
probabilistic effects than if there is ``only'' initial state
uncertainty (we will explain the reason for this at the end of this
section). Hence we use much smaller instances than the ones used above
in Section~\ref{ss:results1}. Namely, to allow a direct comparison to
previous results in this domain, we closely follow the specification of~\citeA{CPP2}. We use
instances with configurations $x\mbox{-}y\mbox{-}z =$
$2\mbox{-}2\mbox{-}2$, $4\mbox{-}2\mbox{-}2$, and
$2\mbox{-}2\mbox{-}4$, and distinguish between two levels of
uncertainty.
\begin{itemize}
\item $L\mbox{-}x\mbox{-}y\mbox{-}z$ correspond to problems with
uncertainty only in the outcome of the $load$ and $unload$
actions. Specifically, the probabilities of success for $load$ are
$0.875$ for trucks and $0.9$ for airplanes, and for $unload$, $0.75$
and $0.8$, respectively.
\item $LL\mbox{-}x\mbox{-}y\mbox{-}z$ extends
$L\mbox{-}x\mbox{-}y\mbox{-}z$ with independent uniform priors for
each initial location of a package within its start city.
\end{itemize}

\begin{figure}[tb]
\begin{center}
\begin{tabular}{ccc}
\multicolumn{3}{c}{
\includegraphics[width=0.42\textwidth]{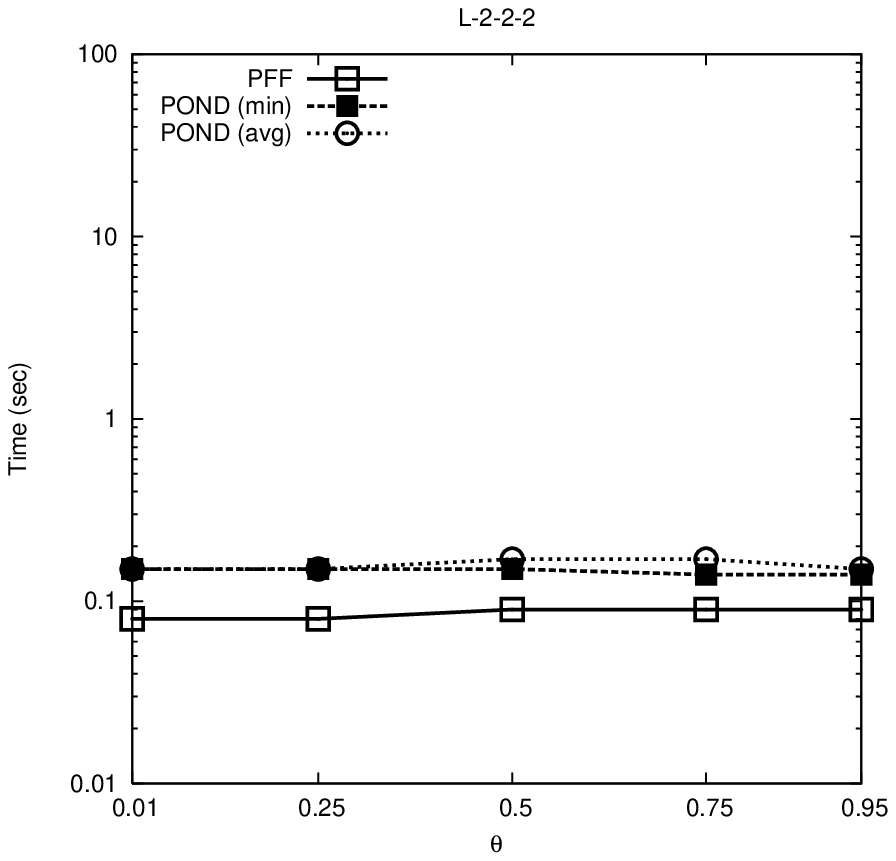}
\hspace{-1.7cm} \includegraphics[width=0.42\textwidth]{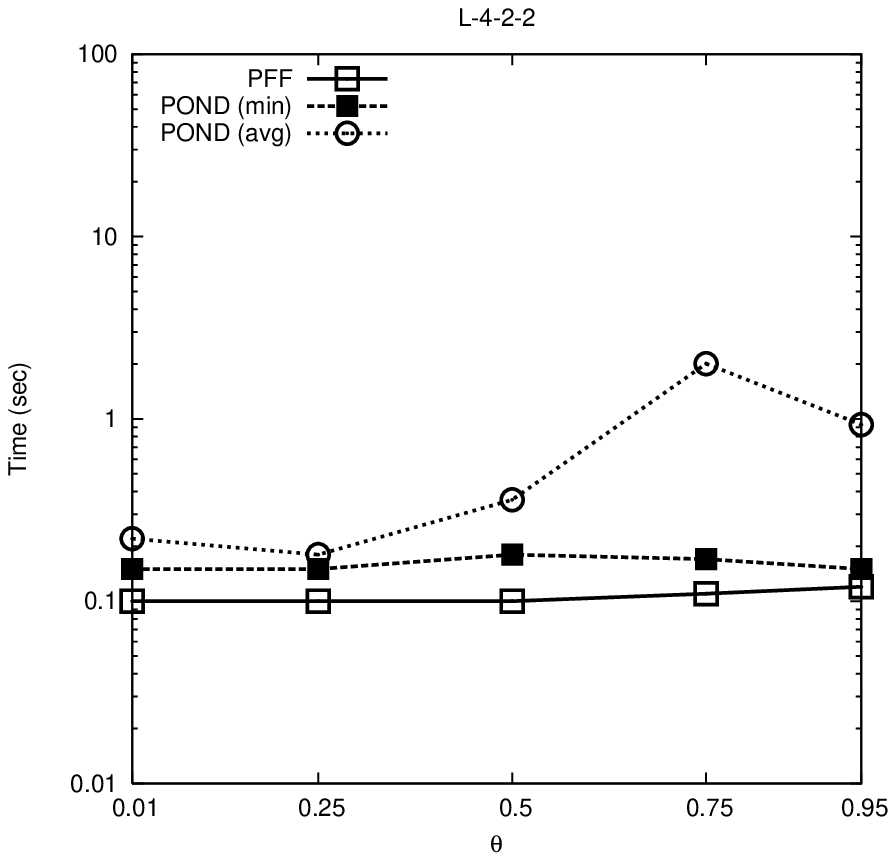}\hspace{-1.5cm} \includegraphics[width=0.42\textwidth]{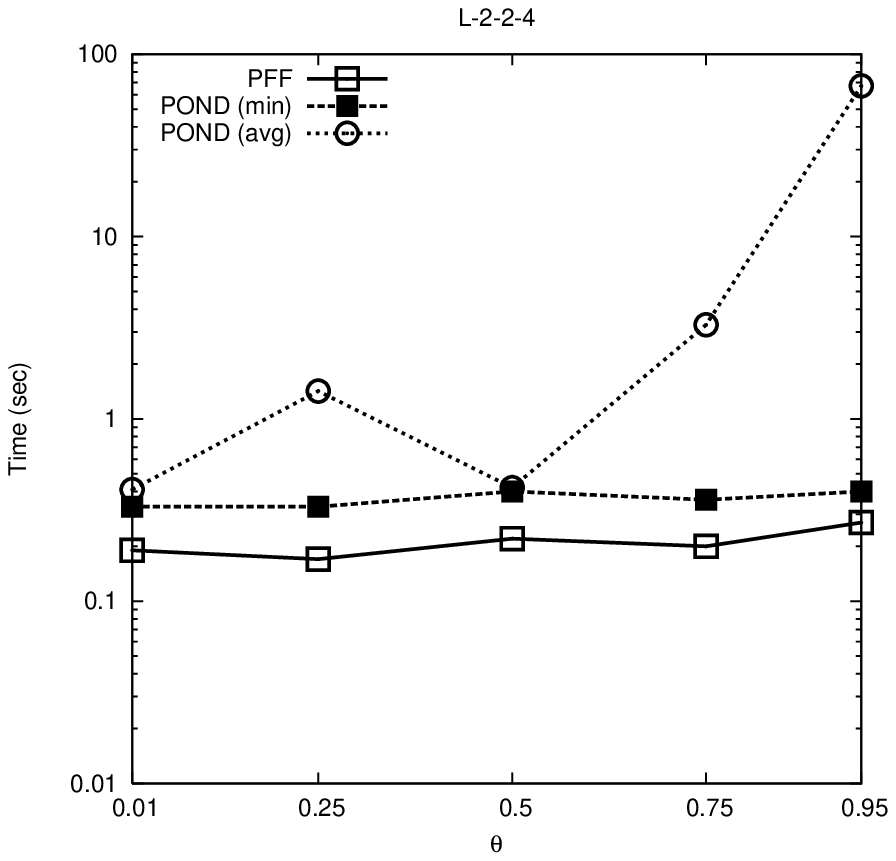}}\\
\begin{minipage}{0.3\textwidth}
\centering{(a)}
\end{minipage} & 
\begin{minipage}{0.3\textwidth}
\centering{(b)}
\end{minipage} &
\begin{minipage}{0.3\textwidth}
\centering{(c)}
\end{minipage} 
\end{tabular}
\end{center}
\vspace{-0.2cm}
\caption{\label{graph:log-ond} \pff\ and \pond\ on problems from
Logistics (a) $L\mbox{-}2\mbox{-}2\mbox{-}2$, (b)
$L\mbox{-}4\mbox{-}2\mbox{-}2$, and (c)
$L\mbox{-}2\mbox{-}2\mbox{-}4$.}
\end{figure}

Figure~\ref{graph:log-ond} depicts (on a log scale) runtimes of \pff\
and \pond\ on $L\mbox{-}2\mbox{-}2\mbox{-}2$,
$L\mbox{-}4\mbox{-}2\mbox{-}2$, and $L\mbox{-}2\mbox{-}2\mbox{-}4$, as
a function of growing $\goalprob$. On these problems, both planners
appear to scale well, with the runtime of \pff\ and the optimal
runtime of \pond\ being roughly the same, and the average runtime of
\pond\ somewhat degrading from $2\mbox{-}2\mbox{-}2$ to
$4\mbox{-}2\mbox{-}2$ to $2\mbox{-}2\mbox{-}4$. This shows that both
planners are much more efficient in this domain than the previously
known SAT and CSP based techniques. However, moving to
$LL\mbox{-}x\mbox{-}y\mbox{-}z$ changes the picture for both
planners. The results are as follows:
\begin{enumerate}
\item On $LL\mbox{-}2\mbox{-}2\mbox{-}2$, the runtimes of \pff\ were
identical to those on $L\mbox{-}2\mbox{-}2\mbox{-}2$, and the optimal
runtimes of \pond\ only slightly degraded to $2-8$ seconds. However,
for all examined values of $\goalprob$, some runs of \pond\ resulted
in timeouts.
\item On $LL\mbox{-}4\mbox{-}2\mbox{-}2$, the runtimes of \pff\ were
identical to those on $L\mbox{-}4\mbox{-}2\mbox{-}2$ for $\goalprob
\in \{0.01,0.25,0.5,0.75\}$, yet \pff\ time-outed on
$\goalprob=0.95$. The optimal runtimes of \pond\ degraded from those
for $L\mbox{-}4\mbox{-}2\mbox{-}2$ only to $9-18$ seconds, and again,
for all values of $\goalprob$, some runs of \pond\ resulted in
timeouts.
\item On $LL\mbox{-}2\mbox{-}2\mbox{-}4$, \pff\ experienced hard
times, finishing in $0.19$ seconds for $\goalprob=0.01$, and
time-outing for all other examined values of $\goalprob$. The optimal
runtimes of \pond\ degraded from those for
$L\mbox{-}2\mbox{-}2\mbox{-}4$ to $120-700$ seconds, and here as well,
for all values of $\goalprob$, some runs of \pond\ resulted in
timeouts.
\end{enumerate}
We also tried a variant of $LL\mbox{-}x\mbox{-}y\mbox{-}z$ with
non-uniform priors over the initial locations of the packages, but
this resulted in a qualitatively similar picture of absolute and
relative performance.

The $LL\mbox{-}x\mbox{-}y\mbox{-}z$ domain remains challenging, and
deserves close attention in the future developments for probabilistic
planning. In this context, it is interesting to have a close look at
what the reasons for the failure of \pff\ is. It turns out that \pff\
is not good enough at estimating how many times, at an early point in
the plan, a probabilistic action must be applied in order to
sufficiently support a high goal threshold at the end of the plan. To
make this concrete, consider a Logistics example with uncertain
effects of all load and unload actions. Consider a package P that must
go from a city A to a city B. Let's say that P is initially not at A's
airport. If the goal threshold is high, this means that, to be able to
succeed, the package has to be brought to A's airport with a high
probability {\em before} loading it onto an airplane. This is exactly
the point where \pff's heuristic function fails. The relaxed plan
contains too few actions unloading P at A's airport. The effect is
that the search proceeds too quickly to loading P onto a plane and
bringing it to B. Once the search gets to the point where B should be
unloaded to its goal location, the goal threshold cannot be achieved
no matter how many times one unloads P. At this point, \pff's enforced
hill-climbing enters a loop and eventually fails because the relaxed
plan (which over-estimates the past achievements) becomes
empty.\footnote{This does not happen in the above
$L\mbox{-}2\mbox{-}2\mbox{-}2$, $L\mbox{-}4\mbox{-}2\mbox{-}2$, and
$L\mbox{-}2\mbox{-}2\mbox{-}4$ instances simply because they are too
small and a high goal probability can be achieved without thinking too
much about the above problem; if one increases the size of these
instances, the problem appears. The problem appears earlier in the
presence of initial state uncertainty -- even in small instances such
as $LL\mbox{-}2\mbox{-}2\mbox{-}2$, $LL\mbox{-}4\mbox{-}2\mbox{-}2$,
and $LL\mbox{-}2\mbox{-}2\mbox{-}4$ -- because with uncertainty about
the start position of the packages one needs to try unloading them at
the start airports more often.}

The challenge here is to devise methods that are better at recognizing
how many times P has to be unloaded at A's airport in order to
sufficiently support the goal threshold. The error made by \pff\ lies
in that our propagation of weights on the implication graph
over-estimates the goal probability. Note here that this is much more
critical for actions that must be applied early on in the plan, than
for actions that are applied later. If an action $a$ appears early on
in a plan, then the relaxed plan, when $a$ is executed, will be
long. Recall that the weight propagation proceeds backwards, from the
goal towards the current state. At each single backwards step, the
propagation makes an approximation that might lose precision of the
results. Over several backwards steps, these imprecisions
accumulate. Hence the quality of the approximation decreases quickly
over the number of backwards steps. The longer the distance between
goal and current state is, the more information is lost. We have
observed this phenomenon in detailed experiments with different weight
propagation schemes, that is, with different underlying
assumptions. Of the propagation schemes we tried, the independence
assumption, as presented in this paper, was by far the most accurate
one. All other schemes failed to deliver good results even for much
shorter distances between the goal and the current state.

It is interesting to consider how this issue affects \pond, which uses
a very different method for estimating the probability of goal
achievement: instead of performing a backwards propagation and
aggregation of weight values, \pond\ sends a set of random particles
through the relaxed planning graph in a forward fashion, and stops the
graph building if enough particles end up in the goal. From our
empirical results, it seems that this method suffers from similar
difficulties as \pff, but not to such a large extent. \pond's optimal
runtimes for $LL\mbox{-}x\mbox{-}y\mbox{-}z$ are much higher than
those for $L\mbox{-}x\mbox{-}y\mbox{-}z$. This indicates that it is
always challenging for \pond\ to ``recognize'' the need for applying
some action $a$ many times early on in the plan. More interestingly,
\pond\ never times-out in $L\mbox{-}x\mbox{-}y\mbox{-}z$, but it does
often time-out in $LL\mbox{-}x\mbox{-}y\mbox{-}z$. This indicates
that, to some extent, it is a matter of chance whether or not \pond's
random particles recognize the need for applying an action $a$ many
times early on in the plan. An intuitive explanation is that the
``good cases'' are those where sufficiently many of the particles
failed to reach the goal due to taking the ``wrong effect'' of
$a$. Based on this intuition, one would expect that it helps to
increase the number of random particles in \pond's heuristic
function. We did so, running \pond\ on $LL\mbox{-}x\mbox{-}y\mbox{-}z$
with an increased number of particles, $200$ and $600$ instead of the
default value of $64$. To our surprise, the qualitative behavior of
\pond\ did not change, time-outing in a similar number of cases. It is
unclear to us what the reason for this phenomenon is. Certainly, it
can be observed that the situation encoded in
$LL\mbox{-}x\mbox{-}y\mbox{-}z$ is not solved to satisfaction by
either of \pff's weight propagation or \pond's random particle
methods, in their current configurations.

At the time of writing, it is unclear to the authors how better
methods could be devised. It seems unlikely that a weight propagation
-- at least one that does not resort to expensive reasoning -- exists
which manages long distances better than the independence assumption.
An alternative way out might be to simply define a weaker notion of
plans that allows to repeat certain kinds of actions -- throwing a
dice or unloading a package -- arbitrarily many times. However, since
our assumption is that we do not have any observability during plan
execution, when executing such a plan there would still arise the
question how often an action should be tried. Since Logistics is a
fairly well-solved domain in simpler formalisms -- by virtue of \pff,
even in the probabilistic setting as long as the effects are
deterministic -- we consider addressing this problem as a quite
pressing open question.

\section{Conclusion}
\label{ss:conclusion}

We developed a probabilistic extension of \cff's search space
representation, using a synergetic combination of \cff's SAT-based
techniques with recent techniques for weighted model counting. We
further provided an extension of conformant relaxed planning with
approximate probabilistic reasoning. The resulting planner scales well
on a range of benchmark domains. In particular it outperforms its only
close relative, \pond, by at least an order of magnitude in almost all
of the cases we tried.

While this point may be somewhat obvious, we would like to emphasize
that our achievements do {\em not} solve the (this particular) problem
once and for all. \pff\ inherits strengths {\em and} weaknesses from
FF and \cff, like domains where \ff's or \cff's heuristic functions
yield bad estimates (e.g. the mentioned Cube-center variant). What's
more, the probabilistic setting introduces several new potential
impediments for \ff's performance. For one thing, weighted model
counting is inherently harder than SAT testing. Though this did not
happen in our set of benchmarks, there are bound to be cases where the
cost for exact model counting becomes prohibitive even in small
examples. A promising way to address this issue lies in recent methods
for {\em approximate} model
counting~\cite{gomes:etal:aaai06,gomes:etal:ijcai07}. Such methods are
much more efficient than exact model counters. They provide
high-confidence lower bounds on the number of models. The lower bounds
can be used in \pff\ in place of the exact counts. It has been shown
that good lower bounds with very high confidecne can be achieved
quickly. The challenge here is to extend the methods -- which are
currently designed for non-weighted CNFs -- to handle {\em weighted}
model counting.

More importantly perhaps, in the presence of probabilistic
effects there is a fundamental weakness in \pff's -- and \pond's --
heuristic information. This becomes a pitfall for performance even in
a straightforward adaptation of the Logistics domain, which is
otherwise very easy for this kind of planners. As outlined, the key
problem is that, to obtain a high enough confidence of goal
achievement, one may have to apply particular actions several times
early on in the plan. Neither \pff's nor \pond's heuristics are good
enough at identifying {\em how many} times. In our view, finding
techniques that address this issue is currently the most important
open topic in this area.

Apart from addressing the latter challenge, we intend to work towards
applicability in real-word settings. Particularly, we will look at the
space application settings that our Rovers domain hints at, at
medication-type treatment planning domains, and at the power supply
restoration domain~\cite{psr}.

%

%

%

%

%

%

%

%

%

%


\section*{Acknowledgments}
The authors would like to thank Dan Bryce and
Rao Kambhampati for providing a binary distribution of
\pond2.1. Carmel Domshlak was partially supported by the Israel Science Foundations grant 2008100, as well as by the C. Wellner Research Fund. Some major parts of this research have been accomplished at the time that J\"org Hoffmann was employed at the Intelligent Information Systems Institute, Cornell University. 

\appendix
\setcounter{theorem}{1}

\section{Proofs}
\label{ss:proofs}


\begin{proposition}
\label{prop:sizecompA}
Let $(\actions,\initBN,\goal,\goalprob)$ be a probabilistic planning
problem described over $k$ state variables, and $\seqactions$ be an
$m$-step sequence of actions from $\actions$. Then, we have
$|\BN_{\belief_{\seqactions}}| = O(|\initBN| + m\alpha(k+1))$
  where $\alpha$ is the largest
description size of an action in $\actions$.
\end{proposition}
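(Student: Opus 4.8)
The plan is to bound the description size by accounting separately for the initial layer $\BNvars\tstamp{0}$ and for the structure added per action, and then to argue that every conditional probability table (CPT) introduced for an action stays linear in that action's description size $\alpha$, exploiting the determinism and context-specific independence noted after Proposition~\ref{prop:bn-correct}. Recall that $\BNvars_{\belief_{\seqactions}} = \cY \cup \bigcup_{t=0}^{m}\BNvars\tstamp{t}$, so the size of $\BN_{\belief_{\seqactions}}$ is the size of its DAG (nodes and edges) plus the total size of all CPTs; the CPTs dominate, so I would focus on them. I would first handle the time-$0$ layer: by construction each $X\tstamp{0}$ simply copies the parents $Pa(X)$ and the table $\cpt{X}$ of the corresponding variable in $\initBN$, so the entire layer $\BNvars\tstamp{0}$ contributes $O(|\initBN|)$.

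Next, for each action $a\tind{t}$ in $\seqactions$ ($1 \le t \le m$) I would bound the newly added material, which consists of at most one mediating variable $Y\tstamp{t}$ and the $k$ replicas $X\tstamp{t}$. For $Y\tstamp{t}$, its domain is $\bigcup_{e\in\effs(a)}\poutcomeset(e)$ and, by Equation~\ref{e:cpt1p} together with context-specific independence, its CPT collapses to one effective row per effect $e$ (keyed by $\con(e)$) listing the probabilities $\probe(\poutcome)$; this is exactly the outcome data appearing in the description of $a$, so $|\cpt{Y\tstamp{t}}| = O(\alpha)$. For a replica $X\tstamp{t}$ of a variable affected by $a$, the table specified by Equation~\ref{e:cpt2p} (probabilistic case) or Equations~\ref{e:cpt2d1}--\ref{e:cpt2d2} (deterministic case) is purely deterministic, and its nonzero structure is determined solely by which outcomes/effects of $a$ set $X$ to which value $x_{\poutcome}$ (resp.\ $x_e$); since this information is contained in the description of $a$, each such CPT is again $O(\alpha)$. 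Finally, for a replica of a variable not affected by $a$, the frame table of Equation~\ref{e:cpt1d} has the single parent $X\tstamp{t-1}$ and encodes the deterministic identity; represented symbolically as persistence this costs $O(1)$ (and at worst $O(|Dom(X)|) = O(\alpha)$ if listed explicitly). Summing over the $k+1$ variables of one time step gives $O(\alpha(k+1))$, and the induced edges are bounded likewise; over the $m$ time steps this is $O(m\alpha(k+1))$. Adding the initial layer yields $|\BN_{\belief_{\seqactions}}| = O(|\initBN| + m\alpha(k+1))$, as claimed.

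The main obstacle I anticipate is the rigorous justification that the deterministic tables of the affected and unaffected replicas (Equations~\ref{e:cpt2p} and~\ref{e:cpt1d}) are genuinely $O(\alpha)$ rather than proportional to the product of their parents' domain sizes: the argument has to lean explicitly on the fact that these tables are deterministic functions with full context-specific structure, so that their compact representation is bounded by the slice of the action description that names the relevant values. The only genuinely delicate corner is the frame CPT of an unaffected variable, whose nominal size is governed by $|Dom(X)|$ rather than by $\alpha$; I would dispose of it by observing that a deterministic persistence table needs only constant storage (a noop), which is also consistent with the special treatment of deterministic actions described around Equations~\ref{e:parE}--\ref{e:cpt2d2}.
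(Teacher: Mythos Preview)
Your proposal is correct and follows essentially the same approach as the paper: decompose the description size into the initial layer (an exact copy of $\initBN$, contributing $O(|\initBN|)$) plus the $m$ subsequent layers, each containing $k+1$ nodes whose CPTs admit a compact $O(\alpha)$ description because the action description itself already encodes them. If anything, your argument is more careful than the paper's own proof, which explicitly treats only $T_{Y\tstamp{t}}$ via Equation~\ref{e:cpt1p} and the frame case via Equation~\ref{e:cpt1d}, glossing over the affected-replica case (Equation~\ref{e:cpt2p}) and the deterministic-action variant that you handle separately; your discussion of the persistence table as a constant-size noop is also a point the paper leaves implicit.
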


\begin{proof}
The proof is rather straightforward, and it exploits the local
structure of $\BN_{\belief_{\seqactions}}$'s CPTs. The first
nodes/CPTs layer $\BNvars\tstamp{0}$ of $\BN_{\belief_{\seqactions}}$
constitutes an exact copy of $\initBN$. Then, for each $1 \leq t \leq
m$, the $t$-th layer of $\BN_{\belief_{\seqactions}}$ contains $k+1$
node $\{Y\tstamp{t}\}\cup\BNvars\tstamp{t}$.

First, let us consider the ``action node'' $Y\tstamp{t}$. While
specifying the CPT $T_{Y(t)}$ in a straightforward manner as if
prescribed by Eq.~\ref{e:cpt1p} might result in an exponential blow
up, the same Eq.~\ref{e:cpt1p} suggests that the original description
of $a\tind{t}$ is by itself a compact specification of
$T_{Y(t)}$. Therefore, $T_{Y(t)}$ can be described in space
$O(\alpha)$, and this description can be efficiently used for
answering queries $T_{Y(t)}(Y\tstamp{i} = \poutcome \mid \pi)$ as in
Eq.~\ref{e:cpt1p}.  Next, consider the CPT $T_{X(t)}$ of a
state-variable node $X(t) \in \BNvars\tstamp{t}$. This time, it is
rather evident from Eq.~\ref{e:cpt1d} that $T_{X(t)}$ can be described
in space $O(\alpha)$ so that queries $T_{X(t)}(X\tstamp{t} = x \mid
X\tstamp{t-1} = x')$ could be efficiently answered. Thus, summing up
for all layers $1 \leq t \leq m$, the description size of
$|\BN_{\belief_{\seqactions}}| = O(|\initBN| + m\alpha(k+1))$
\end{proof}

\setcounter{theorem}{3}
\begin{lemma}
\label{lemma:subweight-completeA}
Given a node $v(t')\in\mbox{\sl Imp}_{\rightarrow p(t)}$, we have
$\subweight_{p(t)}\left(v(t')\right) = \weight\left(v(t')\right)$ if
and only if, given $v$ at time $t'$, the sequence of effects
$\grapheffects{\mbox{\sl Imp}_{v(t')\rightarrow p(t)}}$ achieves $p$
at $t$ with probability $1$.
\end{lemma}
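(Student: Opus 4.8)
The plan is to prove the biconditional by induction following the very order in which \buildwil\ assigns the dynamic weights, namely by decreasing time $t'$ and, within a layer, chance nodes before fact nodes. Since $\mbox{\sl Imp}$ is layered --- an edge out of a fact $q(t')$ goes to a chance node $\poutcome(t')$ in the same layer, and an edge out of $\poutcome(t')$ goes to an add-effect $r(t'+1)$ in the next layer --- this order is precisely a reverse-topological traversal of $\mbox{\sl Imp}_{\rightarrow p(t)}$, so when a node is processed all of its successors already carry their final dynamic weight and the induction is well-founded. The induction hypothesis I would carry is the statement of the lemma for every successor of the node being processed. I would also record the static weights explicitly: $\weight(\poutcome(t')) = \probe(\poutcome)$ for a chance node, and $\weight(q(t'))=1$ for a fact node.

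First I would establish, by the same induction, the range invariant $0 \le \subweight_{p(t)}(v(t')) \le \weight(v(t'))$ for every $v(t') \in \mbox{\sl Imp}_{\rightarrow p(t)}$. This is immediate from the two update rules once one observes that $\flbweight$ is in both cases a product of factors lying in $[0,1]$; for a fact node the required bound $\sum_{\poutcome\in\poutcomeset(e)}\subweight_{p(t)}(\poutcome(t')) \le \sum_{\poutcome\in\poutcomeset(e)}\probe(\poutcome) = 1$ combines the chance-node half of the invariant with $\sum_{\poutcome}\probe(\poutcome)=1$. This invariant is what makes the equality cases sharp.

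Then I would run the induction. In the base case $v(t')=p(t)$ we have $\subweight_{p(t)}(p(t))=1=\weight(p(t))$, and trivially the empty effect set achieves $p$ at $t$ with probability $1$. For a chance node, the update $\subweight_{p(t)}(\poutcome(t')) = \weight(\poutcome(t'))\,(1-\flbweight)$ gives equality to $\weight(\poutcome(t'))$ iff $\flbweight=0$; as $\flbweight=\prod_{r}[1-\subweight_{p(t)}(r(t'+1))]$ is a product of numbers in $[0,1]$, it vanishes iff some add-effect child has $\subweight_{p(t)}(r(t'+1))=\weight(r(t'+1))=1$, which by the hypothesis means that child achieves $p$ with certainty. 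Since $\poutcome$ deterministically makes every $r\in\add(\poutcome)$ true at $t'+1$, the outcome $\poutcome$ certainly achieves $p$ exactly when at least one such $r$ does, so both sides agree. For a fact node, $\subweight_{p(t)}(q(t'))=1$ holds iff $\flbweight=\prod_{e:\con(e)=q}[1-\sum_{\poutcome}\subweight_{p(t)}(\poutcome(t'))]=0$, i.e.\ iff some triggered effect $e$ has $\sum_{\poutcome\in\poutcomeset(e)}\subweight_{p(t)}(\poutcome(t'))=1$; by the range invariant this sum attains its maximum $1$ iff every outcome $\poutcome$ of $e$ appears in $\mbox{\sl Imp}_{\rightarrow p(t)}$ with $\subweight_{p(t)}(\poutcome(t'))=\weight(\poutcome(t'))$, which by the hypothesis means every outcome of $e$ certainly achieves $p$. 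Since $q$ at $t'$ fires $e$ with certainty and the outcome is then random, $q$ certainly achieves $p$ iff some such $e$ has all its outcomes certain, again matching.

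The hard part will be pinning down the semantic side precisely: giving a clean definition of ``the effects $\grapheffects{\mbox{\sl Imp}_{v(t')\rightarrow p(t)}}$ achieve $p$ at $t$ with probability $1$ given $v$ at $t'$'' under the \cff-style relaxation (deletes ignored, one condition per effect), and verifying that relaxed probability-$1$ achievement obeys exactly the AND/OR recursion used above --- a disjunction over add-effects at chance nodes and an ``exists an effect all of whose outcomes succeed'' at fact nodes. The delicate implication is ``true probability $1$ $\Rightarrow$ the recursion holds'': here I would argue that, because each effect's outcome is an independent random choice and every effect with a non-succeeding outcome contributes positive failure probability, achieving $p$ with probability $1$ forces a fully deterministic sub-derivation, which is exactly what the recursion describes. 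It is worth noting that although the propagation invokes an independence assumption, this assumption is immaterial at the boundary: $\prod_i(1-w_i)=0$ iff some $w_i=1$ regardless of the true joint distribution, so the equality (certainty) case is detected exactly even though intermediate weights in $(0,1)$ are only estimates. Once this recursive characterization is in place the induction closes mechanically.
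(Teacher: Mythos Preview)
Your proposal is correct and follows essentially the same approach as the paper: backward induction on the time layers of $\mbox{\sl Imp}_{\rightarrow p(t)}$, handling chance nodes and fact nodes separately, and in each case characterizing $\subweight_{p(t)}(v(t'))=\weight(v(t'))$ via $\flbweight=0$. Your explicit range invariant $0\le\subweight_{p(t)}(v(t'))\le\weight(v(t'))$ and your remark that the independence assumption is immaterial at the certainty boundary make rigorous exactly the steps the paper leaves implicit, but the structure of the argument is the same.
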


\begin{proof}
The proof of Lemma~\ref{lemma:subweight-complete} is by a backward
induction on the time layers of $\mbox{\sl Imp}_{v(t')\rightarrow
p(t)}$. For time $t$, the only node of $\mbox{\sl Imp}_{\rightarrow
p(t)}$ time-stamped with $t$ is $p(t)$ itself.  For this node we do
have $\subweight_{p(t)}\left(p(t)\right) = \weight\left(p(t)\right) =
1$, but, given $p$ at time $t$, an empty plan corresponding to (empty)
$\grapheffects{\mbox{\sl Imp}_{p(t)\rightarrow p(t)}}$ trivially
``re-establishes'' $p$ at $t$ with certainty.  Assuming now that the
claim holds for all nodes of $\mbox{\sl Imp}_{\rightarrow p(t)}$ time
stamped with $t'+1,\dots,t$, we now show that it holds for the nodes
time stamped with $t'$.

It is easy to see that, for any node $v(t') \in \mbox{\sl
Imp}_{\rightarrow p(t)}$, we get $\subweight_{p(t)}\left(v(t')\right)
= \weight\left(v(t')\right)$ only if $\flbweight$ goes down to zero.
First, consider the chance nodes $\poutcome(t')\in\mbox{\sl
Imp}_{v\rightarrow p(t)}$. For such a node, $lb$ is set to zero if and
only if we have $\subweight_{p(t)}\left(r(t'+1)\right) = 1$ for some
$r \in \add(\poutcome)$. However, by our inductive assumption, in this
and only in this case the effects $\grapheffects{\mbox{\sl
Imp}_{\poutcome(t')\rightarrow p(t+1)}}$ achieve $p$ at $t$ with
probability $1$, given the occurrence of $\poutcome$ at time $t'$.

Now, consider the fact nodes $q(t')\in\mbox{\sl Imp}_{v\rightarrow
p(t)}$. For such a node, $\flbweight$ can get nullified only by some
effect $e \in \effs(a), a \in A(t'),\con(e)=q$.
The latter happens if
only if, for {\em all} possible outcomes of $e$, (i) the node
$\poutcome(t')$ belongs to $\mbox{\sl Imp}_{\rightarrow p(t)}$, and
(ii) and the estimate $\subweight_{p(t)}\left(\poutcome(t')\right) =
\weight(\poutcome(t'))$. In other words, by our inductive assumption,
given {\em any} outcome $\poutcome \in \poutcomeset(e)$ at time $t'$,
the effects $\grapheffects{\mbox{\sl Imp}_{\poutcome(t')\rightarrow
p(t)}}$ achieve $p$ at $t$ with probability $1$. Thus, given $q$ at
time $t'$, the effects $\grapheffects{\mbox{\sl Imp}_{q(t')\rightarrow
p(t)}}$ achieve $p$ at $t$ with probability $1$ {\em independently} of
the actual outcome of $e$.  Alternatively, if for $q(t')$ we have $lb
> 0$, then for each effect $e$ conditioned on $q(t)$, there exists an
outcome $\poutcome$ of $e$ such that, according to what we just proved
for the chance nodes time-stamped with $t'$, the effects
$\grapheffects{\mbox{\sl Imp}_{\poutcome(t')\rightarrow p(t+1)}}$ do
not achieve $p$ at $t$ with probability $1$. Hence, the whole set of
effects $\grapheffects{\mbox{\sl Imp}_{q(t')\rightarrow p(t+1)}}$ does
not achieve $p$ at $t$ with probability $1$.
\end{proof}

\begin{lemma}
\label{lemma:disjunctionA}
Let $(\actions,\initBN,\goal,\goalprob)$ be a probabilistic planning
task, $\seqactions$ be a sequence of actions applicable in
$\initprob$, and $\onecondrel$ be a relaxation function for $A$. For
each time step $t \geq -m$, and each proposition $p \in \fluents$, if
$P(t)$ is constructed by \buildprpg$(\seqactions, \actions,
\cinitial,\goal,\goalprob,\onecondrel)$, then $p$ at time $t$ can be
achieved by a relaxed plan starting with $\seqactions\onecondrel$
\begin{enumerate}[(1)]
\item with probability $> 0$  (that is, $p$ is not negatively known at time $t$) if and only if $p \in uP(t) \cup P(t)$, and
\item with probability $1$ (that is, $p$ is known at time $t$) if and only if $p \in P(t)$.
\end{enumerate}
\end{lemma}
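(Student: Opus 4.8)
The plan is to prove both claims simultaneously by induction on the time step $t$, running forward from $t=-m$ and analysing a single invocation of \buildts\ at each step. Throughout I use that the action set $A(t)$ handed to \buildts\ is exactly the set of (relaxed) actions a plan may legally apply at $t$: for the ``past'' layers $-m\le t\le -1$ it is the single action $a\tind{t}\onecondrel$ of $\seqactions\onecondrel$ (plus noops), which is applicable because $\seqactions$ is applicable in $\initprob$; for the ``future'' layers $t\ge 0$ it is all of $\{a\onecondrel\mid \pre(a)\subseteq P(t)\}$, i.e.\ precisely the actions whose preconditions are \emph{known}, as required by the action semantics. The base case $t=-m$ is immediate from the meaning of $\Phi=\cinitial$: the SAT classification initializing $P(-m)$ and $uP(-m)$ picks out the propositions implied by, respectively consistent-but-not-implied by, $\cinitial$, which by the encoding of $\initBN$ are exactly the propositions holding with probability $1$, respectively with probability strictly between $0$ and $1$, in $\initbelief$; since no action has been applied, achievability at $-m$ coincides with truth in $\initbelief$.

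For the inductive step I would first dispatch claim (1), the reachability claim, by the standard relaxed-planning-graph argument adapted to outcomes. Because the relaxation has no delete lists, facts are monotone and outcomes only add; hence applying every applicable action in parallel (as \buildts\ does) achieves a superset of what any sequential relaxed plan starting with $\seqactions\onecondrel$ achieves, while every fact placed in $uP(t+1)\cup P(t+1)$ is produced by a concrete outcome. Concretely, the first \textbf{for} loop inserts $\add(\poutcome)$ into $uP(t+1)$ for each outcome $\poutcome$ of each effect $e$ with $\con(e)\in P(t)\cup uP(t)$; since each retained outcome carries positive probability $\probe(\poutcome)>0$ (outcomes of probability $0$ play no role) and, by the inductive hypothesis, $\con(e)$ is positively reachable at $t$, every $p\in\add(\poutcome)$ is positively reachable at $t+1$, with noops framing forward everything already reachable. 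Conversely, the last transition of any witnessing plan either frames $p$ forward (so $p\in uP(t)\cup P(t)$ by hypothesis) or adds $p$ through an outcome of an applicable effect with positively-reachable condition, matching the loop exactly. This gives $p\in uP(t+1)\cup P(t+1)$ iff $p$ is not negatively known at $t+1$.

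The heart of the argument is claim (2). Here I would lean on Lemma~\ref{lemma:subweight-complete}, which characterizes \emph{exactly} (not merely approximately) the boundary value $1$: a node $v(t')$ satisfies $\subweight_{p(t+1)}(v(t'))=\weight(v(t'))$ iff, given $v$ at $t'$, the effects $\grapheffects{\mbox{\sl Imp}_{v(t')\rightarrow p(t+1)}}$ force $p$ at $t+1$ with probability $1$. Consequently the set $\support(p(t+1))$ collected by \buildts\ is precisely the set of leafs that are \emph{certain triggers} for $p$, a leaf being either an initial-state proposition at layer $-m$ or a chance node of a known-condition effect. The key observation is that both kinds of leaf, together with the ``exactly one outcome'' clauses that \buildts\ adds to $\Phi$ for known-condition effects, are governed by $\Phi$: a model $M$ of $\Phi$ fixes the initial world and the outcomes of all known-condition effects, i.e.\ one ``scenario'' of the exogenous randomness, leaving only the unknown-condition outcomes free. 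For soundness, if $\Phi\implication\bigvee_{l\in\support(p(t+1))}l$ then in every scenario some certain-trigger leaf fires, and by Lemma~\ref{lemma:subweight-complete} $p$ is then forced regardless of the remaining randomness; hence $p$ is achieved with probability $1$ and is correctly added to $P(t+1)$. For completeness I argue the contrapositive: if the implication fails there is a model $M\models\Phi$ in which no certain-trigger leaf holds, and I would extend $M$ to a full scenario that falsifies $p(t+1)$, so $p$ is \emph{not} achieved with certainty.

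The step I expect to be the main obstacle is exactly this completeness construction: converting ``no single leaf is a certain trigger in scenario $M$'' into ``$p$ genuinely fails in some completion of $M$.'' The content is that certainty cannot arise from a combination of independently-random, individually-uncertain sources: distinct effect instances contribute independent outcome events, each non-support leaf fails to force $p$ with some positive probability (again by Lemma~\ref{lemma:subweight-complete}, since its dynamic weight is strictly below its static weight), and a product of finitely many positive failure probabilities stays positive. I would make this precise by a top-down pass over $\mbox{\sl Imp}_{\rightarrow p(t+1)}$ that, starting at $p(t+1)$ and respecting $M$, selects for each encountered chance node a failing outcome, using independence of distinct effects to guarantee the choices are simultaneously realizable; the only scenarios this cannot block are those routed through a true support leaf, which $M$ by assumption lacks. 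Combining soundness and completeness yields $p\in P(t+1)$ iff $p$ is achieved with probability $1$, closing the induction.
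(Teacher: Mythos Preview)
Your inductive skeleton, your treatment of claim~(1), and your soundness argument for claim~(2) all match the paper's proof essentially line for line. The divergence is entirely in the \emph{completeness} direction of claim~(2), where you and the paper take genuinely different routes.

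The paper does not attempt a global failing-scenario construction. Instead it performs a \emph{one-step} analysis at layer $t$: it classifies the effects of $A(t-1)$ that might add $p$ into three types --- (I)~all outcomes add $p$ and $\con(e)\in P(t-1)$; (II)~all outcomes add $p$ and $\con(e)\in uP(t-1)$; (III)~either some outcome omits $p$ or $\con(e)\notin P(t-1)\cup uP(t-1)$. If type~(I) is nonempty, the ``exactly one'' clauses on that effect's chance leafs already give $\Phi\rightarrow\bigvee_{l\in\support(p(t))}l$. Otherwise the paper argues that type~(III) effects alone cannot make $p$ certain, so the disjunction of the type~(II) conditions must hold with probability~$1$; since $\support(p(t))\supseteq\bigcup_{e\in(\mathrm{II})}\support(\con(e)(t-1))$, this reduces the implication test for $p(t)$ to the implication tests for those conditions at $t-1$, which is where the induction does the work.

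Your global construction is a reasonable alternative, but as written it has a soft spot exactly where you flagged it. The sentence ``a product of finitely many positive failure probabilities stays positive'' presumes the failure events from distinct true leafs are independent; they are not, because different leafs can route to $p(t+1)$ through the \emph{same} unknown-condition effect instance, and your ``failing outcome'' choices for that instance may conflict (blocking one leaf's path might require outcome $\alpha$ while blocking another's requires $\beta$). The top-down pass you sketch can be made to handle this --- when no single outcome of an effect is safe for all facts you are currently trying to falsify, you must instead falsify the effect's condition and recurse --- but that case analysis is precisely what the paper's type~(I)/(II)/(III) decomposition packages cleanly, and it lets the inductive hypothesis absorb the recursion rather than forcing you to thread a consistent set of outcome choices through the whole DAG.
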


\begin{proof}
The proof of the ``if'' direction is by a straightforward induction on
$t$. For $t=-m$ the claim is immediate by the direct initialization of
$uP(-m)$ and $P(-m)$. Assume that, for $-m \leq t' < t$, if $p\in
uP(t')\cup P(t')$, then $p$ is not negatively known at time $t'$, and
if $p\in P(t')$, then $p$ is known at time $t'$.

First, consider some $p(t) \in uP(t)\cup P(t)$, and suppose that $p$
is egatively know at time $t$. By the inductive assumption, and the
property of the PRPG construction that $uP(t-1)\cup P(t-1) \subseteq
uP(t)\cup P(t)$, we have $p \not\in uP(t-1)\cup P(t-1)$. Therefore,
$p$ has to be added into $uP(t)$ (and then, possibly, moved from there
to $P(t)$) in the first {\bf for} loop of the \buildts\ procedure.
However, if so, then there exists an action $a \in A(t-1)$, $e \in
\effs(a)$, and $\poutcome\in\poutcomeset(e)$ such that (i) $\con(e)
\in uP(t-1)\cup P(t-1)$, and (ii) $p \in \add(\poutcome)$. Again, by
the assumption of the induction we have that $\pre(a)$ is known at
time $t-1$, and $\con(e)$ is not negatively known at time
$t-1$. Hence, the non-zero probability of $\poutcome$ occurring at
time $t$ implies that $p$ can be achieved at time $t$ with probability
greater than $0$, contradicting that $p$ is negatively know at time
$t$.

Now, let us consider some $p(t) \in P(t)$. Notice that, for $t > -m$,
we have $p(t) \in P(t)$ if and only if
\begin{equation}
\label{e:disjaux}
\Phi \longrightarrow {\bigvee_{l \in \mbox{\scriptsize \sl support}(p(t))}{\hspace{-0.5cm} l}}\hspace{0.3cm}.
\end{equation}
Thus, for each world state $w$ consistent with $\initbelief$, we have
either $q \in w$ for some fact proposition $q(-m) \in \mbox{\sl
support}(p(t))$, or, for some effect $e$ of an action $a(t') \in
A(t')$, $t' < t$, we have $\con(e)\in P(t')$ and $\{\poutcome(t')\mid
\poutcome\in\poutcomeset(e)\} \subseteq \mbox{\sl support}(p(t))$.
In this first case, Lemma~\ref{lemma:subweight-complete} immediately
implies that the concatenation of $\seqactions\onecondrel$ with an
arbitrary linearization of the (relaxed) actions $A(0),\dots,A(t-1)$
achieves $p$ at $t$ with probability $1$, and thus $p$ is known at
time $t$. In the second case, our inductive assumption implies that
$\con(e)$ is known at time $t$, and together with
Lemma~\ref{lemma:subweight-complete} this again implies that the
concatenation of $\seqactions\onecondrel$ with an arbitrary
linearization of the (relaxed) actions $A(0),\dots,A(t-1)$ achieves
$p$ at $t$ with probability $1$.

The proof of the ``only if'' direction is by induction on $t$ as well.
For $t=-m$ this claim is again immediate by the direct initialization
of $P(-m)$. Assume that, for $-m \leq t' < t$, if $p$ is not
negatively known at time $t'$, then $p\in uP(t')\cup P(t')$, and if
$p$ is known at time $t'$, then $p\in P(t')$. First, suppose that $p$
is not negatively known at time $t$, and yet we have $p \not\in
uP(t)\cup P(t)$. From our inductive assumption plus that $A(t-1)$
containing all the NOOP actions for propositions in $uP(t-1)\cup
P(t-1)$, we know that $p$ {\em is} negatively known at time $t-1$.  If
so, then $p$ can become not negatively known at time $t$ only due to
some $\poutcome\in\poutcomeset(e)$, $e \in \effs(a)$, such that
$\pre(a)$ is known at time $t-1$, and $\con(e)$ is not negatively
known at time $t-1$. By our inductive assumption, the latter
conditions imply $\con(e) \in uP(t-1)\cup P(t-1)$, and $\pre(a) \in
P(t-1)$. But if so, then $p$ has to be added to $uP(t)\cup P(t)$ by
the first {\bf for} loop of the \buildts\ procedure, contradicting our
assumption that $p \not\in uP(t)\cup P(t)$.

Now, let us consider some $p$ known at time $t$.  By our inductive
assumption, $P(t-1)$ contains all the facts known at time $t-1$, and
thus $A(t-1)$ is the maximal subset of actions $\actions\onecondrel$
applicable at time $t-1$. Let us begin with an exhaustive
classification of the effects $e$ of the actions $A(t-1)$ with respect
to our $p$ at time $t$.
\begin{enumerate}[(I)]
\item \label{set2} 
      $\forall \poutcome\in\poutcomeset(e): p\in\add(\poutcome)$, and 
      $\con(e) \in P(t-1)$
\item \label{set4} 
      $\forall \poutcome\in\poutcomeset(e): p\in\add(\poutcome)$, and 
      $\con(e) \in uP(t-1)$
\item \label{set6}
      $\exists \poutcome\in\poutcomeset(e): p\not\in\add(\poutcome)$ or 
      $\con(e) \not\in P(t-1)\cup uP(t-1)$
\end{enumerate}
If the set (\ref{set2}) is not empty, then, by the construction of
\buildwil$(p(t),\mbox{\sl Imp})$, we have
\[
\{\poutcome(t-1) \mid \poutcome \in  \poutcomeset(e)\} \subseteq \mbox{\sl support}(p(t)),
\]
for each $e \in \mbox{(\ref{set2})}$. Likewise, by the construction of
\buildts\ (notably, by the update of $\Phi$), for each $e \in
\mbox{(\ref{set2})}$, we have
\[
\Phi \longrightarrow \bigvee_{\{\poutcome(t-1) \mid \poutcome \in  \poutcomeset(e)\}}\poutcome(t-1).
\]
Putting these two facts together, we have that Eq.~\ref{e:disjaux}
holds for $p$ at time $t$, and thus we have $p \in P(t)$.

Now, suppose that the set (\ref{set2}) is empty.  It is not hard to
verify that no subset of {\em only} effects (\ref{set6}) makes $p$
known at time $t$. Thus, the event ``at least one of the effects
(\ref{set4}) occurs'' must hold with probability $1$. First, by the
construction of \buildwil$(p(t),\mbox{\sl Imp})$, we have
\[
\mbox{\sl support}\left(p(t)\right) \supseteq \bigcup_{e\in\mbox{\scriptsize (\ref{set4})}}
{\mbox{\sl support}\left(\con(e)(t-1)\right)}
\]
Then, 
and~\ref{lemma:subweight-complete} from
Lemma~\ref{lemma:subweight-complete} we have that the event ``at least
one of the effects (\ref{set4}) occurs'' holds with probability $1$ if
and only if
\[
\Phi \longrightarrow \bigvee_{\substack{e\in\mbox{\scriptsize (\ref{set4})}\\ l\in \mbox{\scriptsize \sl support}\left(\con(e)(t-1)\right)}}\hspace{-0.6cm}l
\]
Putting these two facts together, we have that Eq.~\ref{e:disjaux}
holds for $p$ at time $t$, and thus we have $p \in P(t)$.
\end{proof}

\commentout{
\begin{lemma}
Let $(\actions,\initBN,\goal,\goalprob)$ be a probabilistic planning
task, $\seqactions$ be a sequence of actions applicable in
$\initprob$, and $\onecondrel$ be a relaxation function for $A$. For
each time step $t \geq -m$, and each proposition $p \in \fluents$, if
$P(t)$ is constructed by \buildprpg$(\seqactions, \actions,
\cinitial,\goal,\goalprob,\onecondrel)$, then $p$ can be achieved with
certainty (= is known) at time $t$ by a relaxed plan starting with
$\seqactions\onecondrel$ if and only if $p \in P(t)$.
\end{lemma}

\begin{proof}
The proof of the ``if'' direction is by a straightforward induction on
$t$.  For $t=-m$ the claim is immediate by the direct initialization
of $P(-m)$.  Assume that, for $-m \leq t' < t$, if $p\in P(t')$, then
$p$ is known at time $t'$, and consider some $p(t) \in P(t)$.

Notice that, for $t > -m$, we have $p(t) \in P(t)$ if and only if 
\begin{equation}
\label{e:disjaux}
\Phi \longrightarrow {\bigvee_{l \in \mbox{\scriptsize \sl support}(p(t))}{\hspace{-0.5cm} l}}\hspace{0.3cm}.
\end{equation}
Thus, for each world state $w$ consistent with $\initbelief$, we have
either $q \in w$ for some fact proposition $q(-m) \in \mbox{\sl
support}(p(t))$, or, for some effect $e$ of an action $a(t') \in
A(t')$, $t' < t$, we have $\con(e)\in P(t')$ and $\{\poutcome(t')\mid
\poutcome\in\poutcomeset(e)\} \subseteq \mbox{\sl support}(p(t))$.
In this first case, Lemma~\ref{lemma:subweight-complete} immediately
implies that the concatenation of $\seqactions\onecondrel$ with an
arbitrary linearization of the (relaxed) actions $A(0),\dots,A(t-1)$
achieves $p$ at $t$ with probability $1$, and thus $p$ is known at
time $t$. In the second case, our inductive assumption implies that
$\con(e)$ is known at time $t$, and together with
Lemma~\ref{lemma:subweight-complete} this again implies that the
concatenation of $\seqactions\onecondrel$ with an arbitrary
linearization of the (relaxed) actions $A(0),\dots,A(t-1)$ achieves
$p$ at $t$ with probability $1$.

The proof of the ``only if'' direction is by induction on $t$ as well.
For $t=-m$ this claim is again immediate by the direct initialization
of $P(-m)$. Assume that, for $-m \leq t' < t$, if $p$ is known at time
$t'$, then $p\in P(t')$, and consider some $p$ known at time $t$.

By our inductive assumption, $P(t-1)$ contains all the facts known at
time $t-1$, and thus $A(t-1)$ is the maximal subset of actions
$\actions\onecondrel$ applicable at time $t-1$. Let us begin with an
exhaustive classification of the effects $e$ of the actions $A(t-1)$
with respect to our $p$ at time $t$.
\begin{enumerate}[(I)]
\item \label{set2} 
      $\forall \poutcome\in\poutcomeset(e): p\in\add(\poutcome)$, and 
      $\con(e) \in P(t-1)$
\item \label{set4} 
      $\forall \poutcome\in\poutcomeset(e): p\in\add(\poutcome)$, and 
      $\con(e) \in uP(t-1)$
\item \label{set6}
      $\exists \poutcome\in\poutcomeset(e): p\not\in\add(\poutcome)$ or 
      $\con(e) \not\in P(t-1)\cup uP(t-1)$
\end{enumerate}
If the set (\ref{set2}) is not empty, then, by the construction of
\buildwil$(p(t),\mbox{\sl Imp})$, we have
\[
\{\poutcome(t-1) \mid \poutcome \in  \poutcomeset(e)\} \subseteq \mbox{\sl support}(p(t)),
\]
for each $e \in \mbox{(\ref{set2})}$. Likewise, by the construction of
\buildts\ (notably, by the update of $\Phi$), for each $e \in
\mbox{(\ref{set2})}$, we have
\[
\Phi \longrightarrow \bigvee_{\{\poutcome(t-1) \mid \poutcome \in  \poutcomeset(e)\}}\poutcome(t-1).
\]
Putting these two facts together, we have that Eq.~\ref{e:disjaux}
holds for $p$ at time $t$, and thus we have $p \in P(t)$.

Now, suppose that the set (\ref{set2}) is empty.  It is not hard to
verify that no subset of {\em only} effects (\ref{set6}) makes $p$
known at time $t$. Thus, the event ``at least one of the effects
(\ref{set4}) occurs'' must hold with probability $1$. First, by the
construction of \buildwil$(p(t),\mbox{\sl Imp})$, we have
\[
\mbox{\sl support}\left(p(t)\right) \supseteq \bigcup_{e\in\mbox{\scriptsize (\ref{set4})}}
{\mbox{\sl support}\left(\con(e)(t-1)\right)}
\]
Then, from Lemma~\ref{lemma:subweight-complete} we have that the event
``at least one of the effects (\ref{set4}) occurs'' holds with
probability $1$ if and only if
\[
\Phi \longrightarrow \bigvee_{\substack{e\in\mbox{\scriptsize (\ref{set4})}\\ l\in \mbox{\scriptsize \sl support}\left(\con(e)(t-1)\right)}}\hspace{-0.6cm}l
\]
Putting these two facts together, we have that Eq.~\ref{e:disjaux}
holds for $p$ at time $t$, and thus we have $p \in P(t)$.
\end{proof}

} 

\begin{theorem}
\label{t:completeA}
Let $(\actions,\initBN,\goal,\goalprob)$ be a probabilistic planning
task, $\seqactions$ be a sequence of actions applicable in
$\initprob$, and $\onecondrel$ be a relaxation function for $A$. If
\buildprpg$(\seqactions, \actions, \cinitial,\goal,
\goalprob,\onecondrel)$ returns FALSE, 
then there is no relaxed plan for
$(\actions,\initbelief,\goal,\goalprob)$ that starts with $\seqactions\onecondrel$.
\end{theorem}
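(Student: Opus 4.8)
The plan is to read a FALSE return as a genuine fixpoint of the construction and then show this fixpoint caps the \emph{true} relaxed goal probability below $\goalprob$ forever. Concretely, \buildprpg\ returns FALSE at some iteration $t=\tau$ of the {\bf while} loop at which all four negative-termination tests hold: $P(\tau{+}1)=P(\tau)$, $uP(\tau{+}1)=uP(\tau)$, $uP(-m)\cap\support(p(\tau{+}1))=uP(-m)\cap\support(p(\tau))$ for every $p\in uP(\tau{+}1)$, and \getp$(\tau{+}1,\goal)=$\getp$(\tau,\goal)$; the loop guard additionally gives \getp$(\tau,\goal)<\goalprob$. I would first record two monotonicity facts. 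As layers are appended, the estimate \getp$(t,\goal)$ is non-decreasing (extra paths can only enlarge each $\support(\cdot)$ and raise each dynamic weight $\subweight_{g(t)}(\cdot)$), and so is the true probability $R(t)$ of achieving $\goal$ by the \emph{maximal} relaxed execution that fires every applicable $a\onecondrel$ at every layer up to $t$ (relaxed actions delete nothing, so no achievement probability can ever drop). Because deletes-free relaxation is monotone, this maximal execution dominates every relaxed plan $\seqactions\onecondrel\cdot\bar b$, so a relaxed plan attaining $\goalprob$ exists if and only if $\sup_t R(t)\ge\goalprob$.

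The first real step is \textbf{persistence}: by induction on $t'\ge\tau$ the four fixpoint quantities stay frozen at their value at $\tau$. Since $P(\tau{+}1)=P(\tau)$ the applicable set is unchanged, $A(\tau{+}1)=A(\tau)$, and because $P$ and $uP$ are unchanged every effect carries the same known/unknown condition status as one layer earlier. Hence \buildts\ adds exactly the same facts to $uP$, and — invoking Lemma~\ref{lemma:disjunction} — a fact moves into $P$ at $\tau{+}2$ precisely when $\Phi\rightarrow\bigvee_{l\in\support(p(\tau{+}2))}l$ holds, which, by the inductive freezing of the initial-state projection $uP(-m)\cap\support$ and by the ``exactly one'' clauses \buildts\ has already written into $\Phi$ for the known-condition outcomes, yields the same verdict as at $\tau{+}1$. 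The only genuinely new material at later layers are chance nodes of known-condition effects feeding already-possible facts; as the construction notes, these never enter the initial-state projection and, being forced by $\Phi$, never change \getp. Thus $P(t')=P(\tau)$, $uP(t')=uP(\tau)$, the projections are constant, and \getp$(t',\goal)=$\getp$(\tau,\goal)<\goalprob$ for every $t'\ge\tau$.

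The second step turns the frozen estimate into a frozen truth. The ingredient I would isolate is that the independence-based propagation makes \getp\ an over-estimate of $R$, i.e.\ \getp$(t,\goal)\ge R(t)$ at every layer. I would prove this by a backward induction mirroring Lemma~\ref{lemma:subweight-complete}: each dynamic weight $\subweight_{g(t)}(v)$ dominates the true probability of reaching $g$ from $v$, since the failure product $\prod[1-\subweight(\cdot)]$ under-states true failure whenever the combined effect subgraphs are positively correlated, and this domination lifts through the model count, which evaluates $\cinitial$ (the exact initial distribution) conjoined with the per-goal theories $\varphi_g$. Granting \getp$(t,\goal)\ge R(t)$, persistence gives $R(t')\le$\getp$(t',\goal)=$\getp$(\tau,\goal)<\goalprob$ for all $t'\ge\tau$, while monotonicity gives $R(t)\le R(\tau)<\goalprob$ for $t\le\tau$; hence $\sup_t R(t)<\goalprob$ and no relaxed plan can reach the threshold. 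This is exactly the paper's requirement ``if the real weight still grows, then the estimated weight still grows'': it is the contrapositive packaging of $R\le$\getp\ together with estimate-persistence.

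I expect the main obstacle to be precisely this coupling between \getp\ and $R$. Persistence is essentially bookkeeping once Lemma~\ref{lemma:disjunction} and the $\Phi$-clause treatment of known-condition outcomes are in place. The delicate point is the domination \getp$(\tau,\goal)\ge R(\tau)$ at the saturated layer, because the independence assumption cuts both ways: the disjunctive (``at least one effect fires'') combination in \buildwil\ over-states success, which helps, but the per-goal combination in \getp\ treats a conjunctive goal as independent, which can \emph{under}-state a positively correlated conjunction and thereby push the estimate below $R$. Reconciling these two directions — showing that, net, the over-estimation survives at the fixpoint, or else replacing the clean inequality $R\le$\getp\ by the weaker growth property and arguing $R(\tau)<\goalprob$ from it directly — is the crux of the argument, and the step I would spend the most care making airtight.
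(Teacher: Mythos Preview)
Your persistence argument is essentially the paper's first step and is fine. The gap is in your second step: you try to prove a global domination \getp$(t,\goal)\ge R(t)$, but this is \emph{false} in general, and the paper says so explicitly in the main text (``it may also happen that \getp\ returns a value lower than the real goal probability''). You correctly spot the danger yourself---the per-goal conjunction under independence can under-state a positively correlated goal conjunction---but then leave it as ``the crux to make airtight.'' It cannot be made airtight, because the inequality is simply not true.

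The paper's proof does \emph{not} rely on \getp\ dominating $R$. Instead it uses a dichotomy at the fixpoint layer $t$. Case~(1): if for every $g\in\goal\cap uP(t)$ the implications in $\mbox{\sl Imp}_{\rightarrow g(t)}$ are induced only by \emph{deterministic} outcomes, then all remaining uncertainty about the unknown goals comes solely from the initial belief, which \getp\ treats exactly via $\cinitial$; in this situation \getp$(t,\goal)$ is a genuine upper bound on $R(t)$, and persistence then gives $R(t')<\goalprob$ for all $t'$. Case~(2): if some truly probabilistic outcome contributes to $\mbox{\sl Imp}_{\rightarrow g(t)}$ for some $g$, then replaying $A(t)$ at layer $t{+}1$ adds another independent copy of that outcome into the weight propagation, which under the independence rule \emph{strictly} increases the relevant leaf weight and hence strictly increases \getp; but this contradicts the fixpoint sub-condition \getp$(t{+}1,\goal)=$\getp$(t,\goal)$, so Case~(2) cannot occur at a FALSE return. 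This is the precise content of the paper's remark ``if the real weight still grows, then the estimated weight still grows'': it is not $R\le$\getp, but rather that \getp\ cannot stabilize while probabilistic effects are still feeding an unknown goal. Your proposal misses this dichotomy, which is the idea that actually closes the argument.
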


\begin{proof}
Let $t > 0$ be the last layer of the PRPG upon the termination of
\buildprpg. For every $-m \leq t' \leq t$, by the construction of PRPG and Lemma~\ref{lemma:disjunction}, the sets $P(t')$ and
$uP(t')$ contain all (and only all) propositions that are known
(respectively unknown) after executing all the actions in the action
layers up to and including $A(t'-1)$.
%

First, let us show that if \buildprpg\ returns FALSE, then the
corresponding termination criterion would hold in all future
iterations. If $P(t+1)=P(t)$, then we have
$A(t+1)=A(t)$. Subsequently, since $P(t+1)\cup uP(t+1) = P(t)\cup
uP(t)$ and $A(t+1)=A(t)$, we have $P(t+2)\cup uP(t+2) = P(t+1)\cup
uP(t+1)$. Given that, we now show that $P(t+2) = P(t+1)$ and $uP(t+2)
= uP(t+1)$.

Assume to the contrary that there exists $p(t+2) \in P(t+2)$ such that
$p(t+1) \not\in P(t+1)$, that is
$p(t+1) \in uP(t+1)$. By the construction of the sets $P(t+1)$ and $P(t+2)$ in the \buildts\ procedure, we have
\begin{equation}
\label{e:comp1}
\begin{split}
\Phi &\longrightarrow {\bigvee_{l \in \mbox{\scriptsize \sl support}(p(t+2))}{\hspace{-0.5cm} l}}\hspace{0.3cm},\\
\Phi &\not\longrightarrow {\bigvee_{l \in \mbox{\scriptsize \sl support}(p(t+1))}{\hspace{-0.5cm} l}}\hspace{0.3cm}
\end{split}
\end{equation}
Consider an exhaustive classification of the effects $e$ of the
actions $A(t+1)$ with respect to our $p$ at time $t+2$.
\begin{enumerate}[(I)]
\item \label{type1} 
      $\forall \poutcome\in\poutcomeset(e): p\in\add(\poutcome)$, and 
      $\con(e) \in P(t+1)$
\item \label{type2} 
      $\forall \poutcome\in\poutcomeset(e): p\in\add(\poutcome)$, and 
      $\con(e) \in uP(t+1)$
\item \label{type3}
      $\exists \poutcome\in\poutcomeset(e): 
      p\not\in\add(\poutcome)$ or 
      $\con(e) \not\in P(t+1)\cup uP(t+1)$
\end{enumerate}
Suppose that the set (\ref{type1}) is not empty, and let $e \in
\mbox{(\ref{type1})}$. From $P(t) = P(t+1)$ we have that $\con(e) \in
P(t)$, and thus $\{\poutcome(t) \mid \poutcome \in \poutcomeset(e)\}
\subseteq \mbox{\sl support}(p(t+1))$. By the update of $\Phi$ in
\buildts\ we then have $\Phi \longrightarrow \bigvee_{\{\poutcome(t)
\mid \poutcome \in \poutcomeset(e)\}}\poutcome(t)$, and thus $\Phi
\longrightarrow {\bigvee_{l \in \mbox{\scriptsize \sl
support}(p(t+1))}{l}}$, contradicting Eq.~\ref{e:comp1}.

Alternatively, assume that the set (\ref{type1}) is empty.  Using the
arguments similar to these in the proof of
Lemma~\ref{lemma:disjunction}, $p(t+2)\in P(t+2)$ and $p(t+1) \not\in
P(t+1)$ in this case imply that \begin{equation} \label{e:comp2}
\begin{split}
\Phi & \longrightarrow \bigvee_{\substack{e\in\mbox{\scriptsize (\ref{type2})}\\ l\in \mbox{\scriptsize \sl support}\left(\con(e)(t+1)\right)}}\hspace{-0.6cm}l\\
\Phi & \not\longrightarrow \bigvee_{\substack{e\in\mbox{\scriptsize (\ref{type2})}\\ l\in \mbox{\scriptsize \sl support}\left(\con(e)(t)\right)}}\hspace{-0.6cm}l
\end{split}
\end{equation}
However, $A(t+1) = A(t)$, $uP(t+1) = uP(t)$, and $P(t+1) = P(t)$
together imply that all the action effects that can possibly take
place at time $t+1$ are also feasible to take place at time
$t$. Therefore, since for each $e \in \mbox{(\ref{type2})}$ we have
$\con(e) \in uP(t+1)$ by the definition of (\ref{type2}),
Eq.~\ref{e:comp2} implies that
\begin{equation}
\label{e:comp22}
\bigcup_{e\in\mbox{\scriptsize (\ref{type2})}}{\support\left(\con(e)(t+1)\right)}\cap uP(-m) \neq 
\bigcup_{e\in\mbox{\scriptsize (\ref{type2})}}{\support\left(\con(e)(t)\right)} \cap uP(-m),
\end{equation}
contradicting our termination condition. Hence, we arrived into
contradiction with our assumption that $p(t+1) \not\in P(t+1)$.

Having shown that $P(t+2)=P(t+1)$ and $uP(t+2)=uP(t+1)$, we now show
that the termination criteria implies that, for each $q(t+2) \in
uP(t+2)$, we have
\[
uP(-m)\cap\support(p(t+2)) = uP(-m)\cap\support(p(t+1)).
\]
Let $E_{p(t+2)}$ be the set of all effects of actions $A(t+1)$ such
that $\con(e) \in uP(t+1)$, and, for each outcome $\poutcome \in
\poutcomeset(e)$, we have $p\in\add(\poutcome)$. Given that, we have
\begin{equation}
\label{e:fsupport}
\begin{split}
uP(-m)\cap\support(p(t+2)) & =
    uP(-m)\cap\bigcup_{e\in E_{p(t+2)}}{\support(\con(e)(t+1))}\\
    &= uP(-m)\cap\bigcup_{e\in E_{p(t+2)}}{\support(\con(e)(t))}\\
    &= uP(-m)\cap\support(p(t+1))
\end{split},
\end{equation}
where the first and third equalities are by the definition of
$\support$ sets via Lemma~\ref{lemma:subweight-complete}, and the
second equation is by our termination condition.

The last things that remains to be shown is that our termination
criteria implies \getp$(t+2,G) = $\getp$(t+1,G)$. Considering the
simple cases first, if $\goal \not\subseteq P(t+1)\cup uP(t+1)$, from
$P(t+2)\cup uP(t+2) = P(t+1)\cup uP(t+1)$ we have \getp$(t+2,G) =
$\getp$(t+1,G) = 0$. Otherwise, if $\goal \subseteq P(t+1)$, from
$P(t+2) = P(t+1)$ we have \getp$(t+2,G) = $\getp$(t+1,G) = 1$.

This leaves us with the case of $\goal \subseteq P(t+1)\cup uP(t+1)$ and $\goal\cap uP(t+1) \neq \emptyset$. 
From $P(t+2) = P(t+1)$, $uP(t+2) = uP(t+1)$, and the termination condition, we have 
\[
\goal\cap uP(t) = \goal\cap uP(t+1) = \goal\cap uP(t+2).
\] 
From \getp$(t+1,G) = $\getp$(t,G)$ we know that action effects that
become feasible only in $A(t)$ do not increase our estimate of
probability of achieving any $g \in \goal\cap uP(t+1)$ from time $t$
to time $t+1$. However, from $P(t+1) = P(t)$, $uP(t+1) = uP(t)$, and
$A(t+1) = A(t)$, we have that no action effect will become feasible at
time $t+1$ if it is not already feasible at time $t$, and thus
\getp$(t+1,G) = $\getp$(t,G)$ will imply \getp$(t+2,G) =
$\getp$(t+1,G)$.

To this point we have shown that if \buildprpg\ returns FALSE, then
the corresponding termination criterion would hold in all future
iterations. Now, assume to the contrary to the claim of the theorem
that \buildprpg\ returns FALSE at some iteration $t$, yet there exists
a relaxed plan for $(\actions,\initbelief,\goal,\goalprob)$ that
starts with $\seqactions\onecondrel$.
First, if $\goalprob = 1$, then Lemma~\ref{lemma:disjunction} implies
that there exists time $T$ such that $\goal \subseteq P(T)$. If so,
then the persistence of our ``negative'' termination condition implies
$\goal \subseteq P(t)$. However, in this case we would have
\getp$(t,G) = 1$ (see the second {\bf if} of the \getp\ procedure),
and thus \buildprpg\ would return TRUE before ever getting to check
the ``negative'' termination condition in iteration $t$.
Alternatively, if $\goalprob = 0$, then \buildprpg\ would have
terminated with returning TRUE before the ``negative'' termination
condition is checked even once.

This leaves us with the case of $0 < \goalprob < 1$ and \getp$(t,G) <
\theta$. (\getp$(t,G) \geq \theta$ will again contradict reaching the
negative termination condition at iteration $t$.) We can also assume
that $\goal \subseteq P(t) \cup uP(t)$ because $P(t)\cup uP(t)$
contains all the facts that are not negatively known at time $t$, and
thus persistence of the negative termination condition together with
$\goal \not\subseteq P(t) \cup uP(t)$ would imply that there is no
relaxed plan for any $\goalprob > 0$. Let us consider the sub-goals
$\goal \cap uP(t) \neq \emptyset$.
\begin{enumerate}[(1)]
\item If for all subgoals $g \in \goal \cap uP(t)$, the implications in $\mbox{\sl Imp}_{\rightarrow g(t)}$ are {\em only} due to deterministic outcomes of the effects $\grapheffects{\mbox{\sl Imp}_{\rightarrow g(t)}}$, then the uncertainty about achieving $\goal \cap uP(t)$ at time $t$ is {\em only} due to the uncertainty about the initial state. Since the initial belief state is reasoned about with no relaxation, in this case \getp$(t,G) = {\sf WMC}(\Phi \wedge \bigwedge_{g \in G \setminus P(t)} \varphi_{g})$ provides us with an {\em upper bound} on the probability of achieving our goal $\goal$ by $\seqactions\onecondrel$ concatenated with an arbitrary linearization of an arbitrary subset of $A(0),\dots,A(t-1)$. The termination sub-condition \getp$(t+1,G) = $\getp$(t,G)$ and the persistence of the action sets $A(T)$, $T \geq t$, imply then that \getp$(t,G)$ provides us with an upper bound on the probability of achieving $\goal$ by $\seqactions\onecondrel$ concatenated with an arbitrary linearization of an arbitrary subset of $A(0),\dots,A(T)$, for all $T \geq t$. Together with \getp$(t,G) < \theta$, the latter conclusion contradicts our assumption that a desired relaxed plan exists.
\item If there exists a subgoal $g \in \goal \cap uP(t)$ such that some implications in $\mbox{\sl Imp}_{\rightarrow g(t)}$ are due to truly probabilistic outcomes of the effects $\grapheffects{\mbox{\sl Imp}_{\rightarrow g(t)}}$, then repeating the (relaxed) actions $A(t)$ in $A(t+1)$ will {\em necessarily} result in ${\sf WMC}(\Phi \wedge \bigwedge_{g \in G \setminus P(t+1)} \varphi_{g}) > {\sf WMC}(\Phi \wedge \bigwedge_{g \in G \setminus P(t)} \varphi_{g})$, contradicting our termination sub-condition condition \getp$(t+1,G) = $\getp$(t,G)$.
\end{enumerate}
Hence, we arrived into contradiction that our assumption that
\buildprpg\ returns FALSE at time $t$, yet there exists a relaxed plan for $(\actions,\initbelief,\goal,\goalprob)$ that starts with $\seqactions\onecondrel$.
\end{proof}


\bibliography{pffbib}
\bibliographystyle{theapa}

\end{document}